\def\input@path{{/Users/alexsherstinsky/Documents/RNNAndLSTMFundamentals/Elsevier/ElsevierPhysicaDNonlinearPhenomenaJournalSpecialIssueOnMachineLearningAndDynamicalSystems2020/Revision/arXiv07312023/}}
\theoremstyle{definition}
\newtheorem{example}{\protect\examplename}
\theoremstyle{plain}
\newtheorem{prop}{\protect\propositionname}
  \newenvironment{proof}[1][\proofname]{\par
    \normalfont\topsep6\p@\@plus6\p@\relax
    \trivlist
    \itemindent\parindent
    \item[\hskip\labelsep
          \scshape
      #1]\ignorespaces
  }{%
    \endtrivlist\@endpefalse
  }
  \providecommand{\proofname}{Proof}
\theoremstyle{remark}
\newtheorem{rem}{\protect\remarkname}
\providecommand{\examplename}{Example}
\providecommand{\propositionname}{Proposition}
\providecommand{\remarkname}{Remark}
\begin{document}
\title{%
\begin{minipage}[t]{1\columnwidth}%
\texttt{\small{}Published in the Elsevier journal \textquotedblleft Physica
D: Nonlinear Phenomena\textquotedblright , Volume 404, March 2020:}\vspace{-5mm}
\texttt{\small{}\\Special Issue on Machine Learning and Dynamical
Systems}\vspace{5mm}
\\Fundamentals of Recurrent Neural Network (RNN) and Long Short-Term
Memory (LSTM) Network\vspace{-5mm}
\end{minipage}}
\author{\IEEEauthorblockN{Alex~Sherstinsky}\IEEEauthorblockA{shers@alum.mit.edu\\
https://www.linkedin.com/in/alexsherstinsky}}
\IEEEspecialpapernotice{}
\IEEEaftertitletext{}
\maketitle
\begin{abstract}
Because of their effectiveness in broad practical applications, LSTM
networks have received a wealth of coverage in scientific journals,
technical blogs, and implementation guides. However, in most articles,
the inference formulas for the LSTM network and its parent, RNN, are
stated axiomatically, while the training formulas are omitted altogether.
In addition, the technique of ``unrolling'' an RNN is routinely
presented without justification throughout the literature. The goal
of this tutorial is to explain the essential RNN and LSTM fundamentals
in a single document. Drawing from concepts in Signal Processing,
we formally derive the canonical RNN formulation from differential
equations. We then propose and prove a precise statement, which yields
the RNN unrolling technique. We also review the difficulties with
training the standard RNN and address them by transforming the RNN
into the ``Vanilla LSTM''\footnote{The nickname ``Vanilla LSTM'' symbolizes this model's flexibility
and generality \citep{Greff2015LSTMAS}.} network through a series of logical arguments. We provide all equations
pertaining to the LSTM system together with detailed descriptions
of its constituent entities. Albeit unconventional, our choice of
notation and the method for presenting the LSTM system emphasizes
ease of understanding. As part of the analysis, we identify new opportunities
to enrich the LSTM system and incorporate these extensions into the
Vanilla LSTM network, producing the most general LSTM variant to date.
The target reader has already been exposed to RNNs and LSTM networks
through numerous available resources and is open to an alternative
pedagogical approach. A Machine Learning practitioner seeking guidance
for implementing our new augmented LSTM model in software for experimentation
and research will find the insights and derivations in this treatise
valuable as well.
\end{abstract}

\IEEEpeerreviewmaketitle{\thispagestyle{plain} \cfoot{\thepage}}

\section{Introduction}

Since the original 1997 LSTM paper \citep{hochreiter1997long}, numerous
theoretical and experimental works have been published on the subject
of this type of an RNN, many of them reporting on the astounding results
achieved across a wide variety of application domains where data is
sequential. The impact of the LSTM network has been notable in language
modeling, speech-to-text transcription, machine translation, and other
applications \citep{journals/entropy/LinT17}. Inspired by the impressive
benchmarks reported in the literature, some readers in academic and
industrial settings decide to learn about the Long Short-Term Memory
network (henceforth, ``the LSTM network'') in order to gauge its
applicability to their own research or practical use-case. All major
open source machine learning frameworks offer efficient, production-ready
implementations of a number of RNN and LSTM network architectures.
Naturally, some practitioners, even if new to the RNN/LSTM systems,
take advantage of this access and cost-effectiveness and proceed straight
to development and experimentation. Others seek to understand every
aspect of the operation of this elegant and effective system in greater
depth. The advantage of this lengthier path is that it affords an
opportunity to build a certain degree of intuition that can prove
beneficial during all phases of the process of incorporating an open
source module to suit the needs of their research effort or a business
application, preparing the dataset, troubleshooting, and tuning.

In a common scenario, this undertaking balloons into reading numerous
papers, blog posts, and implementation guides in search of an ``A
through Z'' understanding of the key principles and functions of
the system, only to find out that, unfortunately, most of the resources
leave one or more of the key questions about the basics unanswered.
For example, the Recurrent Neural Network (RNN), which is the general
class of a neural network that is the predecessor to and includes
the LSTM network as a special case, is routinely simply stated without
precedent, and unrolling is presented without justification. Moreover,
the training equations are often omitted altogether, leaving the reader
puzzled and searching for more resources, while having to reconcile
disparate notation used therein. Even the most oft-cited and celebrated
primers to date have fallen short of providing a comprehensive introduction.
The combination of descriptions and colorful diagrams alone is not
actionable, if the architecture description is incomplete, or if important
components and formulas are absent, or if certain core concepts are
left unexplained.

As of the timeframe of this writing, a single self-contained primer
that provides a clear and concise explanation of the Vanilla LSTM
computational cell with well-labeled and logically composed schematics
that go hand-in-hand with the formulas is still lacking. The present
work is motivated by the conviction that a unifying reference, conveying
the basic theory underlying the RNN and the LSTM network, will benefit
the Machine Learning (ML) community.

The present article is an attempt to fill in this gap, aiming to serve
as the introductory text that the future students and practitioners
of RNN and LSTM network can rely upon for learning all the basics
pertaining to this rich system. With the emphasis on using a consistent
and meaningful notation to explain the facts and the fundamentals
(while removing mystery and dispelling the myths), this backgrounder
is for those inquisitive researchers and practitioners who not only
want to know ``how'', but also to understand ``why''. 

We focus on the RNN first, because the LSTM network is a type of an
RNN, and since the RNN is a simpler system, the intuition gained by
analyzing the RNN applies to the LSTM network as well. Importantly,
the canonical RNN equations, which we derive from differential equations,
serve as the starting model that stipulates a perspicuous logical
path toward ultimately arriving at the LSTM system architecture.

The reason for taking the path of deriving the canonical RNN equations
from differential equations is that even though RNNs are expressed
as difference equations, differential equations have been indispensable
for modeling neural networks and continue making a profound impact
on solving practical data processing tasks with machine learning methods.
On one hand, leveraging the established mathematical theories from
differential equations in the continuous-time domain has historically
led to a better understanding of the evolution of the related difference
equations, since the difference equations are obtained from the corresponding
original differential equations through discretization of the differential
operators acting on the underlying functions \citep{sanger:optimal1989,scd/mit/Sherstinsky1994,journals/corr/LiaoP16,journals/statistical_learning/HaberR17pub.1092697518,conf/icml/LuZLD18,journals/corr/abs-1804-04272,Sherstinsky2018fundamentals,conf/neurips2018/Sherstinsky-NeurIPS2018-CRACT-3}.
On the other hand, considering the existing deep neurally-inspired
architectures as the numerical methods for solving their respective
differential equations aided by the recent advances in memory-efficient
implementations has helped to successfully stabilize very large models
at lower computational costs compared to their original versions \citep{conf/nips/CicconeGMOG18,conf/aaai/ChangMHRBH18,chang2018antisymmetricrnn}.
Moreover, differential equations defined on the continuous time domain
are a more natural fit for modeling certain real-life scenarios than
the difference equations defined over the domain of evenly-discretized
time intervals \citep{conf/nips/ChenRBD18,rubanova2019latent}.

Our primary aspiration for this document, particularly for the sections
devoted to the Vanilla LSTM system and its extensions, is to fulfill
all of the following requirements:

\vspace{-\topsep}
\begin{enumerate}
\item Intuitive -- the notation and semantics of variables must be descriptive,
explicitly and unambiguously mapping to their respective purposes
in the system.
\item Complete -- the explanations and derivations must include both the
inference equations (``forward pass'' or ``normal operation'')
and the training equations (``backward pass''), and account for
all components of the system.
\item General -- the treatment must concern the most inclusive form of
the LSTM system (i.e., the ``Vanilla LSTM''), specifically including
the influence of the cell's state on control nodes (``pinhole connections'').
\item Illustrative -- the description must include a complete and clearly
labeled cell diagram as well as the sequence diagram, leaving nothing
to imagination or guessing (i.e., the imperative is: strive to minimize
cognitive strain, do not leave anything as an ``exercise for the
reader'' -- everything should be explained and made explicit).
\item Modular -- the system must be described in such a way that the LSTM
cell can be readily included as part of a pluggable architecture,
both horizontally (``deep sequence'') and vertically (``deep representation'').
\item Vector notation -- the equations should be expressed in the matrix
and vector form; it should be straightforward to plug the equations
into a matrix software library (such as $\mathtt{numpy}$) as written,
instead of having to iterate through indices.
\end{enumerate}
In all sources to date, one or more of the elements in the above list
is not addressed\footnote{An article co-authored by one of the LSTM inventors provides a self-contained
summary of the embodiment of an RNN, though not at an introductory
level \citep{Greff2015LSTMAS}.} \citep{graves05nn,graves:05ijcnn,phd/de/Graves2008,conf/icml/SutskeverMH11,sundermeyer2012neural,journals/corr/Graves13,sutskever2014sequence,conf/interspeech/SakSB14,DBLP:journals/corr/Lipton15,AndrejKarpathyUnreasonableEffectivenessRNN2015,ChristopherOlahUnderstandingLSTM2015,journals/corr/PalangiDSGHCSW15,journals/corr/KannanKRKTMCLGY16,journals/corr/ZhouCWLX16,paulrenvoise2017rnn,EdwinChenExploringLSTMs2017,ArunMallyaLSTMForwardBackwardPass2017,ArunMallyaLSTMIntroduction2017,ArunMallyaLSTMVariants2017,VarunaJayasiriNumpyLSTM2017,journals/corr/abs-1801-01078}.
Hence, to serve as a comprehensive introduction, the present tutorial
captures all the essential details. The practice of using a succinct
vector notation and meaningful variable names as well as including
the intermediate steps in formulas is designed to build intuition
and make derivations easy to follow.

The rest of this document is organized as follows. Section \ref{sec:The-Roots-of-RNN}
gives a principled background behind RNN systems. Then\\ Section
\ref{sec:RNN-Unfolding/Unrolling} formally arrives at RNN unrolling
by proving a precise statement concerning approximating long sequences
by a series of shorter, independent sub-sequences (segments). Section
\ref{sec:RNN-Training-Difficulties} presents the RNN training mechanism
based on the technique, known as ``Back Propagation Through Time'',
and explores the numerical difficulties, which occur when training
on long sequences. To remedy these problems, Section \ref{sec:From-RNN-to-Vanilla-LSTM-Network}
methodically constructs the Vanilla LSTM cell from the canonical RNN
system (derived in Section \ref{sec:The-Roots-of-RNN}) by reasoning
through the ways of making RNN more robust. Section \ref{sec:The-Vanilla-LSTM-Network-Mechanism-in-Detail}
provides a detailed explanation of all aspects of the Vanilla LSTM
cell. Even though this section is intended to be self-contained, familiarity
with the material covered in the preceding sections will be beneficial.
The Augmented LSTM system, which embellishes the Vanilla LSTM system
with the new computational components, identified as part of the exercise
of transforming the RNN to the LSTM network, is presented in Section
\ref{sec:Extensions-to-the-Vanilla-LSTM-Network}. Section \ref{sec:Conclusions-and-Future-Work}
summarizes the covered topics and proposes future projects.

\section{The Roots of RNN\label{sec:The-Roots-of-RNN}}

In this section, we will derive the Recurrent Neural Network (RNN)
from differential equations \citep{Sherstinsky2018fundamentals,conf/neurips2018/Sherstinsky-NeurIPS2018-CRACT-3}.
Let $\vec{s}(t)$ be the value of the $d$-dimensional state signal
vector and consider the general nonlinear first-order non-homogeneous
ordinary differential equation, which describes the evolution of the
state signal as a function of time, $t$:
\begin{align}
\frac{d\vec{s}(t)}{dt} & =\vec{f}(t)+\vec{\phi}\label{eq:General1stOrderNonlinearNonhomogeneousDiffEqSampledStateDomain}
\end{align}
where $\vec{f}(t)$ is a $d$-dimensional vector-valued function of
time, $t\in\mathbb{R}^{+}$, and $\vec{\phi}$ is a constant $d$-dimensional
vector.

One canonical form of $\vec{f}(t)$ is:
\begin{align}
\vec{f}(t) & =\vec{h}\left(\vec{s}(t),\vec{x}(t)\right)\label{eq:CanonicalAutonomousNonlinearNonhomogeneousSampledStateDomainStateRate}
\end{align}
where $\vec{x}(t)$ is the $d$-dimensional input signal vector and
$\vec{h}\left(\vec{s}(t),\vec{x}(t)\right)$ is a vector-valued function
of vector-valued arguments. 

The resulting system,
\begin{align}
\frac{d\vec{s}(t)}{dt} & =\vec{h}\left(\vec{s}(t),\vec{x}(t)\right)+\vec{\phi}\label{eq:CanonicalAutonomous1stOrderNonlinearNonhomogeneousDiffEqSampledStateDomain}
\end{align}
comes up in many situations in physics, chemistry, biology, and engineering
\citep{STR94a,conf/amcc/Wang17a}. 

In certain cases, one starts with $s$ and $x$ as entirely ``analog''
quantities (i.e., functions not only of time, $t$, but also of another
independent continuous variable, $\vec{\xi},$ denoting the coordinates
in multi-dimensional space). Using this notation, the intensity of
an input video signal displayed on a flat $2$-dimensional screen
would be represented as $x(\vec{\xi},t)$ with $\vec{\xi}\in\mathbb{R}^{2}$.
Sampling $x(\vec{\xi},t)$ on a uniform $2$-dimensional grid converts
this signal to the representation $x(\vec{i},t)$, where $\vec{i}$
is now a discrete $2$-dimensional index. Finally, assembling the
values of $x(\vec{i},t)$ for all permutations of the components of
the index, $\vec{i}$, into a column vector, produces $\vec{x}(t)$
as originally presented in Equation \ref{eq:CanonicalAutonomous1stOrderNonlinearNonhomogeneousDiffEqSampledStateDomain}
above.

One special case of $\vec{f}(t)$ in Equation \ref{eq:CanonicalAutonomousNonlinearNonhomogeneousSampledStateDomainStateRate}
is:
\begin{align}
\vec{f}(t) & =\vec{a}(t)+\vec{b}(t)+\vec{c}(t)\label{eq:CanonicalMLThreeTermsAnalogWarpedInputAdditiveModel}
\end{align}
whose constituent terms, $\vec{a}(t)$, $\vec{b}(t)$, and $\vec{c}(t)$,
are $d$-dimensional vector-valued functions of time, $t$. Equation
\ref{eq:CanonicalMLThreeTermsAnalogWarpedInputAdditiveModel} is called
the ``Additive Model'' in Brain Dynamics research literature, because
it adds the terms, possibly nonlinear, that determine the rate of
change of neuronal activities, or potentials, $\vec{s}(t)$. As a
cornerstone of neural network research, the abstract form of the Additive
Model in Equation \ref{eq:CanonicalMLThreeTermsAnalogWarpedInputAdditiveModel}
has been particularized in many ways, including incorporating the
effects of delays, imposing ``shunting'' (or ``saturating'') bounds
on the state of the system, and other factors. Biologically motivated
uses of the Additive Model span computational analyses of vision,
decision making, reinforcement learning, sensory-motor control, short-term
and long-term memory, and the learning of temporal order in language
and speech \citep{Grossberg:2013}. It has also been noted that the
Additive Model generalizes the Hopfield model \citep{Hopfield84},
which, while rooted in biological plausibility, has been influential
in physics and engineering \citep{grossberg:historical1988,Grossberg:2013}.
In fact, a simplified and discretized form of the Additive Model played
a key role in linking the nonlinear dynamical systems governing morphogenesis,
one of the fundamental aspects of developmental biology, to a generalized
version of the Hopfield network \citep{Hopfield84}, and applying
it to an engineering problem in image processing \citep{scd/mit/Sherstinsky1994,journals/tip/SherstinskyP96}.

Consider a saturating Additive Model in Equation \ref{eq:CanonicalMLThreeTermsAnalogWarpedInputAdditiveModel}
with the three constituent terms, $\vec{a}(t)$, $\vec{b}(t)$, and
$\vec{c}(t)$, defined as follows:
\begin{align}
\vec{a}(t) & =\sum_{k=0}^{K_{s}-1}\vec{a}_{k}(\vec{s}(t-\tau_{s}(k)))\label{eq:CanonicalMLAnalogStateComponent}\\
\vec{b}(t) & =\sum_{k=0}^{K_{r}-1}\vec{b}_{k}(\vec{r}(t-\tau_{r}(k)))\label{eq:CanonicalMLWarpedStateComponent}\\
\vec{r}(t-\tau_{r}(k)) & =G\left(\vec{s}(t-\tau_{r}(k))\right)\label{eq:CanonicalMLWarpedStateActivation}\\
\vec{c}(t) & =\sum_{k=0}^{K_{x}-1}\vec{c}_{k}(\vec{x}(t-\tau_{x}(k)))\label{eq:CanonicalMLExternalDrivingForceInputComponent}
\end{align}
where $\vec{r}(t)$, the readout signal vector, is a warped version
of the state signal vector, $\vec{s}(t)$. A popular choice for the
element-wise nonlinear, saturating, and invertible ``warping'' (or
``activation'') function, $G(z)$, is an optionally scaled and/or
shifted form of the hyperbolic tangent. Then the resulting system,
obtained by substituting Equations \ref{eq:CanonicalMLAnalogStateComponent}
-- \ref{eq:CanonicalMLExternalDrivingForceInputComponent} into Equation
\ref{eq:CanonicalMLThreeTermsAnalogWarpedInputAdditiveModel} and
inserting into Equation \ref{eq:General1stOrderNonlinearNonhomogeneousDiffEqSampledStateDomain},
becomes:
\begin{align}
\frac{d\vec{s}(t)}{dt} & =\sum_{k=0}^{K_{s}-1}\vec{a}_{k}(\vec{s}(t-\tau_{s}(k)))+\sum_{k=0}^{K_{r}-1}\vec{b}_{k}(\vec{r}(t-\tau_{r}(k)))+\sum_{k=0}^{K_{x}-1}\vec{c}_{k}(\vec{x}(t-\tau_{x}(k)))+\vec{\phi}\label{eq:CanonicalML}\\
\vec{r}(t-\tau_{r}(k)) & =G\left(\vec{s}(t-\tau_{r}(k))\right)\label{eq:CanonicalMLWarpedState}
\end{align}
Equation \ref{eq:CanonicalML} is a nonlinear ordinary delay differential
equation (DDE) with discrete delays. Delay is a common feature of
many processes in biology, chemistry, mechanics, physics, ecology,
and physiology, among others, whereby the nature of the processes
dictates the use of delay equations as the only appropriate means
of modeling. In engineering, time delays often arise in feedback loops
involving sensors and actuators \citep{journals/vibcontrol/Kyrychko072010}.

Hence, the time rate of change of the state signal in Equation \ref{eq:CanonicalML}
depends on three main components plus the constant (``bias'') term,
$\vec{\phi}$. The first (``analog'') component, $\sum_{k=0}^{K_{s}-1}\vec{a}_{k}(\vec{s}(t-\tau_{s}(k)))$,
is the combination of up to $K_{s}$ time-shifted (by the delay time
constants, $\tau_{s}(k)$) functions, $\vec{a}_{k}(\vec{s}(t))$,
where the term ``analog'' underscores the fact that each $\vec{a}_{k}(\vec{s}(t))$
is a function of the (possibly time-shifted) state signal itself (i.e.,
not the readout signal, which is the warped version of the state signal).
The second component, $\sum_{k=0}^{K_{r}-1}\vec{b}_{k}(\vec{r}(t-\tau_{r}(k)))$,
is the combination of up to $K_{r}$ time-shifted (by the delay time
constants, $\tau_{r}(k)$) functions, $\vec{b}_{k}(\vec{r}(t))$,
of the readout signal, given by Equation \ref{eq:CanonicalMLWarpedState},
the warped (binary-valued in the extreme) version of the state signal.
The third component, $\sum_{k=0}^{K_{x}-1}\vec{c}_{k}(\vec{x}(t-\tau_{x}(k)))$,
representing the external input, is composed of the combination of
up to $K_{x}$ time-shifted (by the delay time constants, $\tau_{x}(k)$)
functions, $\vec{c}_{k}(\vec{x}(t))$, of the input signal\footnote{The entire input signal, $\vec{c}(t)$, in Equation \ref{eq:CanonicalMLExternalDrivingForceInputComponent}
is sometimes referred to as the ``external driving force'' (or,
simply, the ``driving force'') in physics.}.

The rationale behind choosing a form of the hyperbolic tangent as
the warping function is that the hyperbolic tangent possesses certain
useful properties. On one hand, it is monotonic and negative-symmetric
with a quasi-linear region, whose slope can be regulated \citep{metropolis53}.
On the other hand, it is bipolarly-saturating (i.e., bounded at both
the negative and the positive limits of its domain). The quasi-linear
mode aides in the design of the system's parameters and in interpreting
its behavior in the ``small signal'' regime (i.e., when $\left\Vert \vec{s}(t)\right\Vert \ll1)$.
The bipolarly-saturating (``squashing'') aspect, along with the
proper design of the internal parameters of the functions $\vec{a}_{k}(\vec{s}(t))$
and $\vec{b}_{k}(\vec{r}(t))$, helps to keep the state of the system
(and, hence, its output) bounded. The dynamic range of the state signals
is generally unrestricted, but the readout signals are guaranteed
to be bounded, while still carrying the state information with low
distortion in the quasi-linear mode of the warping function (the ``small
signal'' regime). If the system, described by Equation \ref{eq:CanonicalML}
and Equation \ref{eq:CanonicalMLWarpedState}, is stable, then the
state signals are bounded as well \citep{journals/tcas/SherstinskyPicardIEEETCAS98}.

The time delay terms on the right hand side of Equation \ref{eq:CanonicalML}
comprise the ``memory'' aspects of the system. They enable the quantity
holding the instantaneous time rate of change of the state signal,
$\frac{d\vec{s}(t)}{dt}$, to incorporate contributions from the state,
the readout, and the input signal values, measured at different points
in time, relative to the current time, $t$. Qualitatively, these
temporal elements enrich the expressive power of the model by capturing
causal and/or contextual information.

In neural networks, the time delay is an intrinsic part of the system
and also one of the key factors that determines the dynamics\footnote{In neural networks, time delay occurs in the interaction between neurons;
it is induced by the finite switching speed of the neuron and the
communication time between neurons \citep{journals/vibcontrol/Kyrychko072010,phd/ua/Ostroverkhyi2010}.}. Much of the pioneering research in recurrent networks during the
1970s and the 1980s was founded on the premise that neuron processes
and interactions could be expressed as systems of coupled DDEs \citep{Hopfield84,Grossberg:2013}.
Far from the actual operation of the human brain, based on what was
already known at the time, these ``neurally inspired'' mathematical
models have been shown to exhibit sufficiently interesting emerging
behaviors for both, advancing the knowledge and solving real-world
problems in various practical applications. While the major thrust
of research efforts was concerned primarily with continuous-time networks,
it was well understood that the learning procedures could be readily
adapted to discrete systems, obtained from the original differential
equations through sampling. We will also follow the path of sampling
and discretization for deriving the RNN equations \citep{conf/neurips2018/Sherstinsky-NeurIPS2018-CRACT-3}.
Over the span of these two decades, pivotal and lasting contributions
were made in the area of training networks containing interneurons\footnote{This term from neuroanatomy provides a biological motivation for considering
networks containing multiple ``hidden'' layers, essentially what
is dubbed ``deep networks'' and ``deep learning'' today.} with ``Error Back Propagation'' (or ``Back Propagation of Error'',
or ``Back Propagation'' for short), a special case of a more general
error gradient computation procedure. To accommodate recurrent networks,
both continuous-time and discrete-time versions of ``Back Propagation
Through Time'' have been developed on the foundation of Back Propagation
and used to train the weights and time delays of these networks to
perform a wide variety of tasks \citep{Jordan:86,Pineda87a,pineda:recandhigher87b,Pearlmutter89,elman:90a,pearlmutter:dynamic}.
We will rely on Back Propagation Through Time for training the systems
analyzed in this paper.

\pagebreak The contribution of each term on the right hand side of
Equation \ref{eq:CanonicalML} to the overall system is qualitatively
different from that of the others. The functions, $\vec{a}_{k}(\vec{s}(t-\tau_{s}(k)))$,
of the (``analog'') state signal in the first term have a strong
effect on the stability of the system, while the functions, $\vec{b}_{k}(\vec{r}(t-\tau_{r}(k)))$,
of the (bounded) readout signal in the second term capture most of
the interactions that shape the system's long-term behavior. If warranted
by the modeling requirements of the biological or physical system
and/or of the specific datasets and use-cases in an engineering setting,
the explicit inclusion of non-zero delay time constants in these terms
provides the necessary weighting flexibility in the temporal domain
(e.g., to account for delayed neural interactions) \citep{Vries:91}.
Thus, the parameters, $K_{s}$, $K_{r}$, and $K_{x}$ representing
the counts of the functions, $\vec{a}_{k}(\vec{s}(t-\tau_{s}(k)))$,
$\vec{b}_{k}(\vec{r}(t-\tau_{r}(k)))$, and $\vec{c}_{k}(\vec{x}(t-\tau_{x}(k)))$,
respectively (and the counts of the associated delay time constants,
$\tau_{s}(k)$, $\tau_{r}(k)$, and $\tau_{x}(k)$, respectively,
of these functions), in the system equations are chosen (or estimated
by an iterative procedure) accordingly. 

Suppose that $\vec{a}_{k}(\vec{s}(t-\tau_{s}(k)))$, $\vec{b}_{k}(\vec{r}(t-\tau_{r}(k)))$,
and $\vec{c}_{k}(\vec{x}(t-\tau_{x}(k)))$ are linear functions of
$\vec{s}$, $\vec{r}$, and $\vec{x}$, respectively. Then Equation
\ref{eq:CanonicalML} becomes a nonlinear DDE with linear (matrix-valued)
coefficients:
\begin{align}
\frac{d\vec{s}(t)}{dt} & =\sum_{k=0}^{K_{s}-1}A_{k}(\vec{s}(t-\tau_{s}(k)))+\sum_{k=0}^{K_{r}-1}B_{k}(\vec{r}(t-\tau_{r}(k)))+\sum_{k=0}^{K_{x}-1}C_{k}(\vec{x}(t-\tau_{x}(k)))+\vec{\phi}\label{eq:LinearOperatorsML}
\end{align}
Furthermore, if the matrices, $A_{k}$, $B_{k}$, and $C_{k}$, are
circulant (or block circulant), then the matrix-vector multiplication
terms in Equation \ref{eq:LinearOperatorsML} can be expressed as
convolutions in the space of the elements of $\vec{s}$, $\vec{r}$,
$\vec{x}$, and $\vec{\phi}$, each indexed by $\vec{i}$:
\begin{align}
\frac{ds(\vec{i},t)}{dt} & =\sum_{k=0}^{K_{s}-1}a_{k}(\vec{i})\ast s(\vec{i},t-\tau_{s}(k))+\sum_{k=0}^{K_{r}-1}b_{k}(\vec{i})\ast r(\vec{i},t-\tau_{r}(k))+\sum_{k=0}^{K_{x}-1}c_{k}(\vec{i})\ast x(\vec{i},t-\tau_{x}(k))+\phi(\vec{i})\label{eq:ConvolutionallyCoupledML}
\end{align}
The index, $\vec{i}$, is $1$-dimensional if the matrices, $A_{k}$,
$B_{k}$, and $C_{k}$, are circulant and multi-dimensional if they
are block circulant\footnote{For example, the $2$-dimensional shape of $\vec{i}$ is appropriate
for image processing tasks.}. 

The summations of time delayed terms in Equation \ref{eq:ConvolutionallyCoupledML}
represent convolutions in the time domain with finite-sized kernels,
consisting of the spatial convolutions $a_{k}(\vec{i})\ast s(\vec{i})$,
$b_{k}(\vec{i})\ast r(\vec{i})$, and $c_{k}(\vec{i})\ast x(\vec{i})$
as the coefficients for the three temporal components, respectively.
In fact, if the entire data set (e.g., the input data set, $\vec{x}(t)$)
is available a priori for all time ahead of the application of Equation
\ref{eq:ConvolutionallyCoupledML}, then some of the corresponding
time delays (e.g., $\tau_{x}(k)$) can be negative, thereby allowing
the incorporation of ``future'' information for computing the state
of the system at the present time, $t$. This will become relevant
further down in the analysis.

Before proceeding, it is interesting to note that earlier studies
linked the nonlinear dynamical system, formalized in Equation \ref{eq:CanonicalML}
(with $K_{s}=K_{r}=K_{x}=1$ and all $\tau_{s}$, $\tau_{r}$, and
$\tau_{x}$ set to zero), to the generalization of a type of neural
networks\footnote{As mentioned earlier, a more appropriate phrase would be ``neurally
inspired'' networks.}. Specifically the variant, in which the functions $\vec{a}_{k}(\vec{s}(t))$,
$\vec{b}_{k}(\vec{r}(t))$, and $\vec{c}_{k}(\vec{x}(t))$ are linear
operators as in Equation \ref{eq:LinearOperatorsML} (with $K_{s}=K_{r}=K_{x}=1$
and all $\tau_{s}$, $\tau_{r}$, and $\tau_{x}$ set to zero) was
shown to include the Continuous Hopfield Network \citep{Hopfield84}
as a special case. Its close relative, in which these operators are
further restricted to be convolutional as in Equation \ref{eq:ConvolutionallyCoupledML}
(again, with $K_{s}=K_{r}=K_{x}=1$ and all $\tau_{s}$, $\tau_{r}$,
and $\tau_{x}$ set to zero), was shown to include the Cellular Neural
Network \citep{chua:ieeecs88a,chua:ieeecs88b} as a special case \citep{scd/mit/Sherstinsky1994,journals/tcas/SherstinskyPicardIEEETCAS98,journals/tip/SherstinskyP96}.

Applying the simplifications:

\begin{equation}\label{eq:LinearOperatorsMLSimplifications}\begin{rcases} K_{s} & = 1\\\tau_{s}(0) & = 0\\A_{0} & = A\\K_{r} & = 1\\\tau_{r}(0) & = \tau_{0}\\B_{0} & = B\\K_{x} & = 1\\\tau_{x}(0) & = 0\\C_{0} & = C \end{rcases}\end{equation}

(some of which will be later relaxed) to Equation \ref{eq:LinearOperatorsML}
turns it into:
\begin{align}
\frac{d\vec{s}(t)}{dt} & =A\vec{s}(t)+B\vec{r}(t-\tau_{0})+C\vec{x}(t)+\vec{\phi}\label{eq:LinearOperatorsML1WarpedStateDelayTerm}
\end{align}
Equation \ref{eq:LinearOperatorsML}, Equation \ref{eq:ConvolutionallyCoupledML},
and, hence, Equation \ref{eq:LinearOperatorsML1WarpedStateDelayTerm}
are nonlinear first-order non-homogeneous DDEs. A standard numerical
technique for evaluating these equations, or, in fact, any embodiments
of Equation \ref{eq:General1stOrderNonlinearNonhomogeneousDiffEqSampledStateDomain},
is to discretize them in time and compute the values of the input
signals and the state signals at each time sample up to the required
total duration, thereby performing numerical integration.

Denoting the duration of the sampling time step as $\triangle T$
and the index of the time sample as $n$ in the application of the
backward Euler discretization rule\footnote{The backward Euler method is a stable discretization rule used for
solving ordinary differential equations numerically \citep{lntrefethen1996}.
A simple way to express this rule is to substitute the forward finite
difference formula into the definition of the derivative, relax the
requirement $\triangle T\rightarrow0$, and evaluate the function
on the right hand side (i.e., the quantity that the derivative is
equal to) at time, $t+\triangle T$.} to Equation \ref{eq:LinearOperatorsML1WarpedStateDelayTerm} yields\footnote{It is straightforward to extend the application of the discretization
rule to the full Equation \ref{eq:LinearOperatorsML}, containing
any or all the time delay terms and their corresponding matrix coefficients,
without the above simplifications. So there is no loss of generality. }:
\begin{align}
t & =n\triangle T\\
\frac{d\vec{s}(t)}{dt} & \thickapprox\frac{\vec{s}(n\triangle T+\triangle T)-\vec{s}(n\triangle T)}{\triangle T}\\
A\vec{s}(t)+B\vec{r}(t-\tau_{0})+C\vec{x}(t)+\vec{\phi} & =A\vec{s}(n\triangle T)+B\vec{r}(n\triangle T-\tau_{0})+C\vec{x}(n\triangle T)+\vec{\phi}\\
A\vec{s}(t+\triangle T)+B\vec{r}(t+\triangle T-\tau_{0})+C\vec{x}(t+\triangle T)+\vec{\phi} & =A\vec{s}(n\triangle T+\triangle T)+B\vec{r}(n\triangle T+\triangle T-\tau_{0})+C\vec{x}(n\triangle T+\triangle T)+\vec{\phi}\\
\frac{\vec{s}(n\triangle T+\triangle T)-\vec{s}(n\triangle T)}{\triangle T} & \thickapprox A\vec{s}(n\triangle T+\triangle T)+B\vec{r}(n\triangle T+\triangle T-\tau_{0})+C\vec{x}(n\triangle T+\triangle T)+\vec{\phi}\label{eq:TimeDiscretizedLinearOperatorsML1WarpedStateDelayTermExplicitUnfactoredDeltaT}
\end{align}
Now set the delay, $\tau_{0}$, equal to the single time step. This
can be interpreted as storing the value of the readout signal into
memory at every time step to be used in the above equations at the
next time step. After a single use, the memory storage can be overwritten
with the updated value of the readout signal to be used at the next
time step, and so forth\footnote{Again, the additional terms, containing similarly combined time delayed
input signals (as will be shown to be beneficial later in this paper)
and state signals, can be included in the discretization, relaxing
the above simplifications as needed to suit the requirements of the
particular problem at hand. }. Thus, setting $\tau_{0}=\triangle T$ and replacing the approximation
sign with an equal sign for convenience in Equation \ref{eq:TimeDiscretizedLinearOperatorsML1WarpedStateDelayTermExplicitUnfactoredDeltaT}
gives:
\begin{align}
\frac{\vec{s}(n\triangle T+\triangle T)-\vec{s}(n\triangle T)}{\triangle T} & =A\vec{s}(n\triangle T+\triangle T)+B\vec{r}(n\triangle T)+C\vec{x}(n\triangle T+\triangle T)+\vec{\phi}\\
\frac{\vec{s}((n+1)\triangle T)-\vec{s}(n\triangle T)}{\triangle T} & =A\vec{s}((n+1)\triangle T)+B\vec{r}(n\triangle T)+C\vec{x}((n+1)\triangle T)+\vec{\phi}\\
\vec{s}((n+1)\triangle T)-\vec{s}(n\triangle T) & =\triangle T\left(A\vec{s}((n+1)\triangle T)+B\vec{r}(n\triangle T)+C\vec{x}((n+1)\triangle T)+\vec{\phi}\right)\label{eq:TimeDiscretizedLinearOperatorsML1WarpedStateDelayTermExplicitFactoredDeltaT}
\end{align}
After performing the discretization, all measurements of time in Equation
\ref{eq:TimeDiscretizedLinearOperatorsML1WarpedStateDelayTermExplicitFactoredDeltaT}
become integral multiples of the sampling time step, $\triangle T$.
Now, $\triangle T$ can be dropped from the arguments, which leaves
the time axis dimensionless. Hence, all the signals are transformed
into sequences, whose domain is the discrete index, $n$, and Equation
\ref{eq:LinearOperatorsML1WarpedStateDelayTerm} turns into a nonlinear
first-order non-homogeneous difference equation \citep{AVO89}:
\begin{align}
\vec{s}[n+1]-\vec{s}[n] & =\triangle T\left(A\vec{s}[n+1]+B\vec{r}[n]+C\vec{x}[n+1]+\vec{\phi}\right)\\
\vec{s}[n+1] & =\vec{s}[n]+\triangle T\left(A\vec{s}[n+1]+B\vec{r}[n]+C\vec{x}[n+1]+\vec{\phi}\right)\nonumber \\
\left(I-(\triangle T)A\right)\vec{s}[n+1] & =\vec{s}[n]+\left((\triangle T)B\right)\vec{r}[n]+\left((\triangle T)C\right)\vec{x}[n+1]+(\triangle T)\vec{\phi}\label{eq:DiscreteLinearOperatorsML1WarpedStateDelayTerm}
\end{align}
Defining: 
\begin{align}
W_{s} & =\left(I-(\triangle T)A\right)^{-1}\label{eq:StateWeightMatrixForCanonicalRNN}
\end{align}
and multiplying both sides of Equation \ref{eq:DiscreteLinearOperatorsML1WarpedStateDelayTerm}
by $W_{s}$ leads to:
\begin{align*}
\vec{s}[n+1] & =W_{s}\vec{s}[n]+\left((\triangle T)W_{s}B\right)\vec{r}[n]+\left((\triangle T)W_{s}C\right)\vec{x}[n+1])+\left((\triangle T)W_{s}\vec{\phi}\right)
\end{align*}
which after shifting the index, $n$, forward by $1$ step becomes:
\begin{align}
\vec{s}[n] & =W_{s}\vec{s}[n-1]+\left((\triangle T)W_{s}B\right)\vec{r}[n-1]+\left((\triangle T)W_{s}C\right)\vec{x}[n]+\left((\triangle T)W_{s}\vec{\phi}\right)\nonumber \\
\vec{r}[n] & =G(\vec{s}[n])
\end{align}
Defining two additional weight matrices and a bias vector,
\begin{align}
W_{r} & =(\triangle T)W_{s}B\\
W_{x} & =(\triangle T)W_{s}C\\
\vec{\theta}_{s} & =(\triangle T)W_{s}\vec{\phi}
\end{align}
transforms the above system into the canonical Recurrent Neural Network
(RNN) form:
\begin{align}
\vec{s}[n] & =W_{s}\vec{s}[n-1]+W_{r}\vec{r}[n-1]+W_{x}\vec{x}[n]+\vec{\theta}_{s}\label{eq:CanonicalRNN}\\
\vec{r}[n] & =G(\vec{s}[n])\label{eq:CanonicalRNNWarpedState}
\end{align}
\begin{figure}[tph]
\includegraphics[scale=0.47]{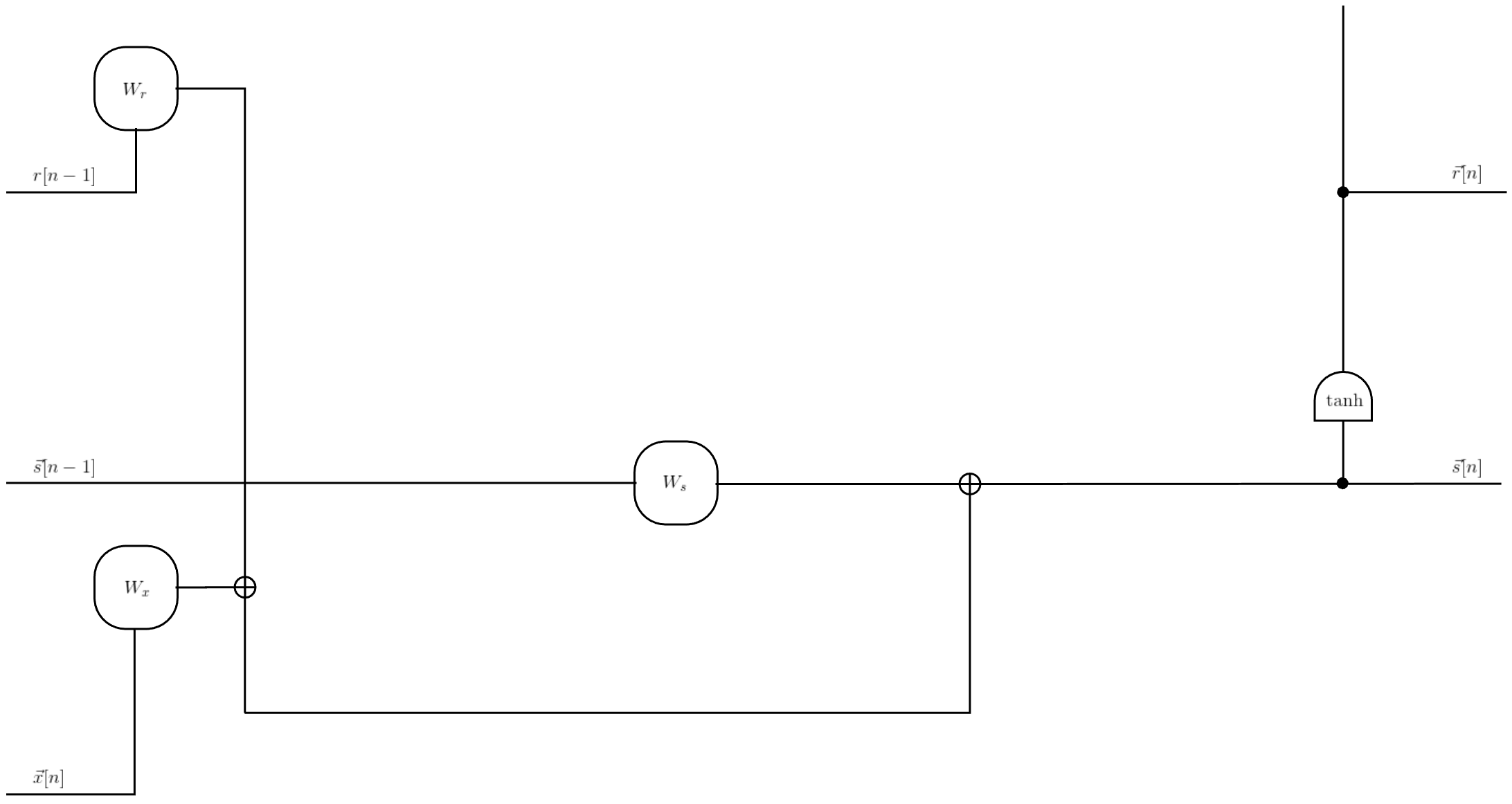}

\caption{Canonical RNN cell. The bias parameters, $\vec{\theta}_{s},$ have
been omitted from the figure for brevity. It can be assumed to be
included without the loss of generality by appending an additional
element, always set to $1$, to the input signal vector, $\vec{x}[n]$,
and increasing the row dimensions of $W_{x}$ by $1$.\label{fig:CanonicalRNNCellSchematics}}
\end{figure}

The RNN formulation in Equation \ref{eq:CanonicalRNN}, diagrammed
in Figure \ref{fig:CanonicalRNNCellSchematics}, will be later logically
evolved into the LSTM system. Before that, it is beneficial to introduce
the process of ``unrolling''\footnote{The terms ``unrolling'' and ``unfolding'' are used interchangeably
in association with RNN systems.} and the notion of a ``cell'' of an RNN. These concepts will be
simpler to describe using the standard RNN definition, which is derived
next from Equation \ref{eq:CanonicalRNN} based on stability arguments.

\pagebreak{}

For the system in Equation \ref{eq:CanonicalRNN} to be stable, every
eigenvalue of $\hat{W}=W_{s}+W_{r}$ must lie within the complex-valued
unit circle \citep{AVO89,STR94a}. Since there is considerable flexibility
in the choice of the elements of $A$ and $B$ to satisfy this requirement,
setting $\triangle T=1$ for simplicity is acceptable. As another
simplification, let $A$ be a diagonal matrix with large negative
entries (i.e., $a_{ii}\ll0$) on its main diagonal (thereby practically
guaranteeing the stability of Equation \ref{eq:LinearOperatorsML1WarpedStateDelayTerm}).
Then, from Equation \ref{eq:StateWeightMatrixForCanonicalRNN}, $W_{s}\thickapprox-A^{-1}$
will be a diagonal matrix with small positive entries, $\frac{1}{\left|a_{ii}\right|}$,
on its main diagonal, which means that the explicit effect of the
state signal's value from memory, $\vec{s}[n-1]$, on the system's
trajectory will be negligible (the implicit effect through $\vec{r}[n-1]$
will still be present as long as $\left\Vert W_{r}\right\Vert >0$).
Thus, ignoring the first term in Equation \ref{eq:CanonicalRNN},
reduces it to the standard RNN definition:
\begin{align}
\vec{s}[n] & =W_{r}\vec{r}[n-1]+W_{x}\vec{x}[n]+\vec{\theta}_{s}\label{eq:StandardRNN}\\
\vec{r}[n] & =G(\vec{s}[n])\label{eq:StandardRNNWarpedState}
\end{align}
From Equation \ref{eq:StandardRNN}, now only the matrix $\hat{W}\thickapprox W_{r}\thickapprox-A^{-1}B$
is responsible for the stability of the RNN. Consider the best case
scenario, where $B$ is a diagonal matrix ($B=\Lambda_{B}$). With
this simplification, the essential matrix for analyzing the stability
of Equation \ref{eq:StandardRNN} becomes $\tilde{W}=-A^{-1}\Lambda_{B}$,
where $\Lambda_{B}$ is the diagonal matrix (i.e., consisting of only
the eigenvalues of $B$, with the individual eigenvalues, $\lambda_{i}$,
on the main diagonal of $\Lambda_{B}$). Since both $A$ and $\Lambda_{B}$
are diagonal, $\tilde{W}$ is a diagonal matrix with the entries $\mu_{i}=\frac{\lambda_{i}}{\left|a_{ii}\right|}$
on its main diagonal. These quantities become the eigenvalues of the
overall RNN system in Equation \ref{eq:StandardRNN} in the ``small
signal regime'' ($\left\Vert \vec{s}[n]\right\Vert \ll1)$, each
adding the mode of $\left(\mu_{i}\right)^{n}$, multiplied by its
corresponding initial condition, to the trajectory of $\vec{s}[n]$.
A necessary and sufficient condition for stability is that $0<\mu_{i}<1$,
meaning that every eigenvalue, $\lambda_{i}$, of $B$ must satisfy
the condition $0<\lambda_{i}<\left|a_{ii}\right|$. If any $\mu_{i}$
and $\lambda_{i}$ fail to satisfy this condition, the system will
be unstable, causing the elements of $\vec{r}[n]$ to either oscillate
or saturate (i.e., enter the flat regions of the warping nonlinearity)
at some value of the index, $n$.

An alternative to choosing the specific convenient form of $A$ in
Equation \ref{eq:StateWeightMatrixForCanonicalRNN} would be to (somewhat
arbitrarily) treat $W_{s}$, $W_{r}$, $W_{x}$, and $\vec{\theta}_{s}$
in Equation \ref{eq:CanonicalRNN} as mutually independent parameters
and then set $W_{s}=0$ to obtain the standard RNN definition (as
in Equation \ref{eq:StandardRNN}). In this case, the above stability
analysis still applies. In particular, the eigenvalues, $\mu_{i}$,
of $W_{r}$ are subject to the same requirement, $0<\mu_{i}<1$, as
a necessary and sufficient condition for stability. 

Stability considerations will be later revisited in order to justify
the need to evolve the RNN to a more complex system, namely, the LSTM
network.

We have shown that the RNN, as expressed by Equation \ref{eq:CanonicalRNN}
(in the canonical form) or by Equation \ref{eq:StandardRNN} (in the
standard form), essentially implements the backward Euler numerical
integration method for the ordinary DDE in Equation \ref{eq:LinearOperatorsML1WarpedStateDelayTerm}.
This ``forward'' direction of starting in the continuous-time domain
(differential equation) and ending in the discrete-index domain (difference
equation) implies that the original phenomenon being modeled is assumed
to be fundamentally analog in nature, and that it is modeled in the
discrete domain as an approximation for the purpose of realization.
For example, the source signal could be the audio portion of a lecture,
recorded on an analog tape (or on digital media as a finely quantized
waveform and saved in an audio file). The original recording thus
contains the spoken words as well as the intonation, various emphases,
and other vocal modulations that communicate the content in the speaker's
individual way as expressed through voice. The samples generated by
a hypothetical discretization of this phenomenon, governed in this
model by Equation \ref{eq:LinearOperatorsML1WarpedStateDelayTerm},
could be captured as the textual transcript of the speech, saved in
a document containing only the words uttered by the speaker, but ignoring
all the intonation, emotion, and other analogous nuances. In this
scenario, it is the sequence of words in the transcript of the lecture
that the RNN will be employed to reproduce, not the actual audio recording
of the speech. The key subtle point in this scenario is that applying
the RNN as a model implies that the underlying phenomenon is governed
by Equation \ref{eq:LinearOperatorsML1WarpedStateDelayTerm}, whereby
the role of the RNN is that of implementing the computational method
for solving this DDE using the backward Euler discretization rule,
under the restriction that the sampling time step, $\triangle T$,
is equal to the delay, $\tau_{0}$. In contrast, the ``reverse''
direction would be a more appropriate model in situations where the
discrete signal is the natural starting domain, because the phenomenon
originates as a sequence of samples. For example, a written essay
originates as a sequence of words and punctuation, saved in a document.
One can conjure up an analog rendition of this essay as being read
by a narrator, giving life to the words and passages with intonation,
pauses, and other expressions not present in the original text of
the essay. While the starting point depends on how the original source
of data is generated, both the continuous (``forward'') and the
discrete (``reverse'') representations can serve as tools for gaining
insight into the advantages and the limitations of the models under
consideration.

\pagebreak{}

\section{RNN Unfolding/Unrolling\label{sec:RNN-Unfolding/Unrolling}}

It is convenient to use the term ``cell'' when referring to Equation
\ref{eq:CanonicalRNN} and Equation \ref{eq:StandardRNN} in the uninitialized
state. In other words, the sequence has been defined by these equations,
but its terms not yet computed. Then the cell can be said to be ``unfolded''
or ``unrolled'' by specifying the initial conditions on the state
signal, $\vec{s}[n]$, and numerically evaluating Equation \ref{eq:CanonicalRNN}
or\\ Equation \ref{eq:StandardRNN} for a finite range of discrete
steps, indexed by $n$. This process is illustrated in Figure \ref{fig:RNNCellUnrolled}.

\begin{figure}[tph]
\includegraphics[scale=0.44]{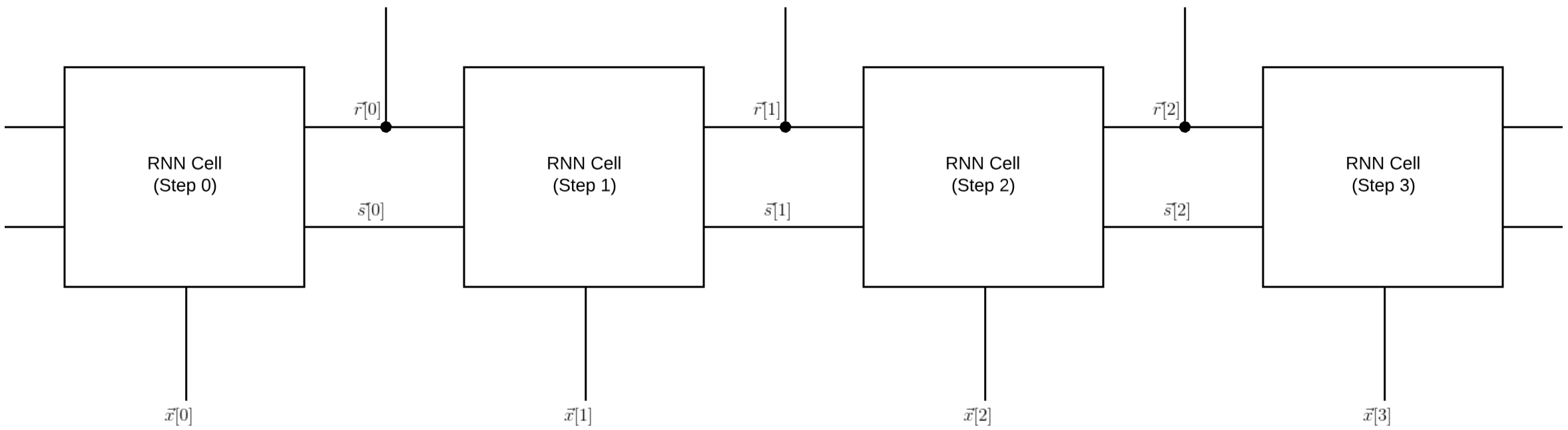}

\caption{Sequence of steps generated by unrolling an RNN cell.\label{fig:RNNCellUnrolled}}
\end{figure}

Both Equation \ref{eq:CanonicalRNN} and Equation \ref{eq:StandardRNN}
are recursive in the state signal, $\vec{s}[n]$. Hence, due to the
repeated application of the recurrence relation as part of the unrolling,
the state signal, $\vec{s}[n]$, at some value of the index, $n$,
no matter how large, encompasses the contributions of the state signal,
$\vec{s}[k]$, and the input signal, $\vec{x}[k]$, for all indices,
$k<n$, ending at $k=0$, the start of the sequence \citep{Jordan:86,elman:90a}.
Because of this attribute, the RNN belongs to the category of the
``Infinite Impulse Response'' (IIR) systems. 

Define the vector-valued unit step function as:
\begin{align}
\vec{u}[n] & =\begin{cases}
\vec{1}, & n\geq0\\
\vec{0}, & n<0
\end{cases}\label{eq:VectorValuedUnitStepFunction}
\end{align}
where $\vec{0}$ and $\vec{1}$ denote vectors, all of whose elements
are equal to $0$ and to $1$, respectively. Then the vector-valued
unit sample function, $\vec{\delta}[n]$, is defined by being $\vec{1}$
at $n=0$, and $\vec{0}$ otherwise. In terms of $\vec{u}[n]$,
\begin{align}
\vec{\delta}[n] & =\vec{u}[n]-\vec{u}[n-1]\label{eq:VectorValuedUnitSampleFunction}
\end{align}

These functions are depicted in Figure \ref{fig:UnitStep1DFunctionPlotAndUnitSample1DFunctionPlot}.

\begin{figure}[tph]
\subfloat[The unit step function.\label{fig:UnitStep1DFunctionPlot}]%
{\includegraphics[scale=0.5]{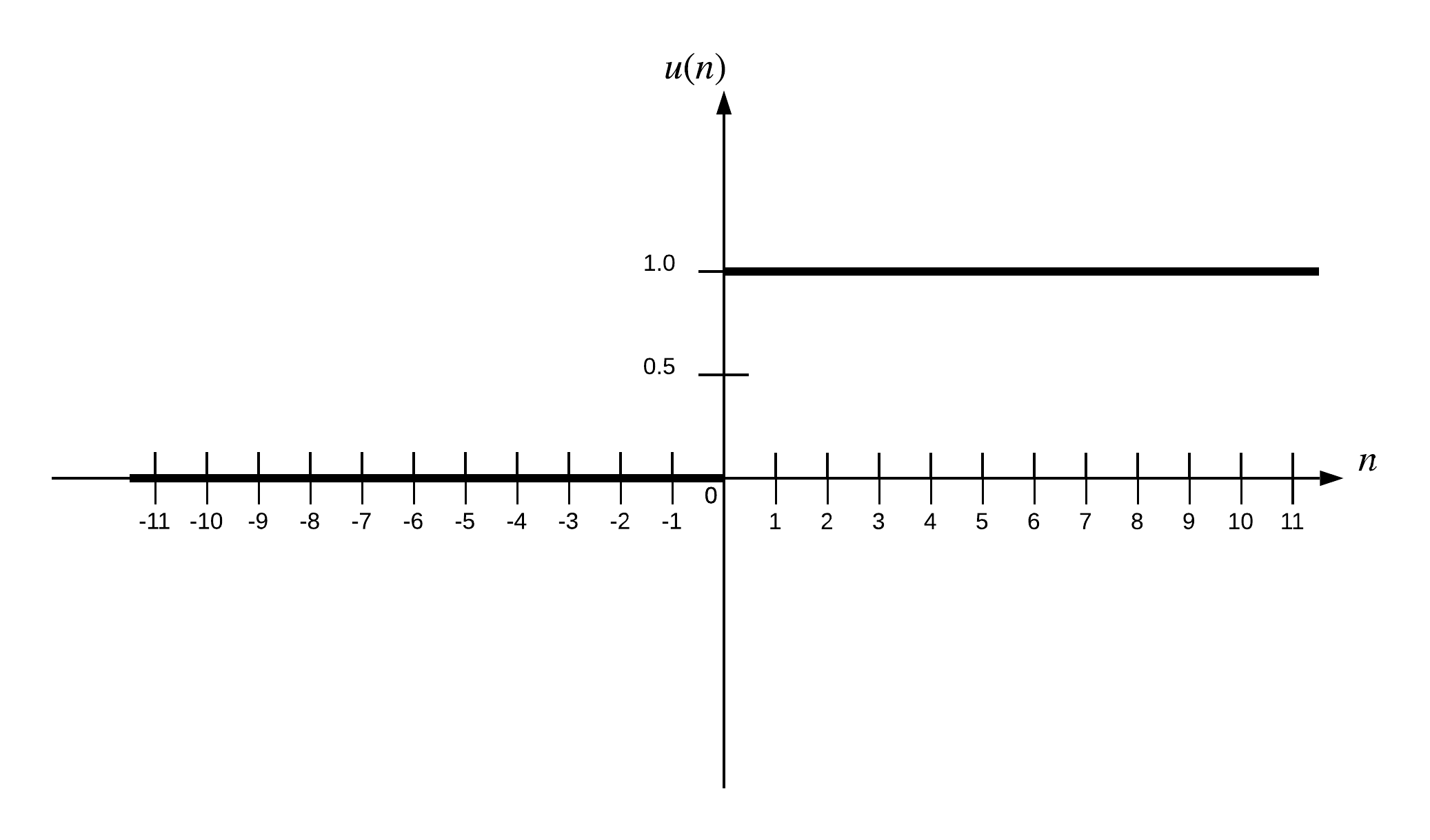}

}\hfill{}%
\subfloat[The unit sample function.\label{fig:UnitSample1DFunctionPlot}]%
{\includegraphics[scale=0.5]{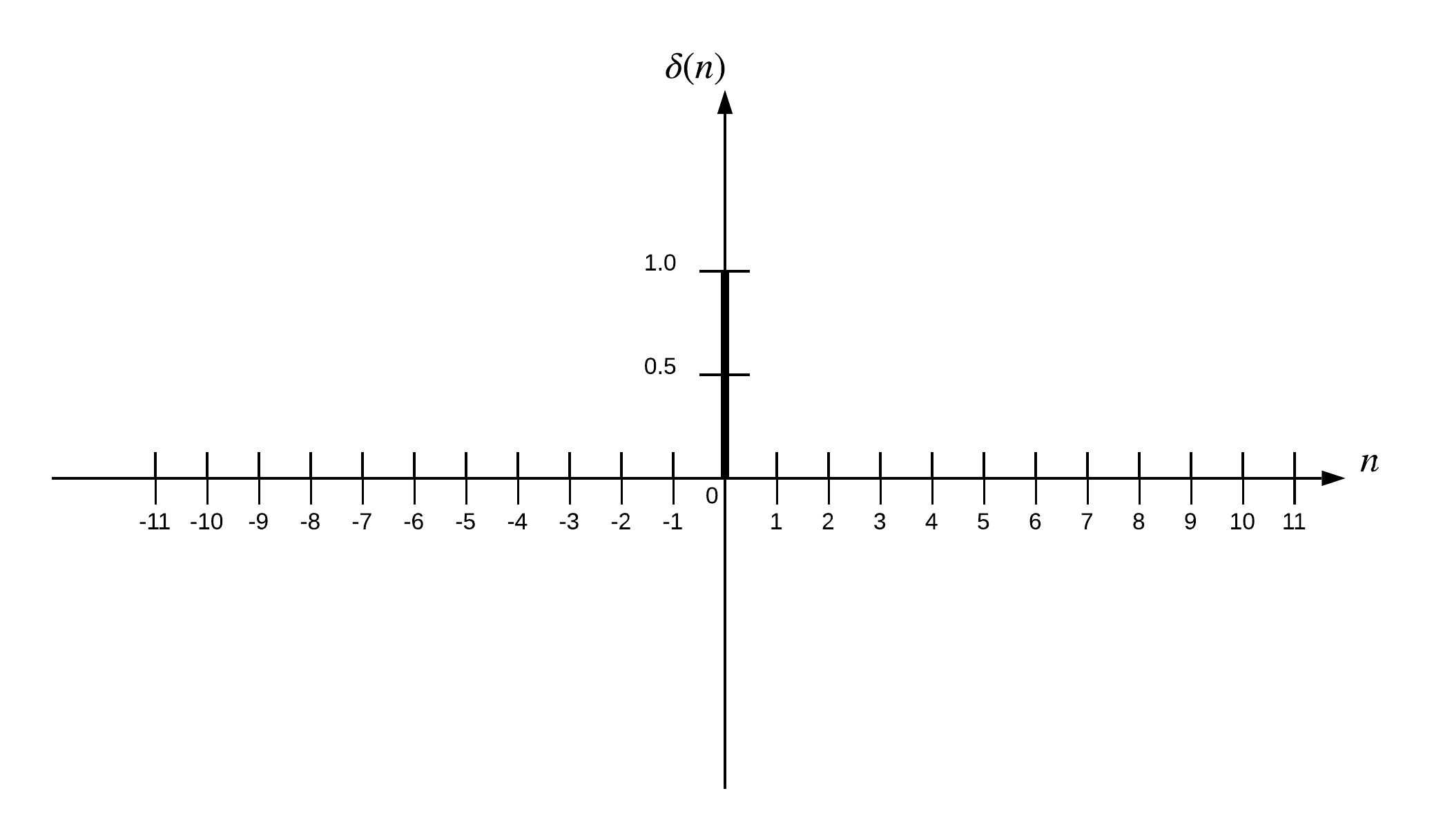}

}\caption{The unit step and the unit sample (\textquotedblleft impulse\textquotedblright )
functions plotted (in one data dimension) against the discrete index,
$n$. \label{fig:UnitStep1DFunctionPlotAndUnitSample1DFunctionPlot}}
\end{figure}

\pagebreak{}
\begin{example}
\label{example:StandardRNNAsIIRSequence}The IIR (i.e., unending)
nature of the sequences, governed by these equations, can be readily
demonstrated by letting $\vec{s}[-1]=\vec{0}$ be the initial condition,
setting $\vec{x}[n]=\vec{\delta}[n]$, the unit sample stimulus (i.e.,
the ``impulse''), and computing the response, $\vec{s}[n]$, to
this ``impulse'' for several values of the index, $n$, in order
to try to recognize a pattern. In the case of Equation \ref{eq:StandardRNN}
with $\vec{\theta}_{s}=\vec{0}$, the sequence of $\vec{s}[n]$ values
will be:
\begin{align}
\vec{s}[n=-1] & =\vec{0}\nonumber \\
\vec{s}[n=0] & =W_{x}\vec{1}\nonumber \\
\vec{s}[n=1] & =W_{r}G(W_{x}\vec{1})\nonumber \\
\vec{s}[n=2] & =W_{r}G(W_{r}G(W_{x}\vec{1}))\nonumber \\
\vec{s}[n=3] & =W_{r}G(W_{r}G(W_{r}G(W_{x}\vec{1})))\nonumber \\
\vec{s}[n=4] & =W_{r}G(W_{r}G(W_{r}G(W_{r}G(W_{x}\vec{1}))))\nonumber \\
 & \cdots\label{eq:StandardRNNExampleIIR}
\end{align}
and so forth. Evidently, it is defined for every positive $n$, even
when the input is only a single impulse at $n=0$. 
\end{example}
In practice, it is desirable to approximate a sequence with an infinite
support (IIR), such as Equation \ref{eq:CanonicalRNN} or Equation
\ref{eq:StandardRNN}, by a ``Finite Impulse Response'' (FIR) sequence.
The rationale is that FIR systems have certain advantages over IIR
systems. One advantage is guaranteed stability -- FIR systems are
intrinsically stable. Another advantage is that FIR systems are realizable
with finite computational resources. An FIR system will take a finite
number of steps to compute the output from the input and will require
a finite number of memory locations for storing intermediate results
and various coefficients. Moreover, the computational complexity and
storage requirements of an FIR system are known at design time.

Denote the sequence of the ``ground truth'' output values by $\vec{v}[n]$
for any value of the index, $n$, and let $N$ be the length of the
sequence, $\vec{v}[n]$, where $N$ can be an arbitrarily large integer
(e.g., the total number of samples in the training set, or the number
of inputs presented to the system for inference over the lifetime
of the system, etc.). Suppose that $\vec{v}[n]\left\rfloor _{0\leq n\leq N-1}\right.$
is subdivided into $M$ non-overlapping varying-length segments with
$K_{m}$ samples per segment, where every $K_{m}$ is finite, and
$K_{m}\leq N$. It can be assumed that $M$ is an integer with $M\geq1$
(if needed, the RNN system in Equation \ref{eq:StandardRNN} can be
``padded'' with extra $\vec{x}[n]=\vec{0}$ input terms for this
to hold).

Formally, let $\vec{v}[n]\left\rfloor _{0\leq n\leq N-1}\right.$
be the sequence of the ground truth output values for any value of
the index, $n$, and assume that there exists a partitioning of $\vec{v}[n]\left\rfloor _{0\leq n\leq N-1}\right.$
into $M$ non-overlapping segments, $\vec{v}_{m}[n]$, $0\leq m\leq M-1$:
\begin{align}
\vec{v}[n]\left\rfloor _{0\leq n\leq N-1}\right. & =\sum_{m=0}^{M-1}\vec{v}_{m}[n]\label{eq:RNNTrueOutputMutuallyExclusiveCollectivelyExhaustivePartitioning}
\end{align}
For subdividing a sequence into $M$ non-overlapping segments, consider
a vector-valued ``rectangular'' window function, $\vec{w}_{0}[n]$,
which has the value of $\vec{1}$ within the window $0\leq n\leq K_{0}-1$
and $\vec{0}$ otherwise. In terms of the vector-valued unit step
function, $\vec{u}[n]$, $\vec{w}_{0}[n]$ is defined as:
\begin{align}
\vec{w}_{0}[n] & =\vec{u}[n]-\vec{u}[n-K_{0}]\label{eq:VectorValuedPrototypeBoxCarWindow}
\end{align}
Combining Equation \ref{eq:VectorValuedUnitSampleFunction} with Equation
\ref{eq:VectorValuedPrototypeBoxCarWindow} provides an alternative
(``sampling'') definition of $\vec{w}_{0}[n]$:
\begin{align}
\vec{w}_{0}[n] & =\vec{u}[n]-\vec{u}[n-1]\nonumber \\
 & +\vec{u}[n-1]-\vec{u}[n-2]\nonumber \\
 & +\vec{u}[n-2]-\vec{u}[n-3]\nonumber \\
 & +\ldots+\nonumber \\
 & +\vec{u}[n-(K_{0}-2)]-\vec{u}[n-(K_{0}-1)]\nonumber \\
 & +\vec{u}[n-(K_{0}-1)]-\vec{u}[n-K_{0}]\nonumber \\
 & =\sum_{k=0}^{K_{0}-1}\vec{\delta}[n-k]\label{eq:VectorValuedSegmentBoxCarSampler}
\end{align}
Then from Equation \ref{eq:VectorValuedSegmentBoxCarSampler}, the
RNN sequence can be sampled in its entirety by the full $N$-samples-long
window:
\begin{align}
\vec{w}[n] & =\vec{u}[n]-\vec{u}[n-N]\nonumber \\
 & =\sum_{m=0}^{M-1}\left(\sum_{k=j(m)}^{j(m)+K_{m}-1}\vec{\delta}[n-k]\right)\label{eq:VectorValuedSequenceBoxCarSampler}\\
 & =\sum_{m=0}^{M-1}\vec{w}_{m}[n]\label{eq:VectorValuedSequenceBoxCarSamplerWindow}
\end{align}
where:
\begin{align}
j(m) & =\begin{cases}
\sum_{i=0}^{m-1}K_{i}, & 1\leq m\leq M-1\\
0, & m=0
\end{cases}\label{eq:VariableLengthSegmentFirstIndex}
\end{align}
and:
\begin{align}
\vec{w}_{m}[n] & =\sum_{k=j(m)}^{j(m)+K_{m}-1}\vec{\delta}[n-k]\label{eq:VectorValuedBoxCarSamplerWindow}
\end{align}
Under the change of indices,
\begin{align*}
l & =k-j(m)\\
k & =j(m)+l\\
l & \longrightarrow k
\end{align*}
Equation \ref{eq:VectorValuedBoxCarSamplerWindow} becomes:
\begin{align}
\vec{w}_{m}[n] & =\sum_{k=0}^{K_{m}-1}\vec{\delta}[n-j(m)-k]\label{eq:VectorValuedShiftedBoxCarSampler}
\end{align}
Equation \ref{eq:VectorValuedShiftedBoxCarSampler} indicates that
each $\vec{w}_{m}[n]$ is a rectangular window, whose size is $K_{m}$
samples. Hence, ``extracting'' a\\ $K_{m}$-samples-long segment
with the index, $m$, from the overall ground truth output sequence,
$\vec{v}[n]\left\rfloor _{0\leq n\leq N-1}\right.$, amounts to multiplying
this sequence by $\vec{w}_{m}[n]$: 
\begin{align}
\vec{v}_{m}[n]\left\rfloor _{0\leq n\leq N-1}\right. & =\vec{w}_{m}[n]\odot\vec{v}[n]\left\rfloor _{0\leq n\leq N-1}\right.\label{eq:RNNTrueOutputMultiplicationByVectorValuedBoxCarSamplerWindow}\\
 & =\begin{cases}
\vec{v}[n], & j(m)\leq n\leq j(m)+K_{m}-1\\
\vec{0}, & \textrm{otherwise}
\end{cases}\label{eq:RNNSegmentTrueOutput}
\end{align}
where $j(m)$ is given by Equation \ref{eq:VariableLengthSegmentFirstIndex}.
According to Equation \ref{eq:RNNSegmentTrueOutput}, the segment-level
ground truth output subsequence,\\ $\vec{v}_{m}[n]\left\rfloor _{0\leq n\leq N-1}\right.$,
in Equation \ref{eq:RNNTrueOutputMultiplicationByVectorValuedBoxCarSamplerWindow}
will have non-zero values for the given value of the segment index,
$m$, where $0\leq m\leq M-1$, only when the index, $n$, is in the
range $j(m)\leq n\leq j(m)+K_{m}-1$. This is in agreement with Equation
\ref{eq:RNNTrueOutputMutuallyExclusiveCollectivelyExhaustivePartitioning}. 

Define $\mathcal{Q}(\left\langle \vec{r}[n]\right\rangle )$ as an
invertible map that transforms an ensemble of the readout signals
of the RNN system, $\left\langle \vec{r}[n]\right\rangle $, into
an ensemble of observable output signals, $\left\langle \vec{y}[n]\right\rangle $,
for $0\leq n\leq N-1$:
\begin{align}
\left\langle \vec{y}[n]\right\rangle  & \equiv\mathcal{Q}(\left\langle \vec{r}[n]\right\rangle )\label{eq:RNNOutputFunctionOfWarpedState}
\end{align}
In addition, define $\mathcal{L}$ as an ``objective function''
(or ``merit function'' \citep{Vinyals:EECS-2013-202}) that measures
the cost of the observable output of the system deviating from the
desired ground truth output values, given the input data, supplied
over the entire range of the values of the index, $n$:
\begin{align}
\mathcal{L}\left(\left\langle \vec{y}[n]\right\rangle \left\rfloor _{0\leq n\leq N-1}\right.,\left\langle \vec{v}[n]\right\rangle \left\rfloor _{0\leq n\leq N-1}\right.\right)\label{eq:RNNObjectiveFunctionOfOutputsEnsemble}
\end{align}
where $\left\langle \vec{y}[n]\right\rangle \left\rfloor _{0\leq n\leq N-1}\right.$
denotes the ensemble of all $N$ members of the sequence of the observable
output variables, $\vec{y}[n]$, and\\ $\left\langle \vec{v}[n]\right\rangle \left\rfloor _{0\leq n\leq N-1}\right.$
denotes the ensemble of all $N$ members of the sequence of the ground
truth output values, $\vec{v}[n]$.

As shorthand, combine all parameters of the standard RNN system in
Equation \ref{eq:StandardRNN} under one symbol, $\Theta$:
\begin{align}
\Theta & \equiv\left\{ W_{r},W_{x},\vec{\theta}_{s}\right\} \label{eq:StandardRNNParameters}
\end{align}
\pagebreak{}
\begin{prop}
\label{Proposition:RNNUnfoldingUnrolling}Given the standard RNN system
in Equation \ref{eq:StandardRNN} parameterized by $\Theta$, defined
in Equation \ref{eq:StandardRNNParameters}, assume that there exists
a value of $\Theta$, at which the objective function, $\mathcal{L}$,
defined in Equation \ref{eq:RNNObjectiveFunctionOfOutputsEnsemble}
for an $N$-samples-long sequence, is close to an optimum as measured
by some acceptable bound. Further, assume that there exist non-zero
finite constants, $M$ and $K_{m}$, such that $K_{m}<N$, where $0\leq m\leq M-1$,
and that the ground truth output sequence, $\vec{v}[n]\left\rfloor _{0\leq n\leq N-1}\right.$,
can be partitioned into mutually independent segment-level ground
truth output subsequences, $\vec{v}_{m}[n]\left\rfloor _{0\leq n\leq N-1}\right.$,
for different values of the segment index, $m$, as specified in Equation
\ref{eq:RNNSegmentTrueOutput}. Then a single, reusable RNN cell,
unrolled for an adjustable number of steps, $K_{m}$,\\ is computationally
sufficient for seeking $\Theta$ that optimizes $\mathcal{L}$ over
the training set and for inferring outputs from unseen inputs.
\end{prop}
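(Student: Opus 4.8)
The plan is to exploit the three hypotheses in sequence: the invertibility of $\mathcal{Q}$, the additive separability of the objective induced by the disjoint-support windows $\vec{w}_{m}[n]$, and --- most crucially --- the mutual independence of the segment-level ground truth subsequences, which is what licenses severing the IIR dependence chain at each segment boundary. First I would use the invertibility of the readout-to-output map $\mathcal{Q}$ in Equation \ref{eq:RNNOutputFunctionOfWarpedState} to recast the minimization of $\mathcal{L}$ stated at the level of the observable outputs $\langle\vec{y}[n]\rangle$ into an equivalent problem at the level of the RNN readouts $\langle\vec{r}[n]\rangle$, hence the states $\langle\vec{s}[n]\rangle$. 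Because $\mathcal{Q}$ is a bijection, driving $\vec{y}[n]$ toward $\vec{v}[n]$ is equivalent to driving $\vec{r}[n]$ toward $\mathcal{Q}^{-1}(\vec{v}[n])$, so all subsequent reasoning may be conducted in the state domain, where the recurrence Equation \ref{eq:StandardRNN} acts.

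Next I would establish additive separability of the objective. Since the windows $\vec{w}_{m}[n]$ of Equation \ref{eq:VectorValuedBoxCarSamplerWindow} have pairwise-disjoint supports and jointly tile the range $0\le n\le N-1$ per Equation \ref{eq:VectorValuedSequenceBoxCarSamplerWindow}, the ground truth decomposes as in Equation \ref{eq:RNNTrueOutputMutuallyExclusiveCollectivelyExhaustivePartitioning}, and any per-sample cost accumulated over $n$ factors as
\begin{align}
\mathcal{L}\left(\langle\vec{y}[n]\rangle,\langle\vec{v}[n]\rangle\right) & =\sum_{m=0}^{M-1}\mathcal{L}_{m}\left(\langle\vec{y}_{m}[n]\rangle,\langle\vec{v}_{m}[n]\rangle\right),
\end{align}
where each per-segment term $\mathcal{L}_{m}$ inspects only the $K_{m}$ samples lying in the support of $\vec{w}_{m}[n]$. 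The mutual-independence hypothesis is precisely what guarantees that this decomposition carries no hidden cross-terms coupling distinct values of $m$, so that an optimizer of the total $\mathcal{L}$ is obtained exactly by jointly optimizing the $M$ per-segment terms.

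The hard part will be cutting the IIR memory. By Example \ref{example:StandardRNNAsIIRSequence}, the state $\vec{s}[n]$ at any index depends on the entire history back to $n=0$ through repeated application of Equation \ref{eq:StandardRNN}; naively, producing the outputs inside segment $m$ would require the carried-over state $\vec{s}[j(m)-1]$, which encodes all prior segments. Here I would invoke mutual independence decisively: because $\vec{v}_{m}[n]$ is independent of every $\vec{v}_{m'}[n]$ with $m'\neq m$, the information needed to reproduce $\vec{v}_{m}$ within its window is self-contained in the $K_{m}$ inputs of that segment, and the contribution of the cross-boundary state $\vec{s}[j(m)-1]$ to the segment-$m$ readouts is not required to attain the optimum. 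Consequently $\vec{s}[j(m)-1]$ may be replaced by a fixed initial condition (for instance $\vec{s}=\vec{0}$, exactly as in Example \ref{example:StandardRNNAsIIRSequence}), with the residual degradation absorbed into the acceptable bound of the first hypothesis. This truncates the infinite support to a finite window of length $K_{m}$, converting each per-segment computation into an FIR evaluation. I expect this to be the main obstacle, since it is the quantitative step that justifies discarding the cross-boundary state while staying within tolerance: the independence assumption is doing all of the work, and the role of the ``acceptable bound'' is to render the truncation exact-to-tolerance rather than exact.

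Finally I would close the argument by observing that the recurrence Equation \ref{eq:StandardRNN} is parameterized by the single shared tuple $\Theta$ of Equation \ref{eq:StandardRNNParameters}, identical across all $m$. Therefore one and the same cell, reinitialized afresh and unrolled for exactly $K_{m}$ steps, computes every segment's readouts; during training the gradient of $\mathcal{L}$ with respect to $\Theta$ is simply the sum over $m$ of the per-segment gradients of this shared cell, and during inference each unseen input segment is processed identically by the same unrolled cell. Hence a single, reusable RNN cell, unrolled for the adjustable number of steps $K_{m}$, is computationally sufficient both for seeking $\Theta$ and for inferring outputs, as claimed.
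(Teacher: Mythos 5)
Your proposal is correct and follows essentially the same route as the paper: invertibility of $\mathcal{Q}$ (and of $G$) transfers the independence of the ground-truth segments to the readout and state subsequences, the disjoint window supports make $\mathcal{L}$ separable, the cross-boundary state is severed by fixing each segment's initial state to $\vec{0}$, and an index shift yields one reusable cell unrolled for $K_{m}$ steps with gradients summed over segments. The only minor differences are that you specialize the segment combination to a sum where the paper allows a general combining function $\mathcal{C}$, and you justify the zero initial condition as being sufficient-to-tolerance whereas the paper derives it as a necessary restriction imposed by the independence of the state subsequences.
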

\begin{proof}
The objective function in Equation \ref{eq:RNNObjectiveFunctionOfOutputsEnsemble}
computes the error in the system's performance during training, validation,
and testing phases as well as tracks its generalization metrics on
the actual application data during the inference phase. By the assumption,
$\mathcal{L}$ can be optimized. This implies that when $\mathcal{L}$
is acceptably close to an optimum, the observable output ensemble
from the RNN system approximates the ground truth output ensemble
within a commensurately acceptable tolerance bound:
\begin{align}
\left\langle \vec{y}[n]\right\rangle \left\rfloor _{0\leq n\leq N-1}\right. & \approx\left\langle \vec{v}[n]\right\rangle \left\rfloor _{0\leq n\leq N-1}\right.\label{eq:RNNOutputsEnsembleApproximatesTrueOutputsEnsemble}
\end{align}
Segmenting the RNN system's output sequence by the same procedure
as was used in Equation \ref{eq:RNNTrueOutputMultiplicationByVectorValuedBoxCarSamplerWindow}
to segment the ground truth output sequence gives: 
\begin{align}
\vec{y}_{m}[n]\left\rfloor _{0\leq n\leq N-1}\right. & =\vec{w}_{m}[n]\odot\vec{y}[n]\left\rfloor _{0\leq n\leq N-1}\right.\label{eq:RNNOutputMultiplicationByVectorValuedBoxCarSamplerWindow}\\
 & =\begin{cases}
\vec{y}[n], & j(m)\leq n\leq j(m)+K_{m}-1\\
\vec{0}, & \textrm{otherwise}
\end{cases}\label{eq:RNNSegmentOutput}
\end{align}
where $j(m)$ is given by Equation \ref{eq:VariableLengthSegmentFirstIndex}.
According to Equation \ref{eq:RNNSegmentOutput}, the segment-level
output subsequence, $\vec{y}_{m}[n]\left\rfloor _{0\leq n\leq N-1}\right.$,
in Equation \ref{eq:RNNOutputMultiplicationByVectorValuedBoxCarSamplerWindow}
will have non-zero values for the given value of the segment index,
$m$, where $0\leq m\leq M-1$, only when the index, $n$, is in the
range $j(m)\leq n\leq j(m)+K_{m}-1$. 

By the assumption that the segment-level ensembles of the ground truth
output subsequences are mutually independent, the objective function
in Equation \ref{eq:RNNObjectiveFunctionOfOutputsEnsemble} is separable
and can be expressed as a set of $M$ independent segment-level components,
$\left\{ \mathcal{L}_{m}\left(\left\langle \vec{y}_{m}[n]\right\rangle \left\rfloor _{0\leq n\leq N-1}\right.,\left\langle \vec{v}_{m}[n]\right\rangle \left\rfloor _{0\leq n\leq N-1}\right.\right)\right\} $,
$0\leq m\leq M-1$, combined by a suitable function, $\mathcal{C}$:
\begin{align}
\mathcal{L}\left(\left\langle \vec{y}[n]\right\rangle \left\rfloor _{0\leq n\leq N-1}\right.,\left\langle \vec{v}[n]\right\rangle \left\rfloor _{0\leq n\leq N-1}\right.\right) & =\mathcal{C}\left(\left\{ \mathcal{L}_{m}\left(\left\langle \vec{y}_{m}[n]\right\rangle \left\rfloor _{0\leq n\leq N-1}\right.,\left\langle \vec{v}_{m}[n]\right\rangle \left\rfloor _{0\leq n\leq N-1}\right.\right)\right\} \left\rfloor _{0\leq m\leq M-1}\right.\right)\label{eq:RNNObjectiveFunctionCombinationOfSegmentOutputsEnsembles}
\end{align}
Then by Equation \ref{eq:RNNOutputsEnsembleApproximatesTrueOutputsEnsemble}
and Equation \ref{eq:RNNObjectiveFunctionCombinationOfSegmentOutputsEnsembles},
\begin{align}
\left\langle \vec{y}_{m}[n]\right\rangle \left\rfloor _{0\leq n\leq N-1}\right. & \approx\left\langle \vec{v}_{m}[n]\right\rangle \left\rfloor _{0\leq n\leq N-1}\right.\label{eq:RNNSegmentOutputEnsembleApproximatesSegmentTrueOutputEnsemble}
\end{align}
for all values of the segment index, $m$, where $0\leq m\leq M-1$.
In other words, the tracking of the ground truth output by the observable
output of the RNN system at the entire $N$-sample ensemble level
must hold at the $K_{m}$-samples-long segment level, too, for all
segments.

Since $\mathcal{Q}(\left\langle \vec{r}[n]\right\rangle )$ is invertible,
\begin{align}
\left\langle \vec{r}_{m}[n]\right\rangle \left\rfloor _{0\leq n\leq N-1}\right. & =\mathcal{Q}^{-1}\left(\left\langle \vec{y}_{m}[n]\right\rangle \left\rfloor _{0\leq n\leq N-1}\right.\right)\label{eq:RNNSegmentWarpedStateEnsembleInverseFunctionOfSegmentOutputEnsemble}
\end{align}
and since the warping function, $G(z)$, in Equation \ref{eq:StandardRNNWarpedState}
is invertible, then for any value of the sample index, $n$,
\begin{align}
\vec{s}_{m}[n]\left\rfloor _{0\leq n\leq N-1}\right. & =G^{-1}\left(\vec{r}_{m}[n]\left\rfloor _{0\leq n\leq N-1}\right.\right)\label{eq:RNNSegmentStateInverseFunctionOfSegmentWarpedState}
\end{align}
According to Equation \ref{eq:RNNOutputsEnsembleApproximatesTrueOutputsEnsemble},
Equation \ref{eq:RNNSegmentWarpedStateEnsembleInverseFunctionOfSegmentOutputEnsemble},
and Equation \ref{eq:RNNSegmentStateInverseFunctionOfSegmentWarpedState},
$\left\langle \vec{y}_{m}[n]\right\rangle \left\rfloor _{0\leq n\leq N-1}\right.$,
$\left\langle \vec{r}_{m}[n]\right\rangle \left\rfloor _{0\leq n\leq N-1}\right.$,
and $\vec{s}_{m}[n]\left\rfloor _{0\leq n\leq N-1}\right.$ are all
functions of random variables. Let $\left\langle \vec{v}_{m}[n]\right\rangle \left\rfloor _{0\leq n\leq N-1}\right.$
and $\left\langle \vec{v}_{l}[n]\right\rangle \left\rfloor _{0\leq n\leq N-1}\right.$
be the ground truth output subsequence ensembles, belonging to any
two segments, whose indices are $m$ and $l$, respectively, with
$m\neq l$. By the assumption, $\left\langle \vec{v}_{m}[n]\right\rangle \left\rfloor _{0\leq n\leq N-1}\right.$
and $\left\langle \vec{v}_{l}[n]\right\rangle \left\rfloor _{0\leq n\leq N-1}\right.$
are independent random variables. Because the functions of independent
variables are also independent, it follows that at the segment level
the observable output signal subsequence ensembles, $\left\langle \vec{y}_{m}[n]\right\rangle \left\rfloor _{0\leq n\leq N-1}\right.$
and $\left\langle \vec{y}_{l}[n]\right\rangle \left\rfloor _{0\leq n\leq N-1}\right.$,
are independent, the readout signal subsequence ensembles, $\left\langle \vec{r}_{m}[n]\right\rangle \left\rfloor _{0\leq n\leq N-1}\right.$
and $\left\langle \vec{r}_{l}[n]\right\rangle \left\rfloor _{0\leq n\leq N-1}\right.$,
are independent, and the state signal subsequences, $\vec{s}_{m}[n]\left\rfloor _{0\leq n\leq N-1}\right.$
and $\vec{s}_{l}[n]\left\rfloor _{0\leq n\leq N-1}\right.$, are independent. 

The mutual independence of the state signal subsequences, $\vec{s}_{m}[n]\left\rfloor _{0\leq n\leq N-1}\right.$,
for different values of the segment index, $m$, places a restriction
on the initial conditions of these subsequences. Specifically, the
initial condition for the state signal subsequence of one segment
cannot be a function of samples belonging to either the state signal
subsequence or the input signal subsequence of another segment for
any value of the index, $n$. 

Performing the element-wise multiplication of the input sequence,
$\vec{x}[n]\left\rfloor _{0\leq n\leq N-1}\right.$, by the sampling
window, $\vec{w}_{m}[n]$, extracts a segment-level input sequence
with the index, $m$:
\begin{align}
\vec{x}_{m}[n]\left\rfloor _{0\leq n\leq N-1}\right. & =\vec{w}_{m}[n]\odot\vec{x}[n]\left\rfloor _{0\leq n\leq N-1}\right.\label{eq:RNNInputMultiplicationByVectorValuedBoxCarSamplerWindow}\\
 & =\begin{cases}
\vec{x}[n], & j(m)\leq n\leq j(m)+K_{m}-1\\
\vec{0}, & \textrm{otherwise}
\end{cases}\label{eq:RNNSegmentInput}
\end{align}
where $j(m)$ is given by Equation \ref{eq:VariableLengthSegmentFirstIndex}.
According to Equation \ref{eq:RNNSegmentInput}, the segment-level
input subsequence, $\vec{x}_{m}[n]\left\rfloor _{0\leq n\leq N-1}\right.$,
in Equation \ref{eq:RNNInputMultiplicationByVectorValuedBoxCarSamplerWindow}
will have non-zero values for the given value of the segment index,
$m$, where $0\leq m\leq M-1$, only when the index, $n$, is in the
range $j(m)\leq n\leq j(m)+K_{m}-1$. 

Due to recursion, the members of the state signal sequence, $\vec{s}_{m}[n]\left\rfloor _{0\leq n\leq N-1}\right.$,
in an RNN system can in general depend on the entire input signal
sequence assembly. However, since under the present assumptions the
segment-level state signal subsequences, $\vec{s}_{m}[n]\left\rfloor _{0\leq n\leq N-1}\right.$
and $\vec{s}_{l}[n]\left\rfloor _{0\leq n\leq N-1}\right.$, belonging
to different segments, are independent, the dependency of $\vec{s}_{m}[n]\left\rfloor _{0\leq n\leq N-1}\right.$
on the input signal must be limited to the same segment-level subsequence
(i.e., with segment index, $m$). If we define $\mathcal{F}\left(\left\langle \vec{x}_{m}[n]\right\rangle \left\rfloor _{j(m)\leq n\leq j(m)+K_{m}-1}\right.\right)$
as a map that transforms the segment-level input signal subsequence
assembly, $\left\langle \vec{x}_{m}[n]\right\rangle \left\rfloor _{0\leq n\leq N-1}\right.$,
into the segment-level state signal subsequence assembly, $\left\langle \vec{s}_{m}[n]\right\rangle \left\rfloor _{j(m)\leq n\leq j(m)+K_{m}-1}\right.$,
then the standard RNN system definition in Equation \ref{eq:StandardRNN}
at the $K_{m}$-samples-long segment level can be expressed as:
\begin{align}
\left\langle \vec{s}_{m}[n]\right\rangle \left\rfloor _{j(m)\leq n\leq j(m)+K_{m}-1}\right. & =\mathcal{F}\left(\left\langle \vec{x}_{m}[n]\right\rangle \left\rfloor _{j(m)\leq n\leq j(m)+K_{m}-1}\right.\right)\label{eq:RNNSegmentAllStateVariablesFunctionOfAllSegmentInputVariables}
\end{align}
Hence, for any $0\leq m,l\leq M-1$ with $m\neq l$, the restriction,
\begin{align}
\vec{s}_{m}[n=j(m)-1] & \perp\begin{cases}
\left\langle \vec{s}_{l}[n]\right\rangle , & j(l)\leq n\leq j(l)+K_{l}-1\\
\left\langle \vec{x}_{l}[n]\right\rangle , & j(l)\leq n\leq j(l)+K_{l}-1
\end{cases}\label{eq:RNNSegmenStateInitialConditionRestriction}
\end{align}
must be enforced in order to satisfy the independence of the segment-level
state signal subsequences. The only way to achieve this is to set
$\vec{s}_{m}[n=j(m)-1]$ to a random vector or to $\vec{0}$. The
latter choice is adopted here for simplicity.

Thus, substituting Equation \ref{eq:RNNSegmentInput} and Equation
\ref{eq:RNNSegmenStateInitialConditionRestriction} into Equation
\ref{eq:StandardRNN} yields the RNN system equations for an individual
segment:
\begin{align}
\vec{s}_{m}[n] & =\begin{cases}
W_{r}\vec{r}_{m}[n-1]+W_{x}\vec{x}_{m}[n]+\vec{\theta}_{s}, & j(m)\leq n\leq j(m)+K_{m}-1\\
\vec{0}, & \textrm{otherwise}
\end{cases}\label{eq:StandardRNNSegment}\\
\vec{r}_{m}[n] & =\begin{cases}
G(\vec{s}_{m}[n]), & j(m)\leq n\leq j(m)+K_{m}-1\\
\vec{0}, & \textrm{otherwise}
\end{cases}\label{eq:StandardRNNSegmentWarpedState}\\
\vec{s}_{m}[n=j(m)-1] & =\vec{0}\label{eq:RNNSegmentStateInitialCondition}\\
0 & \leq m\leq M-1\label{eq:RNNInterSegmentIndex}
\end{align}

\vspace{-2mm}
Making the index substitution, \vspace{-2mm}
\begin{align*}
n & \longrightarrow n+j(m)
\end{align*}
shifts the segment-level subsequences, $\vec{r}_{m}[n]\left\rfloor _{0\leq n\leq N-1}\right.$,
$\vec{s}_{m}[n]\left\rfloor _{0\leq n\leq N-1}\right.$, and $\vec{x}_{m}[n]\left\rfloor _{0\leq n\leq N-1}\right.$,
by $-j(m)$ samples:
\begin{align}
\vec{\tilde{x}}_{m}[n] & \equiv\vec{x}_{m}[n+j(m)]\left\rfloor _{0\leq n\leq N-1}\right.\nonumber \\
 & =\begin{cases}
\vec{x}[n+j(m)], & j(m)\leq n+j(m)\leq j(m)+K_{m}-1\\
\vec{0}, & \textrm{otherwise}
\end{cases}\nonumber \\
 & =\begin{cases}
\vec{x}[n+j(m)], & 0\leq n\leq K_{m}-1\\
\vec{0}, & \textrm{otherwise}
\end{cases}\label{eq:RNNSegmentInputShifted}\\
\vec{\tilde{r}}_{m}[n] & \equiv\vec{r}_{m}[n+j(m)]\left\rfloor _{0\leq n\leq N-1}\right.\label{eq:RNNSegmentWarpedStateShifted}\\
\vec{\tilde{s}}_{m}[n] & \equiv\vec{s}_{m}[n+j(m)]\left\rfloor _{0\leq n\leq N-1}\right.\nonumber \\
 & =\begin{cases}
W_{r}\vec{r}_{m}[n+j(m)-1]+W_{x}\vec{x}_{m}[n+j(m)]+\vec{\theta}_{s}, & j(m)\leq n+j(m)\leq j(m)+K_{m}-1\\
\vec{0}, & \textrm{otherwise}
\end{cases}\nonumber \\
 & =\begin{cases}
W_{r}\vec{\tilde{r}}_{m}[n-1]+W_{x}\vec{\tilde{x}}_{m}[n]+\vec{\theta}_{s}, & 0\leq n\leq K_{m}-1\\
\vec{0}, & \textrm{otherwise}
\end{cases}\label{eq:StandardRNNSegmentShifted}\\
\vec{\tilde{s}}_{m}[n=-1] & \equiv\vec{s}_{m}[n+j(m)=j(m)-1]=\vec{0}\label{eq:RNNSegmentStateInitialConditionShifted}
\end{align}
Simplified, these equations reduce to the form of the standard RNN
system, unrolled for $K_{m}$ steps, for any segment with the index,
$m$, where $0\leq m\leq M-1$:
\begin{align}
\vec{\tilde{s}}_{m}[n=-1] & =\vec{0}\label{eq:StandardRNNStepSegmentStateInitialCondition}\\
\vec{\tilde{s}}_{m}[n] & =\begin{cases}
W_{r}\vec{\tilde{r}}_{m}[n-1]+W_{x}\vec{\tilde{x}}_{m}[n]+\vec{\theta}_{s}, & 0\leq n\leq K_{m}-1\\
\vec{0}, & \textrm{otherwise}
\end{cases}\label{eq:StandardRNNStepSegment}\\
\vec{\tilde{r}}_{m}[n] & =\begin{cases}
G(\vec{\tilde{s}}_{m}[n]), & 0\leq n\leq K_{m}-1\\
\vec{0}, & \textrm{otherwise}
\end{cases}\label{eq:StandardRNNStepSegmentWarpedState}\\
\vec{\tilde{x}}_{m}[n] & =\begin{cases}
\vec{x}[n+j(m)], & 0\leq n\leq K_{m}-1\\
\vec{0}, & \textrm{otherwise}
\end{cases}\label{eq:StandardRNNStepSegmentInput}
\end{align}
It follows that the shifted segment-level state signal subsequences,
$\vec{\tilde{s}}_{m}[n]\left\rfloor _{0\leq n\leq K_{m}-1}\right.$,
for different values of the segment index, $m$, where $0\leq m\leq M-1$,
are mutually independent. In addition, from Equation \ref{eq:StandardRNNStepSegmentWarpedState},
Equation \ref{eq:StandardRNNStepSegment}, and Equation \ref{eq:StandardRNNStepSegmentInput},
the non-zero values of the resulting sequences, $\vec{\tilde{r}}_{m}[n]\left\rfloor _{0\leq n\leq K_{m}-1}\right.$,
$\vec{\tilde{s}}_{m}[n]\left\rfloor _{0\leq n\leq K_{m}-1}\right.$,
and $\vec{\tilde{x}}_{m}[n]\left\rfloor _{0\leq n\leq K_{m}-1}\right.$,
are confined to $0\leq n\leq K_{m}-1$ for any value of the segment
index, $m$, where $0\leq m\leq M-1$.

As the sample index, $n,$ traverses the segment-level range, $0\leq n\leq K_{m}-1$,
for every segment with the index, $m$, where $0\leq m\leq M-1$,
the input subsequence, $\vec{\tilde{x}}_{m}[n]$, takes on all the
available values of the input sequence, $\vec{x}[n]$, segment by
segment. Similarly to the original RNN system in Equation \ref{eq:StandardRNN},
the input signal, $\vec{\tilde{x}}_{m}[n]$ (the external driving
force), is the only independent variable of the RNN system, unrolled
for $K_{m}$ steps, in Equation \ref{eq:StandardRNNStepSegment}.
Together with the mutual independence of $\vec{\tilde{s}}_{m}[n]\left\rfloor _{0\leq n\leq K_{m}-1}\right.$
for different segments, this makes the computations of the RNN system,
unrolled for $K_{m}$ steps, generic for all segments. The only signal
that retains the dependence on the segment index, $m$, is the input.
Dropping the segment subscript, $m$, from the variables representing
the state signal and the readout signal results in the following prototype
formulation of the RNN system, unrolled for $K_{m}$ steps:\vspace{2mm}
\begin{align}
\vec{\tilde{s}}[n=-1] & =\vec{0}\label{eq:StandardRNNStepPrototypeStateInitialCondition}\\
\vec{\tilde{s}}[n] & =\begin{cases}
W_{r}\vec{\tilde{r}}[n-1]+W_{x}\vec{\tilde{x}}_{m}[n]+\vec{\theta}_{s}, & 0\leq n\leq K_{m}-1\\
\vec{0}, & \textrm{otherwise}
\end{cases}\label{eq:StandardRNNStepPrototype}\\
\vec{\tilde{r}}[n] & =\begin{cases}
G(\vec{\tilde{s}}[n]), & 0\leq n\leq K_{m}-1\\
\vec{0}, & \textrm{otherwise}
\end{cases}\label{eq:StandardRNNStepPrototypeWarpedState}\\
\vec{\tilde{x}}_{m}[n] & =\begin{cases}
\vec{x}[n+j(m)], & 0\leq n\leq K_{m}-1\\
\vec{0}, & \textrm{otherwise}
\end{cases}\label{eq:StandardRNNStepPrototypeInput}\\
0 & \leq m\leq M-1\label{eq:StandardRNNStepPrototypeInterSegmentIndex}
\end{align}
where $j(m)$ is given by Equation \ref{eq:VariableLengthSegmentFirstIndex}.

The same prototype variable-length RNN computation, unrolled for $K_{m}$
steps, can process all segments, one at a time. After initializing
the segment's state signal using Equation \ref{eq:StandardRNNStepPrototypeStateInitialCondition}
and selecting the input samples for the segment using Equation \ref{eq:StandardRNNStepPrototypeInput},
Equation \ref{eq:StandardRNNStepPrototype} and Equation \ref{eq:StandardRNNStepPrototypeWarpedState}
are evaluated for $K_{m}$ steps, $0\leq n\leq K_{m}-1$. This procedure
can then be applied to the next segment using the same computational
module, and then to the next segment, and so on, until the inputs
comprising all $M$ segments have been processed. Moreover, the mutual
independence of the segments facilitates parallelism, whereby the
computation of $\vec{\tilde{s}}_{m}[n]$ , $\vec{\tilde{r}}_{m}[n]$,
and $\vec{y}_{m}[n]$ for $0\leq n\leq K_{m}-1$ can be carried out
for all $M$ segments concurrently.\pagebreak{}
\end{proof}
\vspace{2mm}

\begin{rem}
\label{Proposition:RNNUnfoldingUnrolling:Remark:NoAttentionNetworksNoMultipleLSTMNetworks}Proposition
\ref{Proposition:RNNUnfoldingUnrolling} and its proof do not formally
address advanced RNN architectures, such as Gated Recurrent Unit (GRU),
Attention Networks, and complex models comprised of multiple LSTM
networks.
\end{rem}
\vspace{2mm}

\begin{rem}
\label{Proposition:RNNUnfoldingUnrolling:Remark:NoInputIndependenceRequirement}While
the segment-level state signal subsequences, $\vec{\tilde{s}}_{m}[n]\left\rfloor _{0\leq n\leq K_{m}-1}\right.$,
are independent, there is no independence requirement on the input
signal subsequences, $\vec{\tilde{x}}_{m}[n]$, belonging to the different
segments. It has been shown that dependencies in the input signal
can be de-correlated by appropriately trained hidden layer weights,
$W_{x}$, thus maintaining the independence of $\vec{\tilde{s}}_{m}[n]\left\rfloor _{0\leq n\leq K_{m}-1}\right.$
\citep{sanger:optimal1989,journals/tnn/SherstinskyP96}.
\end{rem}
\vspace{2mm}

\begin{rem}
\label{Proposition:RNNUnfoldingUnrolling:Remark:Undersampling}It
is important to emphasize that no IIR-to-FIR conversion method is
optimal in the absolute sense. Finding an optimal FIR approximation
to an IIR system can only be done with respect to a certain measure
of fidelity and performance. In practice, one must settle for an approximation
to the ideal form of the output signal. The success of an approximation
technique depends on the degree to which the resulting FIR system
can be adapted to fit the specific data distribution and achieve acceptable
quality metrics for the given application's requirements. 

Unrolling (or unfolding) for a finite number of steps is a standard,
straightforward technique for approximating RNNs by FIR sequences.
However, due to the truncation inherent in limiting the number of
steps, the resulting unfolded RNN model introduces artificial discontinuities
in the approximated version of the target output sequence. In general,
the more steps are included in the unrolled RNN subsequence, the closer
it can get to the desired output samples, but the less efficient the
system becomes, due to the increased number of computations. Nevertheless,
if the underlying distribution governing the application generates
the sequence under consideration as a series of independent segments
(subsequences), then by Proposition \ref{Proposition:RNNUnfoldingUnrolling},
an unfolded RNN model aligned with each segment can be trained to
reproduce outputs from inputs in a way that aims to satisfy the appropriate
criteria of merit faithfully. In this sense, Proposition \ref{Proposition:RNNUnfoldingUnrolling}
for RNNs loosely resembles in spirit the Sampling Theorem in the field
of Discrete-Time Signal Processing \citep{AVO89}. The method of unrolling
is also applicable to the scenarios where attention can be restricted
to those ``present'' sections of the output that are influenced
to an arbitrarily small extent by the portions of the ``past'' or
the ``future'' of the input beyond some arbitrarily large but finite
step \citep{scd/mit/Bose1956}. As a matter of fact, in certain situations,
the raw data set may be amenable to pre-processing, without losing
much of the essential information. Suppose that after a suitable cleanup,
treating the overall sequence as a collection of independent segments
becomes a reasonable assumption. Then by Proposition \ref{Proposition:RNNUnfoldingUnrolling},
the adverse effects of truncation can be reduced by adjusting the
number of samples comprising the window of the unrolled RNN system.
Moreover, whenever Proposition \ref{Proposition:RNNUnfoldingUnrolling}
applies, the segments can be processed by the unrolled RNN system
in any order (because they are assumed to be independent). This flexibility
is utilized by the modules that split the original data set into segments
and feed the batches of segmented training samples to the computational
core of system.

Conversely, if the assumptions of Proposition \ref{Proposition:RNNUnfoldingUnrolling}
are violated, then truncating the unrolling will prevent the model
from adequately fitting the ground truth output. To illustrate this
point, suppose that an RNN system, unrolled for a relatively few steps,
is being used to fit the target sequence that exhibits extremely long-range
dependencies. The unrolled RNN subsequence will be trained under the
erroneous assumptions, expecting the ground truth to be a series of
short independent subsequences. However, because of its relatively
narrow window, this RNN subsequence will not be able to encompass
enough samples to capture the dependencies present in the actual data.
Under-sampling the distribution will limit the flow of information
from the training samples to the parameters of the model, leaving
it in the constant state of making poor predictions. As a symptom,
the model will repeatedly encounter unexpected variations during training,
causing the objective function to oscillate, never converging to an
adequate optimum. During inference, the generated sequence will suffer
from severe jitter and distortion when compared to the expected output.
\end{rem}
\vspace{2mm}

\begin{rem}
\label{Proposition:RNNUnfoldingUnrolling:Remark:WeakDependenceForModeling}According
to Proposition \ref{Proposition:RNNUnfoldingUnrolling}, the RNN unrolling
technique is justified by partitioning a single output sequence into
multiple independent subsequences and placing restrictions on the
initialization of the state between subsequences. However, adhering
to these conditions may be problematic in terms of modeling sequences
in practical applications. Oftentimes, the output subsequences exhibit
some inter-dependence and/or the initial state of one subsequence
is influenced by the final state of another subsequence. In practice,
if the choice for the initial conditions of the state of subsequences
is consistent with the process by which the application generates
the samples of the input sequence, then a favorable subdivision of
the output sequence into acceptably independent subsequences can be
found empirically through experimentation and statistical analysis.\pagebreak{}
\end{rem}

\section{RNN Training Difficulties\label{sec:RNN-Training-Difficulties}}

Proposition \ref{Proposition:RNNUnfoldingUnrolling} establishes that
Equations \ref{eq:StandardRNNStepPrototypeStateInitialCondition}
-- \ref{eq:StandardRNNStepPrototypeInterSegmentIndex} together with
Equation \ref{eq:VariableLengthSegmentFirstIndex} specify the truncated
unrolled RNN system that realizes the standard RNN system, given by
Equation \ref{eq:StandardRNN} and Equation \ref{eq:StandardRNNWarpedState}.
We now segue to the analysis of the training technique for obtaining
the weights in the truncated unrolled RNN system, with the focus on
Equation \ref{eq:StandardRNNStepPrototype} and Equation \ref{eq:StandardRNNStepPrototypeWarpedState}.

Once the infinite RNN sequence in Equation \ref{eq:StandardRNN} is
truncated (or unrolled to a finite length), the resulting system,
given in Equation \ref{eq:StandardRNNStepPrototype}, becomes inherently
stable. However, RNN systems are problematic in practice, despite
their stability. During training, they suffer from the well-documented
issues, known as ``vanishing gradients'' and ``exploding gradients''
\citep{hochreiter1997long,Hochreiter01gradientflow,pascanu2013difficulty}.
These difficulties become pronounced when the dependencies in the
target subsequence span a large number of samples, requiring the window
of the unrolled RNN model to be commensurately wide in order to capture
these long-range dependencies.

Truncated unrolled RNN systems, such as Equation \ref{eq:StandardRNNStepPrototype},
are commonly trained using ``Back Propagation Through Time'' (BPTT),
which is the ``Back Propagation'' technique adapted for sequences
\citep{Werbos:88gasmarket,werbos:bptt1990,sutskever2012training,phd/ca/pascanu2014}.
The essence of Back Propagation is the repeated application of the
chain rule of differentiation. Computationally, the action of unrolling
Equation \ref{eq:StandardRNN} for $K$ steps amounts to converting
its associated directed graph having a delay and a cycle, into a directed
acyclic graph (DAG) corresponding to Equation \ref{eq:StandardRNNStepPrototype}.
For this reason, while originally Back Propagation was restricted
to feedforward networks only, subsequently, it has been successfully
applied to recurrent networks by taking advantage of the very fact
that for every recurrent network there exists an equivalent feedforward
network with identical behavior for a finite number of steps \citep{rhw1985,RumelMc:86,reason:MinPap90a}.

As a supervised training algorithm, BPTT utilizes the available $\vec{\tilde{x}}_{m}[n]$
and $\vec{\tilde{r}}[n]$ data pairs (or the respective pairs of some
mappings of these quantities) in the training set to compute the parameters
of the system, $\Theta$, defined in Equation \ref{eq:StandardRNNParameters},
so as to optimize an objective function, $E$, which depends on the
readout  signal, $\vec{\tilde{r}}[n]$, at one or more values of the
index, $n$. If Gradient Descent (or another ``gradient type'' algorithm)
is used to optimize $E$, then BPTT provides a consistent procedure
for deriving the elements of $\frac{\partial E}{\partial\Theta}$
through a repeated application of the chain rule\footnote{The name ``Back Propagation Through Time'' reflects the origins
of recurrent neural networks in continuous-time domain and differential
equations. While not strictly accurate, given the discrete nature
of the RNN system under consideration, ``Back Propagation Through
Time'' is easy to remember, carries historical significance, and
should not be the source of confusion.}.

By assuming that the conditions of Proposition \ref{Proposition:RNNUnfoldingUnrolling}
apply, the objective function, $E$, takes on the same form for all
segments. Let us now apply BPTT to Equation \ref{eq:StandardRNNStepPrototype}.
Suppose that $E$ depends on the readout  signal, $\vec{\tilde{r}}[n]$,
at some specific value of the index, $n$. Then it is reasonable to
wish to measure the total gradient of $E$ with respect to $\vec{\tilde{r}}[n]$:
\begin{align}
\vec{\chi}[n] & \equiv\vec{\nabla}_{\vec{\tilde{r}}[n]}E=\frac{\partial E}{\partial\vec{\tilde{r}}[n]}\label{eq:StandardRNNStepPrototypeDelEDelWarpedStateTotal}
\end{align}
Since $\vec{\tilde{r}}[n]$ is explicitly dependent on $\vec{\tilde{s}}[n]$,
it follows that $\vec{\tilde{s}}[n]$ also influences $E$, and one
should be interested in measuring the total gradient of $E$ with
respect to $\vec{\tilde{s}}[n]$:
\begin{align}
\vec{\psi}[n] & \equiv\vec{\nabla}_{\vec{\tilde{s}}[n]}E=\frac{\partial E}{\partial\vec{\tilde{s}}[n]}\label{eq:StandardRNNStepPrototypeDelEDelStateTotal}
\end{align}
Quite often in practice, the overall objective function is defined
as the sum of separate contributions involving the readout  signal,
$\vec{\tilde{r}}[n]$, at each individual value of the index, $n$:
\begin{align}
E & =\sum_{n=0}^{K_{m}-1}E(\vec{\tilde{r}}[n])\label{eq:StandardRNNStepPrototypeObjectiveFunctionSumOfImmediateWarpedStateComponents}
\end{align}
Because of the presence of the individual penalty terms, $E(\vec{\tilde{r}}[n])$,
in Equation \ref{eq:StandardRNNStepPrototypeObjectiveFunctionSumOfImmediateWarpedStateComponents}
for the overall objective function of the system, it may be tempting
to use the chain rule directly with respect to $\vec{\tilde{r}}[n]$
in isolation and simply conclude that $\vec{\chi}[n]$ in Equation
\ref{eq:StandardRNNStepPrototypeDelEDelWarpedStateTotal} is equal
to $\frac{\partial E(\vec{\tilde{r}}[n])}{\partial\vec{\tilde{r}}[n]}\odot\frac{dG_{d}(\vec{z})}{d\vec{z}}\rfloor_{z=\vec{\tilde{s}}[n]}$,
where the $\odot$ operator denotes the element-wise vector product.
However, this would miss an important additional component of the
gradient with respect to the state signal. The subtlety is that for
an RNN, the state signal, $\vec{\tilde{s}}[n]$, at $n=k$ also influences
the state signal, $\vec{\tilde{s}}[n]$, at $n=k+1$ \citep{phd/de/Graves2008,pascanu2013difficulty}.
The dependency of $\vec{\tilde{s}}[n+1]$ on $\vec{\tilde{s}}[n]$
through $\vec{\tilde{r}}[n]$ becomes apparent by rewriting Equation
\ref{eq:StandardRNNStepPrototype} at the index, $n+1$: 
\begin{align}
\vec{\tilde{s}}[n+1] & =W_{r}\vec{\tilde{r}}[n]+W_{x}\vec{\tilde{x}}_{m}[n+1]+\vec{\theta}_{s}\label{eq:StandardRNNStepPrototypeIcrementedIndex}\\
\vec{\tilde{r}}[n] & =G(\vec{\tilde{s}}[n])\nonumber \\
\vec{\tilde{r}}[n+1] & =G(\vec{\tilde{s}}[n+1])\nonumber 
\end{align}
Hence, accounting for both dependencies, while applying the chain
rule, gives the expressions for the total partial derivative of the
objective function with respect to the readout  signal and the state
signal at the index, $n$:
\begin{align}
\vec{\chi}[n] & =\frac{\partial E(\vec{\tilde{r}}[n])}{\partial\vec{\tilde{r}}[n]}+W_{r}\vec{\psi}[n+1]\label{eq:StandardRNNStepPrototypeDelEDelWarpedState}\\
\vec{\psi}[n] & =\vec{\chi}[n]\odot\frac{dG(\vec{z})}{d\vec{z}}\rfloor_{z=\vec{\tilde{s}}[n]}\label{eq:StandardRNNStepPrototypeDelEDelState}\\
 & =\left(\frac{\partial E(\vec{\tilde{r}}[n])}{\partial\vec{\tilde{r}}[n]}+W_{r}\vec{\psi}[n+1]\right)\odot\frac{dG(\vec{z})}{d\vec{z}}\rfloor_{z=\vec{\tilde{s}}[n]}\label{eq:StandardRNNStepPrototypeDelEDelStateExpanded}
\end{align}
Equation \ref{eq:StandardRNNStepPrototypeDelEDelWarpedState} and
Equation \ref{eq:StandardRNNStepPrototypeDelEDelStateExpanded} show
that the total partial derivatives of the objective function form
two sequences, which progress in the ``backward'' direction of the
index, $n$. These sequences represent the dual counterparts of the
sequence generated by unrolling Equation \ref{eq:StandardRNNStepPrototype}
in the ``forward'' direction of the index, $n$. Therefore, just
as Equation \ref{eq:StandardRNNStepPrototype} requires the initialization
of the segment's state signal using Equation \ref{eq:StandardRNNStepPrototypeStateInitialCondition},
the sequence formed by the total partial derivative of the objective
function with respect to the state signal (commonly designated as
the ``the error gradient'') requires that Equation \ref{eq:StandardRNNStepPrototypeDelEDelWarpedState}
must also be initialized:
\begin{align}
\vec{\psi}[n=K_{m}] & =\vec{0}\label{eq:StandardRNNStepPrototypeDelEDelStateInitialCondition}
\end{align}
Applying the chain rule to Equation \ref{eq:StandardRNNStepPrototype}
and using Equation \ref{eq:StandardRNNStepPrototypeDelEDelStateExpanded},
gives the expressions for the derivatives of the model's parameters:
\begin{align}
\frac{\partial E}{\partial\Theta}[n] & =\left\{ \frac{\partial E}{\partial W_{r}}[n],\frac{\partial E}{\partial W_{x}}[n],\frac{\partial E}{\partial\vec{\theta}_{s}}[n]\right\} \label{eq:StandardRNNStepPrototypeDelEDelTheta}\\
\frac{\partial E}{\partial W_{r}}[n] & =\vec{\psi}[n]\vec{\tilde{r}}^{T}[n-1]\label{eq:StandardRNNStepPrototypeDelEDelWeightMatrixWarpedState}\\
\frac{\partial E}{\partial W_{x}}[n] & =\vec{\psi}[n]\vec{\tilde{x}}_{m}^{T}[n]\label{eq:StandardRNNStepPrototypeDelEDelWeightMatrixInput}\\
\frac{\partial E}{\partial\vec{\theta}_{s}}[n] & =\vec{\psi}[n]\label{eq:StandardRNNStepPrototypeDelEDelBiasVector}\\
\frac{dE}{d\Theta} & =\sum_{n=0}^{K_{m}-1}\frac{\partial E}{\partial\Theta}[n]\label{eq:StandardRNNDEDModelParametersSegmentTotal}
\end{align}
Note that for an RNN cell, unrolled for $K_{m}$ steps in order to
cover a segment containing $K_{m}$ training samples, the same set
of the model parameters, $\Theta$, is shared by all the steps. This
is because $\Theta$ is the parameter of the RNN system as a whole.
Consequently, the total derivative of the objective function, $E$,
with respect to the model parameters, $\Theta$, has to include the
contributions from all steps of the unrolled sequence. This is captured
in Equation \ref{eq:StandardRNNDEDModelParametersSegmentTotal}, which
can now be used as part of optimization by Gradient Descent. Another
key observation is that according to Equation \ref{eq:StandardRNNStepPrototypeDelEDelWeightMatrixWarpedState},
Equation \ref{eq:StandardRNNStepPrototypeDelEDelWeightMatrixInput},
and Equation \ref{eq:StandardRNNStepPrototypeDelEDelBiasVector},
all of the quantities essential for updating the parameters of the
system, $\Theta$, during training are directly proportional to $\vec{\psi}[n]$. 

When the RNN system is trained using BPTT, the error gradient signal
flows in the reverse direction of the index, $n$, from that of the
sequence itself. Let $\left\langle \vec{\psi}[k]\right\rangle \left\rfloor _{0\leq k<n}\right.$
denote all terms of the sequence, each of whose elements, $\vec{\psi}[k]$,
is the gradient of $E$ with respect to the state signal, $\vec{\tilde{s}}[k]$,
at the index, $k$, for all $k<n$, ending at $\vec{\psi}[k=0]$,
the start of the sequence. Then Equation \ref{eq:StandardRNNStepPrototypeDelEDelStateExpanded}
reveals that $\vec{\psi}[n]$, the gradient of $E$ with respect to
the state signal, $\vec{\tilde{s}}[n]$, at some value of the index,
$n$, no matter how large, can influence the entire ensemble, $\left\langle \vec{\psi}[k]\right\rangle \left\rfloor _{0\leq k<n}\right.$.
Furthermore, by Proposition \ref{Proposition:RNNUnfoldingUnrolling},
$\vec{\psi}[n]$ depends on the truncated ensemble, $\left\langle \vec{\psi}[k]\right\rangle \left\rfloor _{n<k\leq K_{m}-1}\right.$.
Thus, of a particular interest is the fraction of $\vec{\psi}[n]$
that is retained from back propagating $\vec{\psi}[l]$, where $l\gg n$.
This component of the gradient of the objective function is responsible
for adjusting the model's parameters, $\Theta$, in a way that uses
the information available at one sample to reduce the cost of the
system making an error at a distant sample. If these types of contributions
to $\vec{\psi}[n]\left\rfloor _{0\leq n\leq K_{m}-1}\right.$ are
well-behaved numerically, then the model parameters learned by using
the Gradient Descent optimization procedure will able to incorporate
the long-range interactions among the samples in the RNN window effectively
during inference.

Expanding the recursion in Equation \ref{eq:StandardRNNStepPrototypeDelEDelStateExpanded}
from the step with the index, $n,$ to the step with the index, $l\leq K_{m}-1$,
where $l\gg n$, gives:
\begin{align}
\frac{\partial\vec{\psi}[n]}{\partial\vec{\psi}[l]} & =\prod_{k=n+1}^{l}W_{r}\odot\frac{dG(\vec{z})}{d\vec{z}}\rfloor_{z=\vec{\tilde{s}}[k]}\label{eq:StandardRNNStepPrototypeDelEDelStateFlowRateLongRange}
\end{align}
From Equation \ref{eq:StandardRNNStepPrototypeDelEDelStateFlowRateLongRange},
the magnitude of the overall Jacobian matrix, $\frac{\partial\vec{\psi}[n]}{\partial\vec{\psi}[l]}$,
depends on the product of $l-n$ individual Jacobian matrices, $W_{r}\odot\frac{dG(\vec{z})}{d\vec{z}}\rfloor_{z=\vec{\tilde{s}}[k]}$\footnote{By convention, the element-wise multiplication by a vector is equivalent
to the multiplication by a diagonal matrix, in which the elements
of the vector occupy the main diagonal.}. Even though the truncated unrolled RNN system is guaranteed to be
stable by design, since in the case of long-range interactions the
unrolled window size, $K_{m}$, and the distance between the samples
of interest, $l-n$, are both large, the stability analysis is helpful
in estimating the magnitude of $\frac{\partial\vec{\psi}[n]}{\partial\vec{\psi}[l]}$
in Equation \ref{eq:StandardRNNStepPrototypeDelEDelStateFlowRateLongRange}.
If all eigenvalues, $\mu_{i}$, of $W_{r}$ satisfy the requirement
for stability, $0<\mu_{i}<1$, then $\left\Vert W_{r}\right\Vert <1$.
Combined with the fact that $\left\Vert \frac{dG(\vec{z})}{d\vec{z}}\right\Vert <1$
(which follows from the choice of the warping function advocated in
Section \ref{sec:The-Roots-of-RNN}), this yields:
\begin{align}
\left\Vert \frac{\partial\vec{\psi}[n]}{\partial\vec{\psi}[l]}\right\Vert  & \sim\left(\left\Vert W_{r}\right\Vert \cdot\left\Vert \frac{dG(\vec{z})}{d\vec{z}}\right\Vert \right)^{l-n}\\
 & \sim\left\Vert W_{r}\right\Vert ^{l-n}\cdot\left\Vert \frac{dG(\vec{z})}{d\vec{z}}\right\Vert ^{l-n}\approx0\label{eq:StandardRNNStepPrototypeDelEDelStateFlowRateLongRangeVanishingGradient}
\end{align}
Conversely, if at least one eigenvalue of $W_{r}$ violates the requirement
for stability, the term $\left\Vert W_{r}\right\Vert ^{l-n}$ will
grow exponentially. This can lead to two possible outcomes for the
RNN system in Equation \ref{eq:StandardRNNStepPrototype}. In one
scenario, as the state signal, $\vec{\tilde{s}}[n]$, grows, the elements
of the readout signal, $\vec{\tilde{r}}[n]$, eventually saturate
at the ``rails'' (the flat regions) of the warping function. Since
in the saturation regime, $\frac{dG(\vec{z})}{d\vec{z}}=\vec{0}$,
the result is again $\left\Vert \frac{\partial\vec{\psi}[n]}{\partial\vec{\psi}[l]}\right\Vert \approx0$.
In another, albeit rare, scenario, the state signal, $\vec{\tilde{s}}[n]$,
is initially biased in the quasi-linear region of the warping function,
where $\frac{dG(\vec{z})}{d\vec{z}}\neq\vec{0}$. If the input, $\vec{\tilde{x}}_{m}[n]$,
then guides the system to stay in this mode for a large number of
steps, $\left\Vert \frac{\partial\vec{\psi}[n]}{\partial\vec{\psi}[l]}\right\Vert $
will grow, potentially resulting in an overflow. Consequently, training
the standard RNN system on windows spanning many data samples using
Gradient Descent is hampered by either vanishing or exploding gradients,
regardless of whether or not the system is large-signal stable. In
either case, as long as Gradient Descent optimization is used for
training the RNN, regulating $\vec{\psi}[n]$ will be challenging
in practice, leaving no reliable mechanism for updating the parameters
of the system, $\Theta$, in a way that would enable the trained RNN
model to infer both $\vec{\tilde{r}}[n]$ and $\vec{\tilde{r}}[l\gg n]$
optimally\footnote{A detailed treatment of the difficulties encountered in training RNNs
is presented in \citep{pascanu2013difficulty}. The problem is defined
using Equation \ref{eq:StandardRNNStepPrototypeObjectiveFunctionSumOfImmediateWarpedStateComponents},
which leads to the formulas for the gradients of the individual objective
function at each separate step with respect to the model's parameters.
Then the behavior of these formulas as a function of the index of
the step is analyzed, following the approach in \citep{williams:recurrent}.
In contrast, the present analysis follows the method described in
\citep{Hochreiter01gradientflow} and \citep{Werbos:88gasmarket,werbos:bptt1990,graves:05ijcnn,phd/de/Graves2008}.
The total gradient of the objective function with respect to the state
signal at each step is pre-computed using Equation \ref{eq:StandardRNNStepPrototypeDelEDelStateExpanded}.
Then the behavior of the members of this sequence as a function of
the number of steps separating them is analyzed. The results and conclusions
of these dual approaches are, of course, identical.}. The most effective solution so far is the Long Short-Term Memory
(LSTM) cell architecture \citep{hochreiter1997long,phd/de/Graves2008,pascanu2013difficulty,phd/ca/pascanu2014}.

\section{From RNN to Vanilla LSTM Network\label{sec:From-RNN-to-Vanilla-LSTM-Network}}

The Long Short-Term Memory (LSTM) network was invented with the goal
of addressing the vanishing gradients problem. The key insight in
the LSTM design was to incorporate nonlinear, data-dependent controls
into the RNN cell, which can be trained to ensure that the gradient
of the objective function with respect to the state signal (the quantity
directly proportional to the parameter updates computed during training
by Gradient Descent) does not vanish \citep{hochreiter1997long}.
The LSTM cell can be rationalized from the canonical RNN cell by reasoning
about Equation \ref{eq:CanonicalRNN} and introducing changes that
make the system robust and versatile. 

In the RNN system, the observable readout signal of the cell is the
warped version of the cell's state signal itself. A weighted copy
of this warped state signal is fed back from one step to the next
as part of the update signal to the cell's state. This tight coupling
between the readout signal at one step and the state signal at the
next step directly impacts the gradient of the objective function
with respect to the state signal. This impact is compounded during
the training phase, culminating in the vanishing/exploding gradients.

Several modifications to the cell's design can be undertaken to remedy
this situation. As a starting point, it is useful to separate the
right hand side of Equation \ref{eq:CanonicalRNN} (the cell's updated
state signal at a step with the index, $n$) into two parts\footnote{In the remainder of this document, the prototype segment notation
of Equation \ref{eq:StandardRNNStepPrototype} is being omitted for
simplicity. Unless otherwise specified, it is assumed that all operations
are intended to be on the domain of a segment, where the sample index,
$n,$ traverses the steps of the segment, $0\leq n\leq K_{m}-1$,
for every segment with the index, $m$, in $0\leq m\leq M-1$.}:
\begin{align}
\vec{s}[n] & =\vec{\mathcal{F}_{s}}\left(\vec{s}[n-1]\right)+\vec{\mathcal{F}_{u}}\left(\vec{r}[n-1],\vec{x}[n]\right)\label{eq:CanonicalRNNRHSPartsStateUpdate}\\
\vec{r}[n] & =G_{d}(\vec{s}[n])\label{eq:CanonicalRNNRHSWarpedState}\\
\vec{\mathcal{F}_{s}}\left(\vec{s}[n-1]\right) & =W_{s}\vec{s}[n-1]\label{eq:CanonicalRNNRHSPartState}\\
\vec{\mathcal{F}_{u}}\left(\vec{r}[n-1],\vec{x}[n]\right) & =W_{r}\vec{r}[n-1]+W_{x}\vec{x}[n]+\vec{\theta}_{s}\label{eq:CanonicalRNNRHSPartReadoutUpdate}
\end{align}
where $G_{d}(\vec{z})$ is the hyperbolic tangent as before\footnote{Throughout this document, the subscript ``c'' stands for ``control'',
while the subscript ``d'' stands for ``data''.}. The first part, $\vec{\mathcal{F}_{s}}\left(\vec{s}[n-1]\right)$,
carries forward the contribution from the state signal at the previous
step. The second part, $\vec{\mathcal{F}_{u}}\left(\vec{r}[n-1],\vec{x}[n]\right)$,
represents the update information, consisting of the combination of
the readout signal from the previous step and the input signal (the
external driving force) at the current step (plus the bias vector,
$\vec{\theta}_{s}$)\footnote{In discrete systems, the concept of time is only of historical significance.
The proper terminology would be to use the word ``adjacent'' when
referring to quantities at the neighboring steps. Here, the terms
``previous'', ``current'', and ``next'' are sometimes used only
for convenience purposes. The RNN systems can be readily unrolled
in the opposite direction, in which case all indices are negated and
the meaning of ``previous'' and ``next'' is reversed. As a matter
of fact, improved performance has been attained with bi-directional
processing in a variety of applications \citep{schuster97bidirectional,phd/de/Graves2008,conf/interspeech/SakSB14,DBLP:journals/corr/Lipton15,Greff2015LSTMAS,journals/corr/ZhouCWLX16}.
Moreover, if the input data is prepared ahead of time and is made
available to the system in its entirety, then the causality restriction
can be relaxed altogether. This can be feasible in applications, where
the entire training data set or a collection of independent training
data segments is gathered before processing commences. Non-causal
processing (i.e., a technique characterized by taking advantage of
the input data ``from the future'') can be advantageous in detecting
the presence of ``context'' among data samples. Utilizing the information
at the ``future'' steps as part of context for making decisions
at the ``current'' step is often beneficial for analyzing audio,
speech, and text. }. According to Equation \ref{eq:CanonicalRNNRHSPartsStateUpdate},
the state signal blends both sources of information in equal proportions
at every step. These proportions can be made adjustable by multiplying
the two quantities by the special ``gate'' signals,\\ $\vec{g}_{cs}[n]$
(``control state'') and $\vec{g}_{cu}[n]$ (``control update''),
respectively:
\begin{align}
\vec{s}[n] & =\vec{g}_{cs}[n]\odot\vec{\mathcal{F}_{s}}\left(\vec{s}[n-1]\right)+\vec{g}_{cu}[n]\odot\vec{\mathcal{F}_{u}}\left(\vec{r}[n-1],\vec{x}[n]\right)\label{eq:CanonicalRNNPartsStateUpdateGated}\\
\vec{0} & \leq\vec{g}_{cs}[n],\vec{g}_{cu}[n]\leq\vec{1}\label{eq:CanonicalRNNControlStateControlUpdateGateRange}
\end{align}

\begin{figure}[tph]
\includegraphics[scale=0.47]{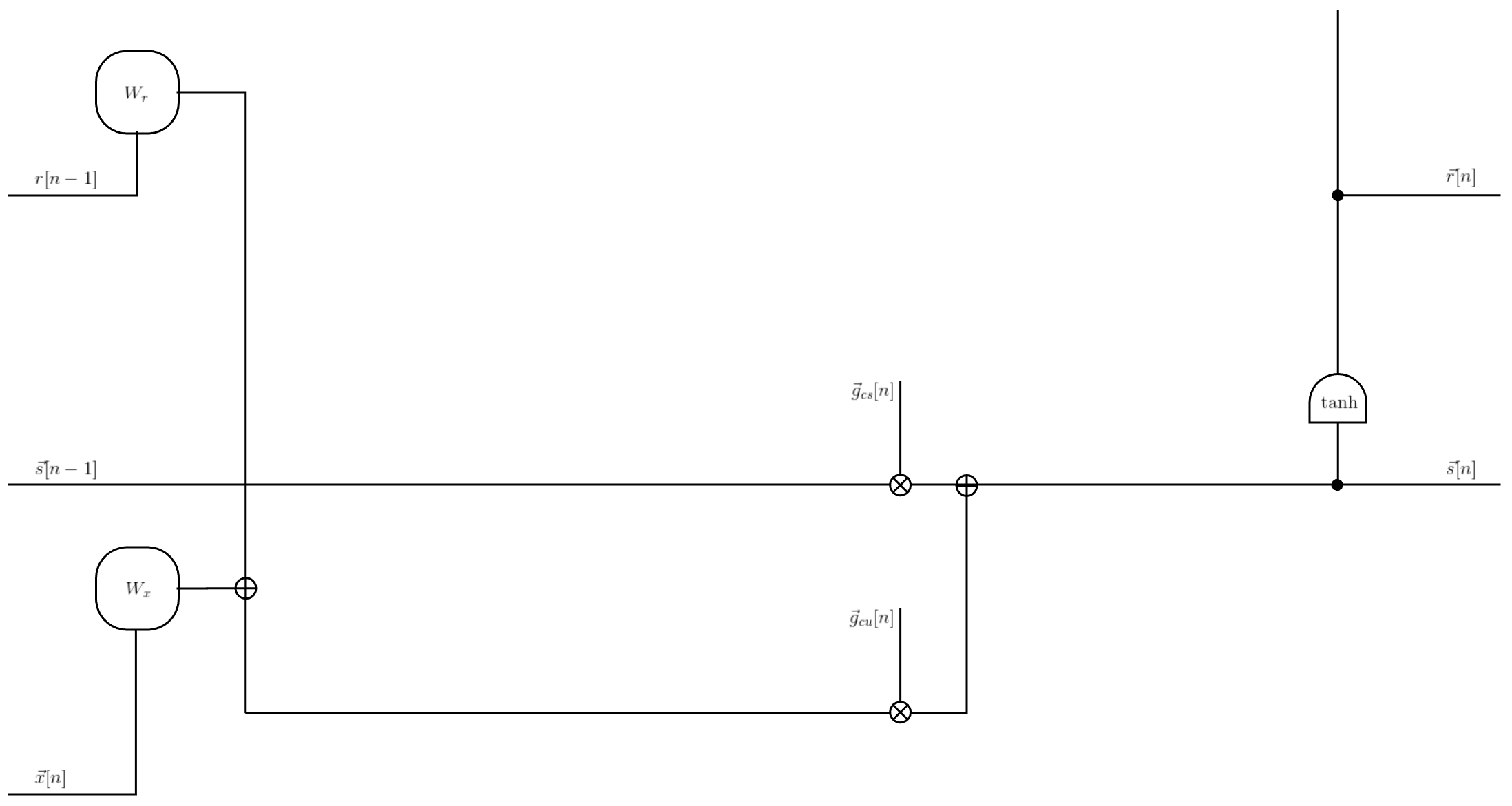}

\caption{Expanding the canonical RNN system by adding the \textquotedblleft control
state\textquotedblright{} gate, $\vec{g}_{cs}[n]$, to control the
amount of the state signal, retained from the previous step and the
\textquotedblleft control update\textquotedblright{} gate, $\vec{g}_{cu}[n]$,
to regulate the amount of the update signal -- to be injected into
the state signal at the current step.\label{fig:CanonicalRNNToLSTMCellTransformationGcsGcuAdded}}
\end{figure}
The elements of gate signals are non-negative fractions. The shorthand
notation, $\vec{g}[n]\in[\vec{0},\vec{1}]$ (alternatively, $\vec{0}\leq\vec{g}[n]\leq\vec{1}$),
means that the values of all elements of a vector-valued gate signal,
$\vec{g}[n]$, at a step with the index, $n$, lie on a closed segment
between $0$ and $1$.

The gate signals, $\vec{g}_{cs}[n]$ and $\vec{g}_{cu}[n]$, in Equation
\ref{eq:CanonicalRNNPartsStateUpdateGated} and Equation \ref{eq:CanonicalRNNControlStateControlUpdateGateRange}
provide a mechanism for exercising a fine-grained control of the two
types of contributions to the state signal at every step. Specifically,
$\vec{g}_{cs}[n]$ makes it possible to control the amount of the
state signal, retained from the previous step, and $\vec{g}_{cu}[n]$
regulates the amount of the update signal -- to be injected into
the state signal at the current step\footnote{The significance of the element-wise control of the ``content''
signals (here called the ``data'' signals) exerted by the gates
in the LSTM network has been independently recognized and researched
by others \citep{journals/corr/abs-1805-03716}. }.

From the derivation of the standard RNN system in Section \ref{sec:The-Roots-of-RNN},
$W_{s}$ in Equation \ref{eq:CanonicalRNNRHSPartState} is a diagonal
matrix with positive fractions, $\frac{1}{\left|a_{ii}\right|}$,
on its main diagonal. Hence, since the elements of $\vec{g}_{cs}[n]$
are also fractions, setting: 
\begin{align}
W_{s} & =I\label{eq:CanonicalRNNRHSIdentityWs}
\end{align}
 in $\vec{g}_{cs}[n]\odot W_{s}$ is acceptable as long as the gate
functions are parametrizable and their parameters are learned during
training. Under these conditions, Equation \ref{eq:CanonicalRNNRHSPartState}
can be simplified to: 
\begin{align}
\vec{\mathcal{F}_{s}}\left(\vec{s}[n-1]\right) & =\vec{s}[n-1]\label{eq:CanonicalRNNRHSPartStateIdentityWs}
\end{align}
so that Equation \ref{eq:CanonicalRNNPartsStateUpdateGated} becomes:
\begin{align}
\vec{s}[n] & =\vec{g}_{cs}[n]\odot\vec{\mathcal{F}_{s}}\left(\vec{s}[n-1]\right)+\vec{g}_{cu}[n]\odot\vec{\mathcal{F}_{u}}\left(\vec{r}[n-1],\vec{x}[n]\right)\nonumber \\
 & =\vec{g}_{cs}[n]\odot\vec{s}[n-1]+\vec{g}_{cu}[n]\odot\vec{\mathcal{F}_{u}}\left(\vec{r}[n-1],\vec{x}[n]\right)\label{eq:CanonicalRNNPartsIdentityWsStateUpdateGated}
\end{align}
Hence, the contribution from the state signal at the previous step
remains fractional, insuring the stability of the overall system.
Diagrammatically, the insertion of the expanded controls from Equation
\ref{eq:CanonicalRNNPartsIdentityWsStateUpdateGated} into the canonical
RNN system of Equation \ref{eq:CanonicalRNN} transforms Figure \ref{fig:CanonicalRNNCellSchematics}
into Figure \ref{fig:CanonicalRNNToLSTMCellTransformationGcsGcuAdded}.

While the update term, $\vec{\mathcal{F}_{u}}\left(\vec{r}[n-1],\vec{x}[n]\right)$,
as a whole is now controlled by $\vec{g}_{cu}[n]$ , the internal
composition of $\vec{\mathcal{F}_{u}}\left(\vec{r}[n-1],\vec{x}[n]\right)$
itself needs to be examined. According to Equation \ref{eq:CanonicalRNNRHSPartReadoutUpdate},
the readout signal from the previous step and the input signal at
the current step constitute the update candidate signal on every step
with the index, $n$, with both of these terms contributing in equal
proportions. The issue with always utilizing $W_{r}\vec{r}[n-1]$
in its entirety is that when $\vec{g}_{cu}[n]\sim1$, $\vec{s}[n-1]$
and $\vec{s}[n]$ become connected through $W_{r}$ and the warping
function. Based on Equation \ref{eq:StandardRNNStepPrototypeDelEDelStateFlowRateLongRange},
this link constrains the gradient of the objective function with respect
to the state signal, thus predisposing the system to the vanishing/exploding
gradients problem. To throttle this feedback path, the readout signal,
$\vec{r}[n]$, will be apportioned by another gate signal, $\vec{g}_{cr}[n]$
(``control readout''), as follows:
\begin{align}
\vec{v}[n] & =\vec{g}_{cr}[n]\odot\vec{r}[n]\label{eq:CanonicalRNNRHSGatedReadout}\\
\vec{0} & \leq\vec{g}_{cr}[n]\leq\vec{1}\label{eq:CanonicalRNNControlReadoutGateRange}
\end{align}
The gating control, $\vec{g}_{cr}[n]$, determines the fractional
amount of the readout signal that becomes the cell's observable value
signal at the step with the index, $n$. Thus, using $\vec{v}[n-1]$
in place of $\vec{r}[n-1]$ in Equation \ref{eq:CanonicalRNNRHSPartReadoutUpdate}
transforms it into: 
\begin{align}
\vec{\mathcal{F}_{u}}\left(\vec{v}[n-1],\vec{x}[n]\right) & =W_{r}\vec{v}[n-1]+W_{x}\vec{x}[n]+\vec{\theta}_{s}\label{eq:CanonicalRNNRHSPartValueUpdate}
\end{align}
The RNN cell schematic diagram, expanded to accommodate the control
readout gate, introduced in Equation \ref{eq:CanonicalRNNRHSGatedReadout},
and the modified recurrence relationship, employed in Equation \ref{eq:CanonicalRNNRHSPartValueUpdate},
appears in Figure \ref{fig:CanonicalRNNToLSTMCellTransformationGcsGcuGcrAdded}.

\begin{figure}[tph]
\includegraphics[scale=0.47]{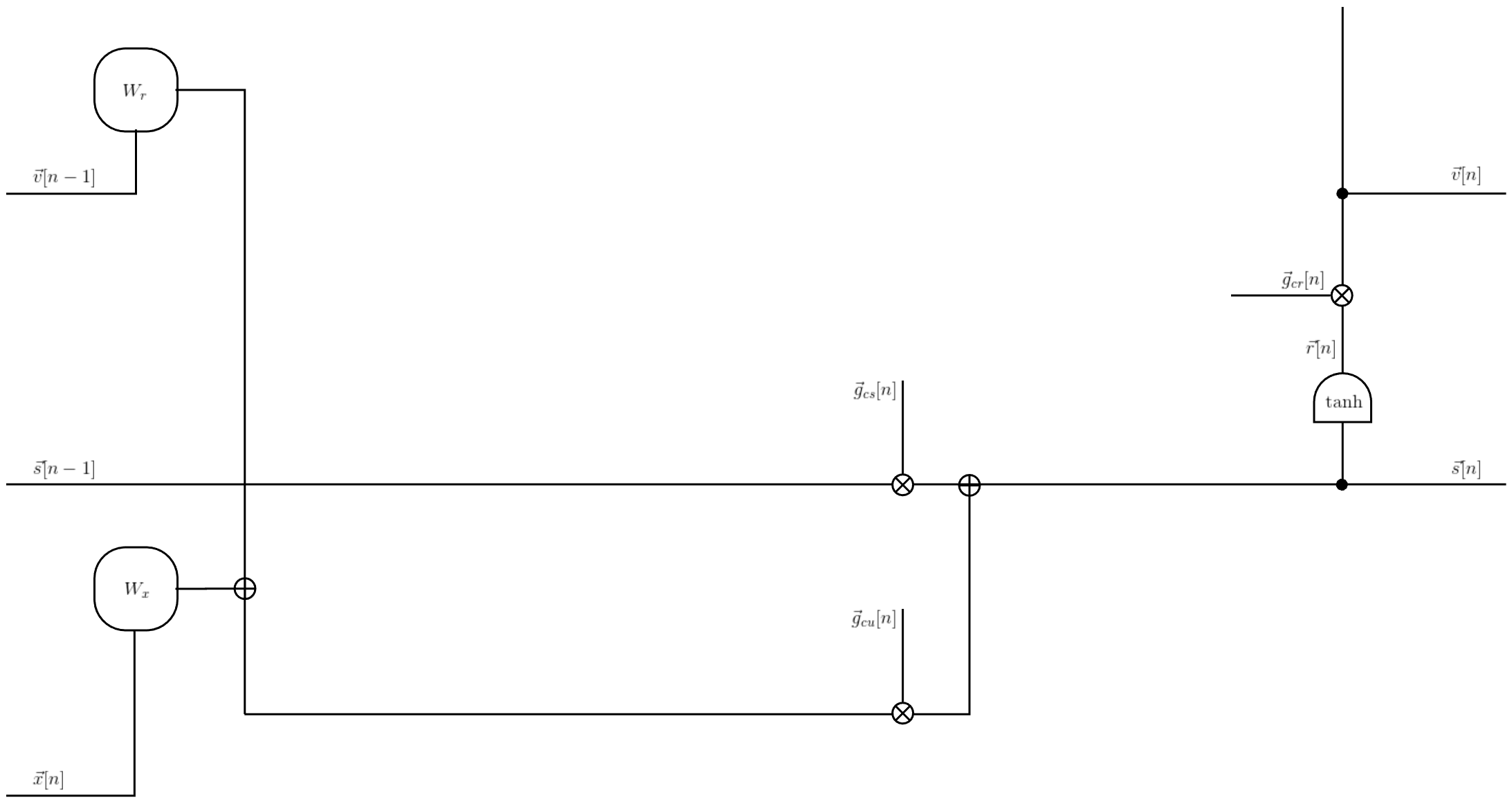}

\caption{The \textquotedblleft control readout\textquotedblright{} gate, $\vec{g}_{cr}[n]$,
determines the fractional amount of the readout signal that becomes
the cell's observable value signal at the current step.\label{fig:CanonicalRNNToLSTMCellTransformationGcsGcuGcrAdded}}

\end{figure}

Even though the external input does not affect the system's stability
or impact its susceptibility to vanishing/exploding gradients, pairing
the input with its own ``control input'' gate makes the system more
flexible. 

Multiplying the external input signal, $\vec{x}[n]$, in Equation
\ref{eq:CanonicalRNNRHSPartReadoutUpdate} by a dedicated gate signal,
$\vec{g}_{cx}[n]$, turns Equation \ref{eq:CanonicalRNNRHSPartValueUpdate}
into: 
\begin{align}
\vec{\mathcal{F}_{u}}\left(\vec{v}[n-1],\vec{x}[n]\right) & =W_{r}\vec{v}[n-1]+\vec{g}_{cx}[n]\odot W_{x}\vec{x}[n]+\vec{\theta}_{s}\label{eq:CanonicalRNNRHSPartValueInputUpdate}
\end{align}

According to Equation \ref{eq:CanonicalRNNRHSGatedReadout} and Equation
\ref{eq:CanonicalRNNRHSPartValueInputUpdate}, utilizing both the
control readout gate, $\vec{g}_{cr}[n]$, and the control input gate,
$\vec{g}_{cx}[n]$, allows for the update term, $\vec{\mathcal{F}_{u}}\left(\vec{v}[n-1],\vec{x}[n]\right)$,
to contain an arbitrary mix of the readout signal and the external
input. The control input gate signal, $\vec{g}_{cx}[n]$, will be
later incorporated as part of extending the Vanilla LSTM cell. For
now, it is assumed for simplicity that $\vec{g}_{cx}[n]=\vec{1}$,
so Equation \ref{eq:CanonicalRNNRHSPartValueInputUpdate} reduces
to Equation \ref{eq:CanonicalRNNRHSPartValueUpdate}.

The dynamic range of the value signal of the cell, $\vec{v}[n]$,
is determined by the readout signal, $\vec{r}[n]$, which is bounded
by the warping nonlinearity, $G_{d}(z)$. In order to maintain the
same dynamic range while absorbing the contributions from the input
signal, $\vec{x}[n]$ (or $\vec{g}_{cx}[n]\odot\vec{x}[n]$ if the
control input gate is part of the system architecture), the aggregate
signal, $\vec{\mathcal{F}_{u}}\left(\vec{v}[n-1],\vec{x}[n]\right)$,
is tempered by the saturating warping nonlinearity, $G_{d}(z)$, so
as to produce the update candidate signal, $\vec{u}[n]$:
\begin{align}
\vec{u}[n] & =G_{d}\left(\vec{\mathcal{F}_{u}}\left(\vec{v}[n-1],\vec{x}[n]\right)\right)\label{eq:CanonicalRNNRHSWarpedUpdateTermUpdateCandidate}
\end{align}
Thus, replacing the update term in Equation \ref{eq:CanonicalRNNPartsIdentityWsStateUpdateGated}
with $\vec{u}[n]$, given by Equation \ref{eq:CanonicalRNNRHSWarpedUpdateTermUpdateCandidate},
finally yields\footnote{Note the notation change: in the rest of the document, the symbol,
$\vec{u}[n]$, has the meaning of the update candidate signal (not
the vector-valued unit step function).}:
\begin{align}
\vec{s}[n] & =\vec{g}_{cs}[n]\odot\vec{s}[n-1]+\vec{g}_{cu}[n]\odot\vec{u}[n]\label{eq:CanonicalRNNPartsStateUpdateCandidateGatedLSTM}
\end{align}
which is a core constituent of the set of formulas defining the cell
of the Vanilla LSTM network. According to Equation \ref{eq:CanonicalRNNPartsStateUpdateCandidateGatedLSTM},
the state signal of the cell at the current step is a weighted combination
of the state signal of the cell at the previous step and the aggregation
of historical and novel update information available at the present
step. The complete data path of the Vanilla LSTM cell, culminating
from fortifying the canonical RNN system with gating controls and
signal containment, is illustrated in Figure \ref{fig:CanonicalRNNToLSTMCellTransformationGcsGcuGcrUAdded}.

\vspace{-3mm}
\begin{figure}[tph]
\includegraphics[scale=0.47]{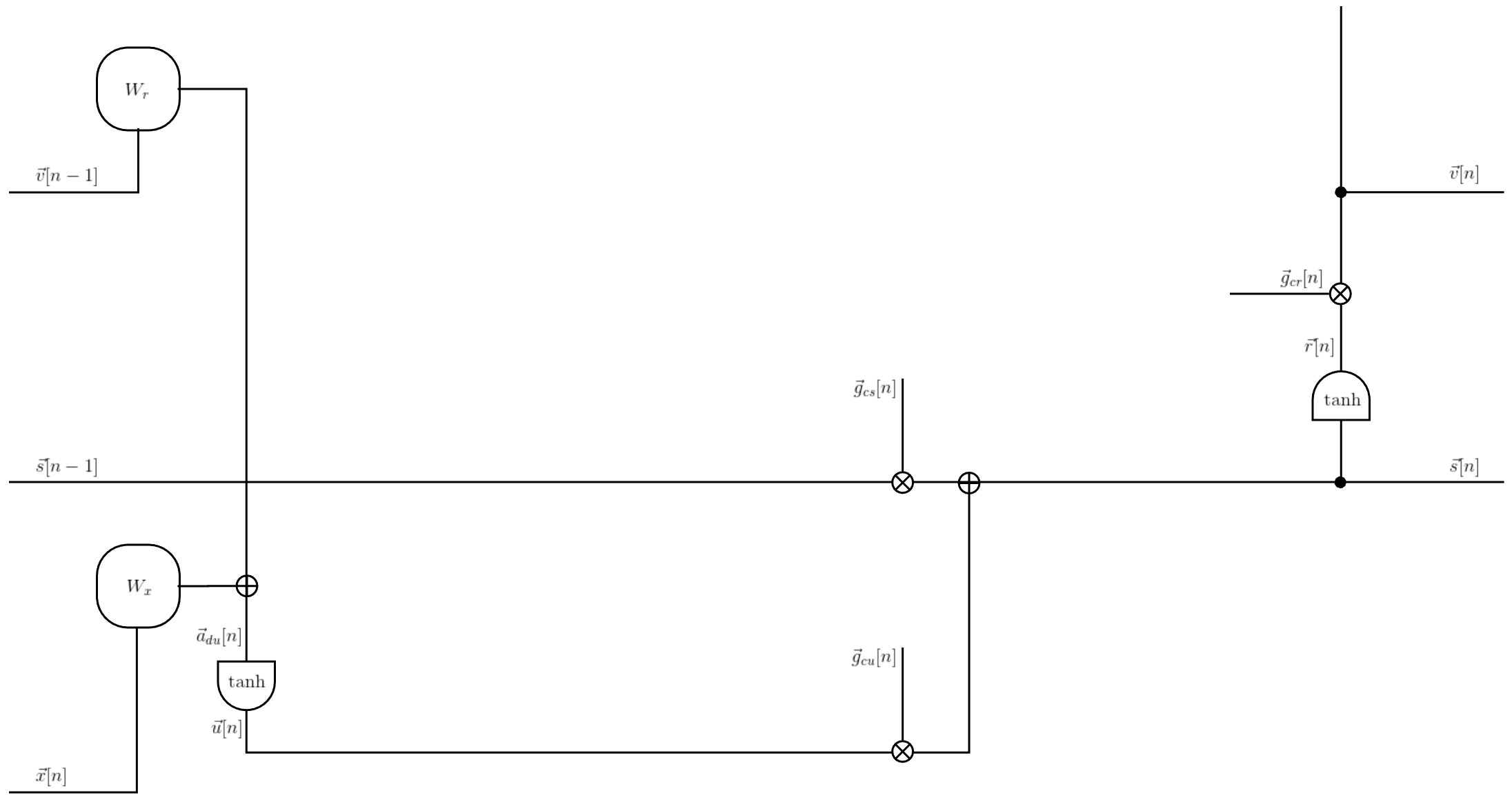}

\caption{In the Vanilla LSTM network, the state signal of the cell at the current
step is a weighted combination of the state signal of the cell at
the previous step and the aggregation of historical and novel update
information available at the present step.\label{fig:CanonicalRNNToLSTMCellTransformationGcsGcuGcrUAdded}}
\end{figure}

\vspace{2mm}

\begin{example}
\label{example:VanillaLSTMAsymptoticCEC}For an idealized illustration
of the ability of the LSTM cell to propagate the error gradient unattenuated,
set $\vec{g}_{cs}[n]$ to $\vec{1}$ and both, $\vec{g}_{cu}[n]$
and $\vec{g}_{cr}[n]$, to $\vec{0}$ for all steps in the segment.
Then $\vec{s}[n]=\vec{s}[n-1]$ and $\vec{\psi}[n]=\vec{\psi}[n+1]$
for all steps in the segment. The inventors of the LSTM network named
this mode the ``Constant Error Carousel'' (CEC) to underscore that
the error gradient is recirculated and the state signal of the cell
is refreshed on every step\footnote{However, the design of the LSTM network does not address explicitly
the exploding gradients problem. During training, the derivatives
can still become excessively large, leading to numerical problems.
To prevent this, all derivatives of the objective function with respect
to the state signal are renormalized to lie within a predefined range
\citep{phd/de/Graves2008,journals/corr/Graves13,pascanu2013difficulty}.}. Essentially, the multiplicative gate units open and close access
to constant error gradient flow through CEC as part of the operation
of the LSTM cell\footnote{Because of the access control functionality provided by the gates,
the LSTM cell is sometimes interpreted as a differentiable version
of the digital static random access memory cell \citep{phd/de/Graves2008}.} \hspace{-0.1mm}\citep{hochreiter1997long,Gers01longshortterm}.
\end{example}
In Section \ref{sec:RNN-Training-Difficulties}, we saw that the error
gradient determines the parameter updates for training the standard
RNN by Gradient Descent. It will become apparent in Section \ref{subsec:Vanilla-LSTM-System-Derivatives-(Backward-Pass)}
that the same relationship holds for the Vanilla LSTM network as well.
The difference is that because of the gates, the function for the
error gradient of the LSTM network accommodates Gradient Descent better
than that of the standard RNN does. As will be shown in Section \ref{subsec:Error-Gradient-Sequences-In-Vanilla-LSTM-System},
under certain provisions regarding the model parameters, the unrolled
Vanilla LSTM cell operates in the CEC mode. If such a parameter combination
emerges during training, then the parameter update information, embedded
in the error gradient signal, will be back-propagated over a large
number of steps of a training subsequence, imparting sensitivity to
the long-range dependencies to the model parameters through the parameter
update step of Gradient Descent. If the training process steers the
model parameters toward causing $\left\Vert \vec{g}_{cs}[n]\right\Vert =1$
(as in Example \ref{example:VanillaLSTMAsymptoticCEC}), then the
LSTM network circumvents the vanishing gradient problem in this asymptotic
case.

\pagebreak Analogously to the standard RNN, the Vanilla LSTM network,
trained by Gradient Descent, can also learn the short-range dependencies
among the samples of the subsequences, comprising the training data.
Suppose that during training the model parameters cause $\left\Vert \vec{g}_{cs}[n]\right\Vert <1$
(unlike in Example \ref{example:VanillaLSTMAsymptoticCEC}). Then,
as will be elaborated in Section \ref{subsec:Error-Gradient-Sequences-In-Vanilla-LSTM-System},
the error gradient signal will decline, eventually vanishing over
a finite number of steps, even if during training $\left\Vert \vec{g}_{cu}[n]\right\Vert >0$
and/or $\left\Vert \vec{g}_{cr}[n]\right\Vert >0$ so as to admit
(by Equation \ref{eq:CanonicalRNNPartsStateUpdateCandidateGatedLSTM})
the contributions from the update candidate signal, $\vec{u}[n]$,
into the composition of the state signal.

It remains to define the expressions for the gate signals, $\vec{g}_{cs}[n]$,
$\vec{g}_{cr}[n]$, and $\vec{g}_{cu}[n]$. Assuming that the system
will be trained with BPTT, all of its constituent functions, including
the functions for the gate signals, must be differentiable. A convenient
function that is continuous, differentiable, monotonically increasing,
and maps the domain $(-\infty,\infty)$ into the range $(0,1)$ is
the logistic function:\vspace{-1mm}
\begin{align}
G_{c}(z) & \equiv\sigma(z)\equiv\frac{1}{1+e^{-z}}\label{eq:LogisticFunctionExponentialForm}\\
 & =\frac{1+\tanh(\frac{z}{2})}{2}\label{eq:LogisticFunctionShiftedScaledReparameterizedTanH}\\
 & =\frac{1+G_{d}(\frac{z}{2})}{2}\label{eq:LogisticFunctionShiftedScaledReparameterizedDataWarpingFunction}
\end{align}
which is a shifted, scaled, and re-parameterized replica of the hyperbolic
tangent, used as the warping function, $G_{d}(z)$, for the data signals
in RNN and LSTM systems. When operating on vector arguments, $G_{c}(\vec{z})$
is computed by applying Equation \ref{eq:LogisticFunctionExponentialForm}
to each element of the vector, $\vec{z}$, separately; the same rule
applies to $G_{d}(\vec{z})$. 

\pagebreak In order to determine the fractional values of the control
signals, $\vec{g}_{cs}[n]$, $\vec{g}_{cu}[n]$, and $\vec{g}_{cr}[n]$,
at the step with the index, $n$, all the data signals, from as close
as possible to the index of the current step, are utilized. Specifically,
for both, $\vec{g}_{cs}[n]$, which determines the fraction of the
state signal, $\vec{s}[n-1]$, from the previous step and $\vec{g}_{cu}[n]$,
which determines the fraction of the update candidate signal, $\vec{u}[n]$,
from the current step, the available data signals are $\vec{s}[n-1]$,
$\vec{v}[n-1]$, and $\vec{x}[n]$. However, note that for $\vec{g}_{cr}[n]$,
which determines the fraction of the readout signal, $\vec{r}[n]$,
from the current step, the available data signals are $\vec{s}[n]$,
$\vec{v}[n-1]$, and $\vec{x}[n]$. This is because by Equation \ref{eq:CanonicalRNNRHSGatedReadout},
$\vec{r}[n]$ is available at the junction of the cell, where $\vec{g}_{cr}[n]$
is computed, and hence, by Equation \ref{eq:CanonicalRNNRHSWarpedState},
$\vec{s}[n]$ is necessarily available. The input to each gate is
presented as a linear combination of all the data signals available
to it:\vspace{-2mm}
\begin{align}
\vec{z}_{cs}[n] & =W_{x_{cs}}\vec{x}[n]+W_{s_{cs}}\vec{s}[n-1]+W_{v_{cs}}\vec{v}[n-1]+\vec{\theta}_{cs}\label{eq:CanonicalRNNControlStateAccumulationLSTM}\\
\vec{z}_{cu}[n] & =W_{x_{cu}}\vec{x}[n]+W_{s_{cu}}\vec{s}[n-1]+W_{v_{cu}}\vec{v}[n-1]+\vec{\theta}_{cu}\label{eq:CanonicalRNNControlUpdateAccumulationLSTM}\\
\vec{z}_{cr}[n] & =W_{x_{cr}}\vec{x}[n]+W_{s_{cr}}\vec{s}[n]+W_{v_{cr}}\vec{v}[n-1]+\vec{\theta}_{cr}\label{eq:CanonicalRNNControlReadoutAccumulationLSTM}
\end{align}
Accumulating the available data signals linearly makes the application
of the chain rule for BPTT straightforward, while providing a rich
representation of the system's data as an input to each gate at every
step. As the model parameters, $\left\{ W_{x_{cr}},W_{x_{cu}},W_{x_{cs}},W_{s_{cs}},W_{v_{cs}},\vec{\theta}_{cs},W_{s_{cu}},W_{v_{cu}},\vec{\theta}_{cu},W_{s_{cr}},W_{v_{cr}},\vec{\theta}_{cr}\right\} $,
in Equation \ref{eq:CanonicalRNNControlStateAccumulationLSTM}, Equation
\ref{eq:CanonicalRNNControlUpdateAccumulationLSTM}, and Equation
\ref{eq:CanonicalRNNControlReadoutAccumulationLSTM} are being trained,
the gate functions, given by:\vspace{-2mm}
\begin{align}
\vec{g}_{cs}[n] & =G_{c}(\vec{z}_{cs}[n])\label{eq:CanonicalRNNControlStateLSTM}\\
\vec{g}_{cu}[n] & =G_{c}(\vec{z}_{cu}[n])\label{eq:CanonicalRNNControlUpdateLSTM}\\
\vec{g}_{cr}[n] & =G_{c}(\vec{z}_{cr}[n])\label{eq:CanonicalRNNControlReadoutLSTM}
\end{align}
become attuned to the flow of and the variations in the training data
through the system at every step. During inference, this enables the
gates to modulate their corresponding data signals adaptively, utilizing
all the available information at every step. In particular, the gates
help to detect and mitigate the detrimental ramifications of artificial
boundaries, which arise in the input sequences, due to the implicit
truncation, caused by unrolling \citep{rabiner1971,parksmcclellanrabiner1973,yamamoto1223586ieee092003}.
The gates make the LSTM system a robust model that compensates for
the imperfections in the external data and is capable of generating
high quality output sequences. 

This concludes the derivation of the Vanilla LSTM network. The next
section presents a formal self-contained summary of the Vanilla LSTM
system, including the equations for training it using BPTT.

\section{The Vanilla LSTM Network Mechanism in Detail\label{sec:The-Vanilla-LSTM-Network-Mechanism-in-Detail}}

\subsection{Overview\label{subsec:Overview}}

Suppose that an LSTM cell is unrolled for $K$ steps. The LSTM cell
at the step with the index, $n$ (in the sequence of $K$ steps),
accepts the input signal, $\vec{x}[n]$, and computes the externally-accessible
(i.e., observable) signal, $\vec{v}[n]$. The internal state signal
of the cell at the step with the index, $n$, is maintained in $\vec{s}[n]$,
which is normally not observable by entities external to the cell\footnote{In certain advanced RNN and LSTM configurations, such as Attention
Networks, the state signal is externally observable and serves as
an important component of the objective function.}. However, the computations, associated with the cell at the next
adjacent step in the increasing order of the index, $n$ (i.e., the
LSTM step at the index, $n+1$), are allowed to access $\vec{s}[n]$,
the state signal of the LSTM cell at the step with the index, $n$.

The key principle of the LSTM cell centers around organizing its internal
operations according to two qualitatively different, yet cooperating,
objectives: data and the control of data. The data components prepare
the candidate data signals (ranging between $-1$ and $1$), while
the control components prepare the ``throttle'' signals (ranging
between $0$ and $1$). Multiplying the candidate data signal by the
control signal apportions the fractional amount of the candidate data
that is allowed to propagate to its intended nodes in the cell. Hence,
if the control signal is $0$, then $0\%$ of the candidate data amount
will propagate. Conversely, if the control signal is 1, then 10$0\%$
of the candidate data amount will propagate. Analogously, for intermediate
values of the control signal (in the range between $0$ and $1$),
the corresponding percentage of the candidate data amount will be
made available to the next function in the cell.

As depicted in Figure \ref{fig:VanillaLSTMCellSchematics}, the Vanilla
LSTM cell contains three candidate-data/control stages: update, state,
and readout.

\pagebreak{}

\subsection{Notation\label{subsec:Notation}}

The following notation is used consistently throughout this section
to define the Vanilla LSTM cell:
\begin{itemize}
\item $n$ -- index of a step in the segment (or subsequence); $n=0,\ldots,K-1$
\item $K$ -- number of steps in the unrolled segment (or subsequence)
\item $G_{c}$ -- monotonic, bipolarly-saturating warping function for
control/throttling purposes (acts as a ``gate'')
\item $G_{d}$ -- monotonic, negative-symmetric, bipolarly-saturating warping
function for data bounding purposes
\item $d_{x}$ -- dimensionality of the input signal to the cell
\item $d_{s}$ -- dimensionality of the state signal of the cell
\item $\vec{x}\in\mathbb{R}^{d_{x}}$ -- the input signal to the cell
\item $\vec{s}\in\mathbb{R}^{d_{s}}$ -- the state signal of the cell
\item $\vec{v}\in\mathbb{R}^{d_{s}}$ -- the observable value signal of
the cell for external purposes (e.g., for connecting one step to the
next adjacent step of the same cell in the increasing order of the
step index, $n$; as input to another cell in the cascade of cells;
for connecting to the signal transformation filter for data output;
etc.)
\item $\vec{a}\in\mathbb{R}^{d_{s}}$ -- an accumulation node of the cell
(linearly combines the signals from the preceding step and the present
step as net input to a warping function at the present step; each
cell contains several purpose-specific control and data accumulation
nodes)
\item $\vec{u}\in\mathbb{R}^{d_{s}}$ -- the update candidate signal for
the state signal of the cell
\item $\vec{r}\in\mathbb{R}^{d_{s}}$ --  the readout candidate signal
of the cell
\item $g\in\mathbb{R}^{d_{s}}$ -- a gate output signal of the cell for
control/throttling purposes
\item $E\in\mathbb{R}$ -- objective (cost) function to be minimized as
part of the model training procedure
\item $\vec{x}^{T}\vec{v}$ -- vector-vector inner product (yields a scalar)
\item $\vec{x}\vec{v}^{T}$ -- vector-vector outer product (yields a matrix)
\item $W\vec{v}$ -- matrix-vector product (yields a vector)
\item $\vec{x}\odot\vec{v}$ -- element-wise vector product (yields a vector)
\end{itemize}

\subsection{Control/Throttling (\textquotedblleft Gate\textquotedblright ) Nodes\label{subsec:Control/Throttling-(Gate)-Nodes}}

The Vanilla LSTM cell uses three gate types:
\begin{itemize}
\item control of the fractional amount of the update candidate signal used
to comprise the state signal of the cell at the present step with
the index, $n$
\item control of the fractional amount of the state signal of the cell at
the adjacent lower-indexed step, $n-1$, used to comprise the state
signal of the cell at the present step with the index, $n$
\item control of the fractional amount of the readout candidate signal used
to release as the externally-accessible (observable) signal of the
cell at the present step with the index, $n$
\end{itemize}

\subsection{Data Set Standardization\label{subsec:Data-Set-Standardization}}

Before the operation of the LSTM network (or its parent, RNN) can
commence, the external training data set, $\vec{x}_{0}[n]$, needs
to be standardized, such that all elements of the input to the network,
$\vec{x}[n]$, have the mean of $0$ and the standard deviation of
$1$ over the training set:
\begin{align}
\vec{\mu} & =\frac{1}{N}\sum_{n=0}^{N-1}\vec{x}_{0}[n]\label{eq:VanillaLSTMCellTrainingSetStandardizationMean}\\
\mathsf{\mathcal{V}} & =\frac{1}{N-1}\sum_{n=0}^{N-1}\left(\vec{x}_{0}[n]-\vec{\mu}\right)\left(\vec{x}_{0}[n]-\vec{\mu}\right)^{T}\label{eq:VanillaLSTMCellTrainingSetStandardizationVariance}\\
\vec{x}[n] & =\left[diag\left(\sqrt{\mathcal{V}_{ii}}\right)\right]^{-1}\left(\vec{x}_{0}[n]-\vec{\mu}\right)\label{eq:VanillaLSTMCellTrainingSetStandardizationInput}
\end{align}

Applying the transformations in Equation \ref{eq:VanillaLSTMCellTrainingSetStandardizationMean},
Equation \ref{eq:VanillaLSTMCellTrainingSetStandardizationVariance},
and Equation \ref{eq:VanillaLSTMCellTrainingSetStandardizationInput}
to the external training samples, $\vec{x}_{0}[n]$, accomplishes
this task. In these equations, $N$ is the number of samples in the
training set, $\vec{\mu}$ is the sample mean, and $\mathsf{\mathcal{V}}$
is the sample auto-covariance matrix of the training set\footnote{The test and validation sets should be standardized with the mean
and standard deviation of the training set \citep{journals/corr/Graves13}.}.

\subsection{Warping (Activation) Functions\label{subsec:Warping-(Activation)-Functions}}

As described in Section \ref{subsec:Overview}, the warping function
for control needs to output a value between $0$ and $1$. The sigmoidal
(also known as ``logistic'') nonlinearity is a good choice, because
it is bipolarly-saturating between these values and is monotonic,
continuous, and differentiable:
\begin{align}
G_{c}(z) & \equiv\sigma(z)\equiv\frac{1}{1+e^{-z}}=\frac{1+\tanh(\frac{z}{2})}{2}=\frac{1+G_{d}(\frac{z}{2})}{2}\label{eq:VanillaLSTMCellWarpingFunctionControl}
\end{align}
Related to this function, the hyperbolic tangent is a suitable choice
for the warping function for data bounding purposes:
\begin{align}
G_{d}(z) & \equiv\tanh(z)=\frac{e^{z}-e^{-z}}{e^{z}+e^{-z}}=2\sigma(2z)-1=2G_{c}(2z)-1\label{eq:VanillaLSTMCellWarpingFunctionData}
\end{align}
because it is monotonic, negative-symmetric, and bipolarly-saturating
at $-1$ and $1$ (i.e., one standard deviation of $\vec{x}[n]$ in
each direction). This insures that the data warping function, $G_{d}(z)$,
will support both negative and positive values of the standardized
incoming data signal, $\vec{x}[n]$, in Equation \ref{eq:VanillaLSTMCellTrainingSetStandardizationInput},
and keep it bounded within that range (the ``squashing'' property).

\subsection{Vanilla LSTM Cell Model Parameters\label{subsec:Vanilla-LSTM-Cell-Model-Parameters}}

The Vanilla LSTM cell model uses the following fifteen (15) parameter
entities (with their respective dimensions and\\ designations as
indicated below):

\subsubsection{Parameters of the accumulation node, $\vec{a}_{cu}[n]$, of the gate
that controls the fractional amount of the update candidate signal,
$\vec{u}[n]$, used to comprise the state signal of the cell at the
present step with the index, $n$}
\begin{itemize}
\item $W_{x_{cu}}\in\mathbb{R}^{d_{s}\times d_{x}}$ -- the matrix of weights
connecting the input signal, $\vec{x}[n]$, at the present step with
the index, $n$, to the ``control update'' accumulation node, $\vec{a}_{cu}[n]$,
of the cell at the present step with the index, $n$
\item $W_{s_{cu}}\mathbb{\in R}^{d_{s}\times d_{s}}$ -- the matrix of
weights connecting the state signal, $\vec{s}[n-1]$, at the adjacent
lower-indexed step with the index, $n-1$, to the ``control update''
accumulation node, $\vec{a}_{cu}[n]$, of the cell at the present
step with the index, $n$
\item $W_{v_{cu}}\mathbb{\in R}^{d_{s}\times d_{s}}$ -- the matrix of
weights connecting the externally-accessible (observable) value signal,
$\vec{v}[n-1]$, at the adjacent lower-indexed step with the index,
$n-1$, to the ``control update'' accumulation node, $\vec{a}_{cu}[n]$,
of the cell at the present step with the index, $n$
\item $\vec{b}_{cu}\in\mathbb{R}^{d_{s}}$ -- the vector of bias elements
for the ``control update'' accumulation node, $\vec{a}_{cu}[n]$,
of the cell at the present step with the index, $n$
\end{itemize}

\subsubsection{Parameters of the accumulation node, $\vec{a}_{cs}[n]$, of the gate
that controls the fractional amount of the state signal of the cell,
$\vec{s}[n-1]$, at the adjacent lower-indexed step, $n-1$, used
to comprise the state signal of the cell at the present step with
the index, $n$}
\begin{itemize}
\item $W_{x_{cs}}\in\mathbb{R}^{d_{s}\times d_{x}}$ -- the matrix of weights
connecting the input signal, $\vec{x}[n]$, at the present step with
the index, $n$, to the ``control state'' accumulation node, $\vec{a}_{cs}[n]$,
of the cell at the present step with the index, $n$
\item $W_{s_{cs}}\mathbb{\in R}^{d_{s}\times d_{s}}$ -- the matrix of
weights connecting the state signal, $\vec{s}[n-1]$, at the adjacent
lower-indexed step with the index, $n-1$, to the ``control state''
accumulation node, $\vec{a}_{cs}[n]$, of the cell at the present
step with the index, $n$
\item $W_{v_{cs}}\mathbb{\in R}^{d_{s}\times d_{s}}$ -- the matrix of
weights connecting the externally-accessible (observable) value signal,
$\vec{v}[n-1]$, at the adjacent lower-indexed step with the index,
$n-1$, to the ``control state'' accumulation node, $\vec{a}_{cs}[n]$,
of the cell at the present step with the index, $n$
\item $\vec{b}_{cs}\in\mathbb{R}^{d_{s}}$ -- the vector of bias elements
for the ``control state'' accumulation node, $\vec{a}_{cs}[n]$,
of the cell at the present step with the index, $n$
\end{itemize}

\subsubsection{Parameters of the accumulation node, $\vec{a}_{cr}[n]$, of the gate
that controls the fractional amount of the readout candidate signal,
$\vec{r}[n]$, used to release as the externally-accessible (observable)
value signal of the cell at the present step with the index, $n$}
\begin{itemize}
\item $W_{x_{cr}}\in\mathbb{R}^{d_{s}\times d_{x}}$ -- the matrix of weights
connecting the input signal, $\vec{x}[n]$, at the present step with
the index, $n$, to the ``control readout'' accumulation node, $\vec{a}_{cr}[n]$,
of the cell at the present step with the index, $n$
\item $W_{s_{cr}}\mathbb{\in R}^{d_{s}\times d_{s}}$ -- the matrix of
weights connecting the state signal, $\vec{s}[n]$, at the present
step with the index, $n$, to the ``control readout'' accumulation
node, $\vec{a}_{cr}[n]$, of the cell at the present step with the
index, $n$
\item $W_{v_{cr}}\mathbb{\in R}^{d_{s}\times d_{s}}$ -- the matrix of
weights connecting the externally-accessible (observable) value signal,
$\vec{v}[n-1]$, at the adjacent lower-indexed step with the index,
$n-1$, to the ``control readout'' accumulation node, $\vec{a}_{cr}[n]$,
of the cell at the present step with the index, $n$
\item $\vec{b}_{cr}\in\mathbb{R}^{d_{s}}$ -- the vector of bias elements
for the ``control readout'' accumulation node, $\vec{a}_{cr}[n]$,
of the cell at the present step with the index, $n$
\end{itemize}

\subsubsection{Parameters of the accumulation node, $\vec{a}_{du}[n]$, for the
data warping function that produces the update candidate signal, $\vec{u}[n]$,
of the cell at the present step with the index, $n$}
\begin{itemize}
\item $W_{x_{du}}\in\mathbb{R}^{d_{s}\times d_{x}}$ -- the matrix of weights
connecting the input signal, $\vec{x}[n]$, at the present step with
the index, $n$, to the ``data update'' accumulation node, $\vec{a}_{du}[n]$,
of the cell at the present step with the index, $n$
\item $W_{v_{du}}\mathbb{\in R}^{d_{s}\times d_{s}}$ -- the matrix of
weights connecting the externally-accessible (observable) value signal,
$\vec{v}[n-1]$, at the adjacent lower-indexed step with the index,
$n-1$, to the ``data update'' accumulation node, $\vec{a}_{du}[n]$,
of the cell at the present step with the index, $n$
\item $\vec{b}_{du}\in\mathbb{R}^{d_{s}}$ -- the vector of bias elements
for the ``data update'' accumulation node, $\vec{a}_{du}[n]$, of
the cell at the present step with the index, $n$
\end{itemize}

\subsubsection{All model parameters, which must be learned, combined (for notational
convenience)}
\begin{itemize}
\item All parameters of the LSTM network are commonly concatenated and represented
as a whole by $\Theta$:
\begin{align}
\Theta & \equiv\left\{ W_{x_{cu}},W_{s_{cu}},W_{v_{cu}},\vec{b}_{cu},W_{x_{cs}},W_{s_{cs}},W_{v_{cs}},\vec{b}_{cs},W_{x_{cr}},W_{s_{cr}},W_{v_{cr}},\vec{b}_{cr}W_{x_{du}},W_{v_{du}},\vec{b}_{du}\right\} \label{eq:VanillaLSTMCellModelParametersConcatenated}
\end{align}
\item Arranged ``thematically'' (attributed by the type of an accumulation),
$\Theta$ can be written as:
\begin{align}
\begin{gathered}\Theta\equiv\end{gathered}
 & \begin{Bmatrix}W_{x_{cu}}, & W_{s_{cu}}, & W_{v_{cu}}, & \vec{b}_{cu},\\
W_{x_{cs}}, & W_{s_{cs}}, & W_{v_{cs}}, & \vec{b}_{cs},\\
W_{x_{cr}}, & W_{s_{cr}}, & W_{v_{cr}}, & \vec{b}_{cr},\\
W_{x_{du}}, &  & W_{v_{du}}, & \vec{b}_{du}
\end{Bmatrix}.\label{eq:VanillaLSTMCellModelParametersThematicallyArranged}
\end{align}
\end{itemize}

\subsection{Summary of the main entities (generalized)\label{subsec:Summary-of-the-main-entities-(generalized)}}

The following glossary lists the main entities of the model in a generalized
way (i.e., without the subscripts, indices, etc.). Note that the special
quantities $\vec{\psi,}\vec{\chi},\vec{\alpha},\vec{\rho},\vec{\gamma}$
will be defined in Section \ref{subsec:Vanilla-LSTM-System-Derivatives-(Backward-Pass)}.

\begin{equation}\label{eq:VanillaLSTMCellNotationalEntities}\begin{rcases} \vec{x}\in\mathbb{R}^{d_{x}}\\\vec{s},\vec{v},\vec{a},\vec{u},\vec{r},\vec{g}\in\mathbb{R}^{d_{s}}\\\vec{\psi},\vec{\chi},\vec{\alpha},\vec{\rho},\vec{\gamma}\in\mathbb{R}^{d_{s}}\\W_{x}\in\mathbb{R}^{d_{s}\times d_{x}}\\W_{s},W_{v}\in\mathbb{R}^{d_{s}\times d_{s}}\\\vec{b}\in\mathbb{R}^{d_{s}}\\E\\N\\K\\n=0,\ldots,K-1 \end{rcases}\end{equation}

\pagebreak{}

\subsection{Vanilla LSTM System Equations (\textquotedblleft Forward Pass\textquotedblright )\label{subsec:Vanilla-LSTM-System-Equations-(Forward-Pass)}}

It is important to highlight the general pattern of computations that
govern the processes, according to which any RNN cell, and the LSTM
network cell in particular, unrolled for $K$ steps, generates sequences
of samples. Namely, the quantities that characterize the step of the
cell at the index, $n$, of the sequence depend on the quantities
that characterize the step of the cell at the index, $n-1$, of the
sequence\footnote{If the cell is unrolled in the opposite direction, then $n-1$ is
replaced by $n+1$, and the direction of evaluating the steps is reversed.
For the bi-directional unrolling, the steps in both the positive and
the negative directions of the index, $n$, need to be evaluated \citep{schuster97bidirectional}.
Here, only the positive direction is considered.}. The following equations fully define the Vanilla LSTM cell:
\begin{align}
\vec{a}_{cu}[n] & =W_{x_{cu}}\vec{x}[n]+W_{s_{cu}}\vec{s}[n-1]+W_{v_{cu}}\vec{v}[n-1]+\vec{b}_{cu}\label{eq:VanillaLSTMCellAccumulationControlUpdate}\\
\vec{a}_{cs}[n] & =W_{x_{cs}}\vec{x}[n]+W_{s_{cs}}\vec{s}[n-1]+W_{v_{cs}}\vec{v}[n-1]+\vec{b}_{cs}\label{eq:VanillaLSTMCellAccumulationControlState}\\
\vec{a}_{cr}[n] & =W_{x_{cr}}\vec{x}[n]+W_{s_{cr}}\vec{s}[n]+W_{v_{cr}}\vec{v}[n-1]+\vec{b}_{cr}\label{eq:VanillaLSTMCellAccumulationControlReadout}\\
\vec{a}_{du}[n] & =W_{x_{du}}\vec{x}[n]+W_{v_{du}}\vec{v}[n-1]+\vec{b}_{du}\label{eq:VanillaLSTMCellAccumulationDataUpdate}\\
\vec{u}[n] & =G_{d}(\vec{a}_{du}[n])\label{eq:VanillaLSTMCellUpdate}\\
\vec{g}_{cu}[n] & =G_{c}(\vec{a}_{cu}[n])\label{eq:VanillaLSTMCellGateControlUpdate}\\
\vec{g}_{cs}[n] & =G_{c}(\vec{a}_{cs}[n])\label{eq:VanillaLSTMCellGateControlState}\\
\vec{g}_{cr}[n] & =G_{c}(\vec{a}_{cr}[n])\label{eq:VanillaLSTMCellGateControlReadout}\\
\vec{s}[n] & =\vec{g}_{cs}[n]\odot\vec{s}[n-1]+\vec{g}_{cu}[n]\odot\vec{u}[n]\label{eq:VanillaLSTMCellState}\\
\vec{r}[n] & =G_{d}(\vec{s}[n])\label{eq:VanillaLSTMCellReadout}\\
\vec{v}[n] & =\vec{g}_{cr}[n]\odot\vec{r}[n]\label{eq:VanillaLSTMCellValue}
\end{align}

The schematic diagram of the Vanilla LSTM cell, defined by Equations
\ref{eq:VanillaLSTMCellAccumulationControlUpdate} -- \ref{eq:VanillaLSTMCellValue},
is presented in Figure \ref{fig:VanillaLSTMCellSchematics}, and the
snapshot of unrolling it (for only $4$ steps as an illustration)
appears in Figure \ref{fig:LSTMCellUnrolled}. In order to make it
easier to isolate the specific functions performed by the components
of the Vanilla LSTM cell, its schematic diagram is redrawn in Figure
\ref{fig:VanillaLSTMCellSchematicsAnnotatedStages}, with the major
stages comprising the cell\textquoteright s architecture marked by
dashed rectangles annotated by the names of the respective enclosed
stages.

\begin{figure}[tph]
\includegraphics[scale=0.47]{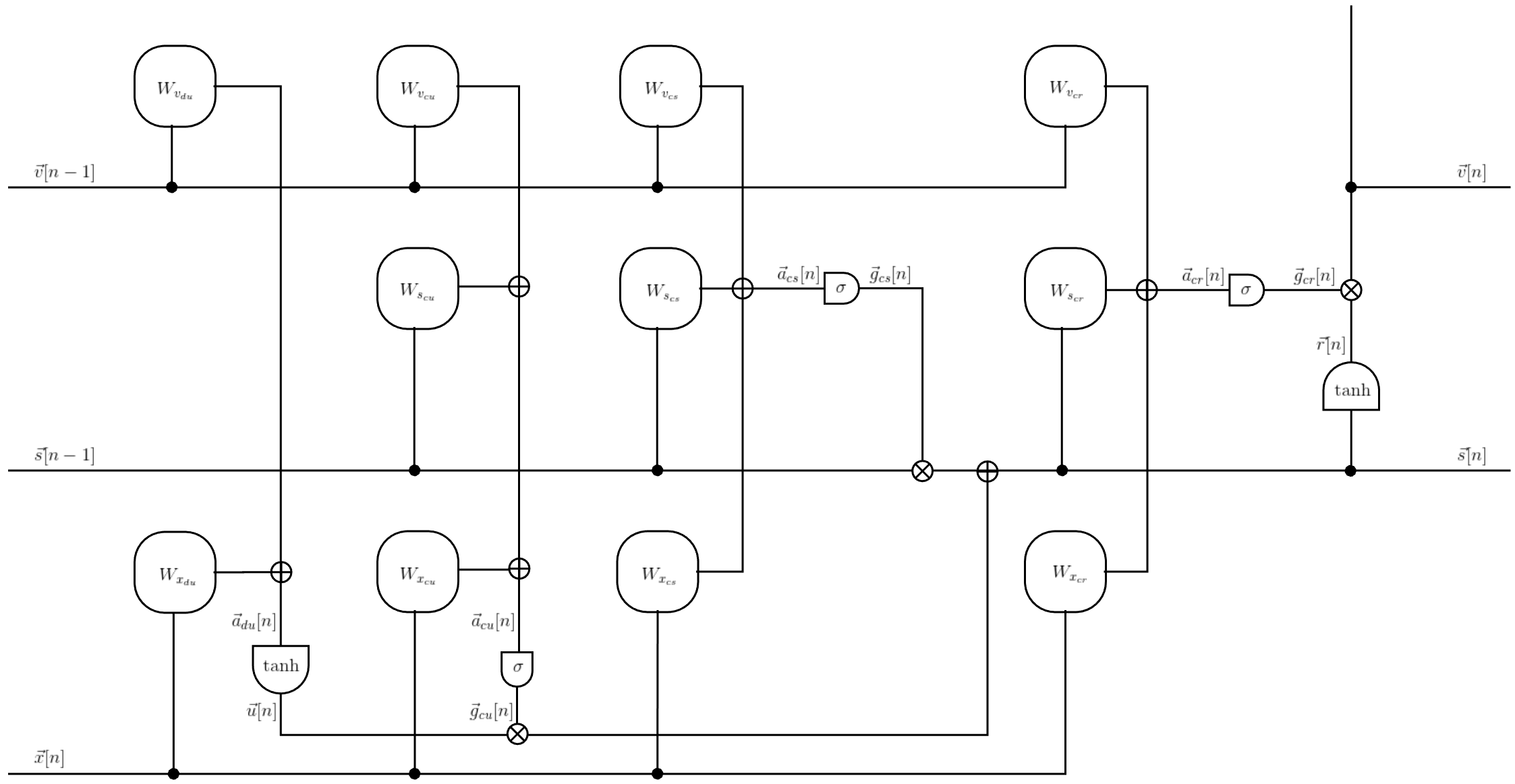}

\caption{Vanilla LSTM network cell. The bias parameters, $\vec{b},$ have been
omitted from the figure for brevity. They can be assumed to be included
without the loss of generality by appending an additional element,
always set to $1$, to the input signal vector, $\vec{x}[n]$, and
increasing the row dimensions of all corresponding weight matrices
by $1$.\label{fig:VanillaLSTMCellSchematics}}
\end{figure}

\begin{figure}[tph]
\includegraphics[scale=0.44]{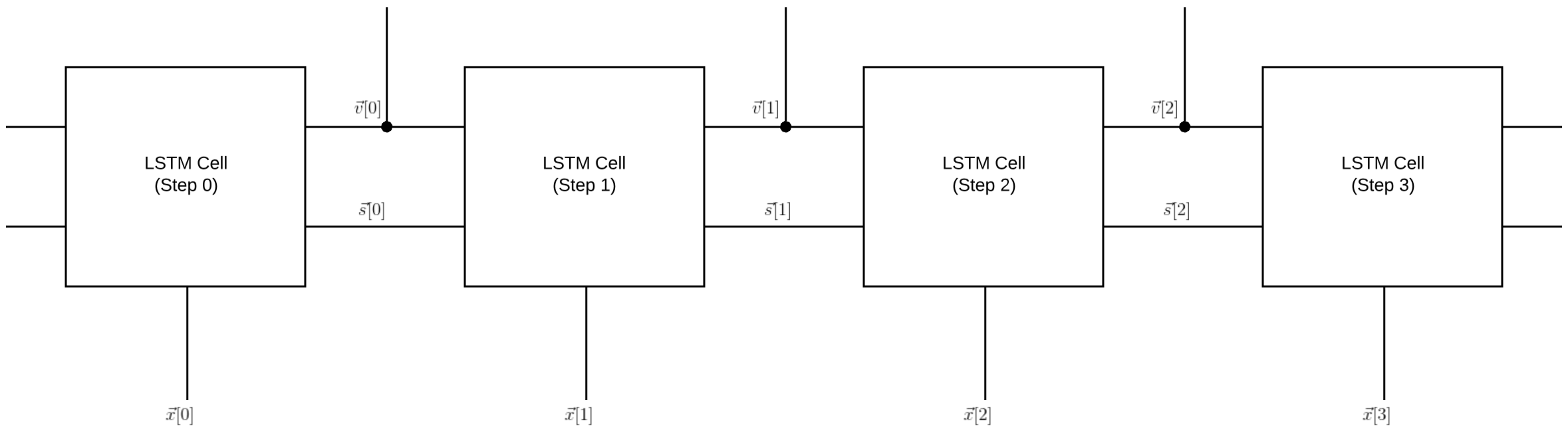}\caption{Sequence of steps generated by unrolling a cell of the LSTM network
(displaying $4$ steps for illustration).\label{fig:LSTMCellUnrolled}}
\end{figure}

\begin{figure}[tph]
\includegraphics[scale=0.42]{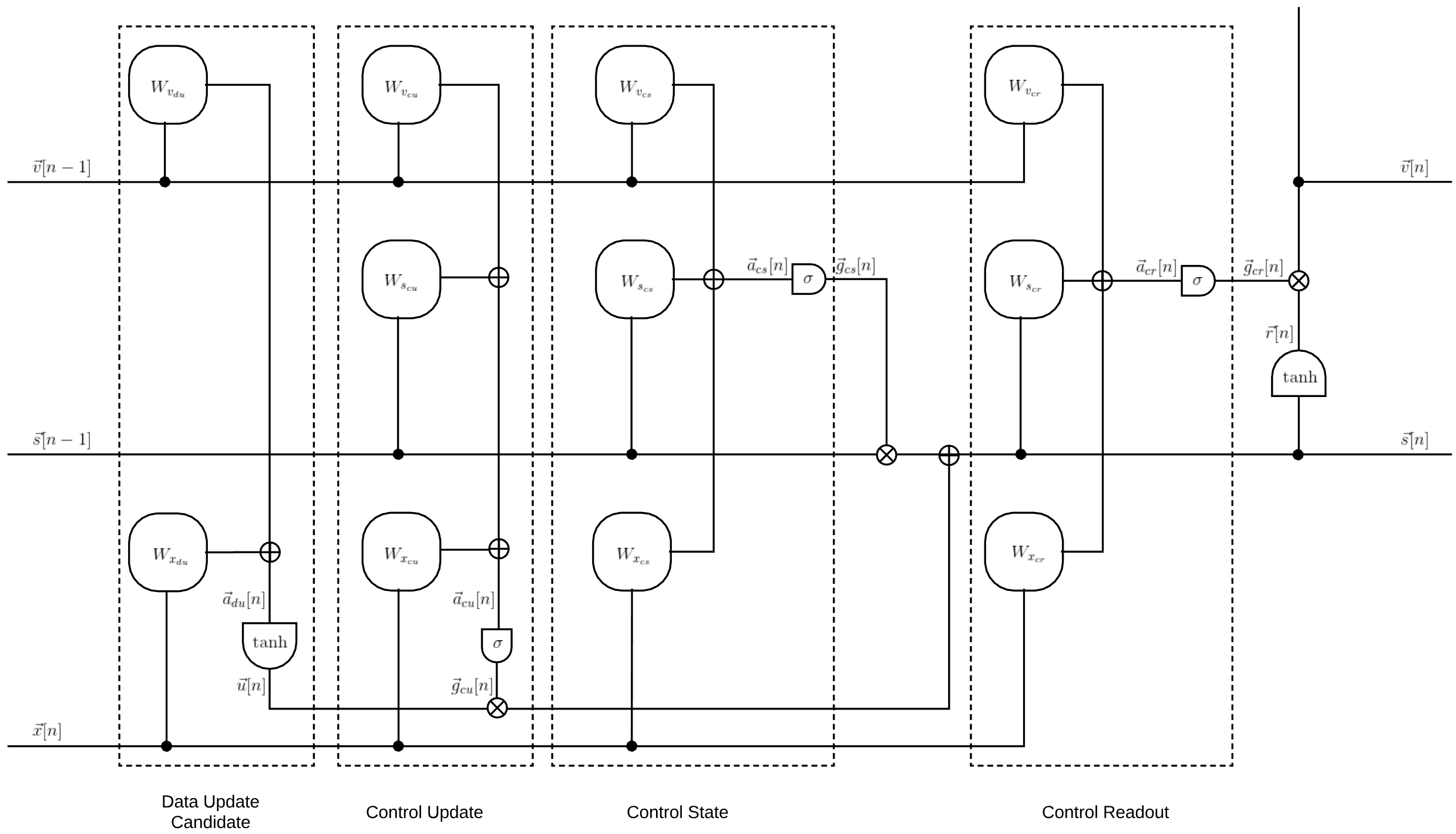}

\caption{Vanilla LSTM network cell from Figure \ref{fig:VanillaLSTMCellSchematics},
with the stages of the system delineated by dashed rectangles and
annotations that depict the function of each stage. As before, the
bias parameters, $\vec{b},$ have been omitted from the figure for
brevity. (They can be assumed to be included without the loss of generality
by appending an additional element, always set to $1$, to the input
signal vector, $\vec{x}[n]$, and increasing the row dimensions of
all corresponding weight matrices by $1$.)\label{fig:VanillaLSTMCellSchematicsAnnotatedStages}}
\end{figure}

\vphantom{}

\hphantom{}\pagebreak{}

\subsection{Vanilla LSTM System Derivatives (\textquotedblleft Backward Pass\textquotedblright )\label{subsec:Vanilla-LSTM-System-Derivatives-(Backward-Pass)}}

This section derives the equations that are necessary for training
the Vanilla LSTM network cell, unrolled for $K$ steps, using Back
Propagation Through Time (BPTT). To obtain the update equations for
the parameters of the system, two auxiliary ``backward-moving''
gradient sequences, indexed by $n$, are computed first: $\vec{\chi}[n]$,
the total partial derivative of the objective function, $E$, with
respect to the externally-accessible (observable) value signal, $\vec{v}[n]$,
and $\vec{\psi}[n]$, the total partial derivative of the objective
function, $E$, with respect to the state signal, $\vec{s}[n]$. The
decision to ``anchor'' the chain rule at the border of the cell
is made judiciously, guided by the principles of modular design. Expressing
every intra-cell total partial derivative in terms of $\vec{\chi}[n]$
(instead of explicitly computing the total partial derivative of the
objective function, $E$, with respect to each variable of the cell
\citep{journals/corr/PalangiDSGHCSW15}) reduces the number of intermediate
variables. This makes the equations for the backward pass straightforward
and a natural fit for an implementation as a pluggable module \citep{Werbos:88gasmarket,werbos:bptt1990,phd/de/Graves2008}.

Due to the backward-moving recursion of $\vec{\chi}[n]$ and $\vec{\psi}[n]$
(the gradient sequences propagate in the direction opposite to that
of the state signal, $\vec{s}[n]$, as a function of the step index,
$n$), the values of $\vec{\chi}[n]$ and $\vec{\psi}[n]$ at the
index, $n$, depend on the values of the same quantities at the index,
$n+1$, subject to the initial conditions. Once $\vec{\chi}[n]$ and
$\vec{\psi}[n]$ are known, they are used to compute the total partial
derivatives of the objective function, $E$, with respect to the accumulation
nodes for each value of the index, $n$. These intermediate gradient
sequences, named $\vec{\alpha}_{cs}[n]$, $\vec{\alpha}_{cu}[n]$,
$\vec{\alpha}_{cr}[n]$, and $\vec{\alpha}_{du}[n]$, allocate the
amounts contributed by the signals associated with the step at the
index, $n$, to the total partial derivatives of the objective function,
$E$, with respect to the model parameters. By the definition of the
total derivative, these contributions have to be summed across all
steps, $0\leq n\leq K-1$, to produce the total partial derivatives
of the objective function, $E$, with respect to the model parameters.

During the inference phase of the LSTM system, only $\vec{x}[n]$
(the input signal) and $\vec{v}[n]$ (the value signal) are externally
accessible (i.e., observable). The cell accepts the input signal at
each step and computes the value signal for all steps. All the other
intermediate signals are available only to the internal components
and nodes of the cell, with the exception of $\vec{s}[n]$ (state
signal) and $\vec{v}[n]$ (value signal), which serve both the inter-
and the intra-step purposes throughout the unrolled sequence.

The cell's value signal, $\vec{v}[n]$, at the step with the index,
$n$, can be further transformed to produce the output signal, $\vec{y}[n]$
(e.g., a commonly used form of $\vec{y}[n]$ may be obtained by computing
a linear transformation of $\vec{v}[n]$, followed by a softmax operator,
or a different decision function). Likewise, the input signal, too,
may result from the transformation of the original raw data. For example,
one kind of input pre-processing can convert the vocabulary ``one-hot''
vector into a more compact representation. Also, for applications
where the input data set can be collected for the entire segment at
once, input samples that lie within a small window surrounding the
given step can be combined so as to enhance the system's ``attention''
to context. A non-causal input filter, designed for this purpose,
will be introduced in Section \ref{subsec:Exernal-Input-Context-Windows}
as part of extending the Vanilla LSTM cell.

We start by computing the derivatives of the warping functions from
their definitions in Equation \ref{eq:VanillaLSTMCellWarpingFunctionControl}
and Equation \ref{eq:VanillaLSTMCellWarpingFunctionData}, respectively:
\begin{align}
\frac{dG_{c}(z)}{dz} & =G_{c}(z)(1-G_{c}(z))\label{eq:VanillaLSTMCellDerivativeOfWarpingFunctionControl}\\
\frac{dG_{d}(z)}{dz} & =1-(G_{d}(z))^{2}\label{eq:VanillaLSTMCellDerivativeOfWarpingFunctionData}
\end{align}

Next, we anchor the chain rule at the border of the cell by defining
$\vec{\chi}[n]$ as the total partial derivative of the objective
function, $E$, with respect to the externally-accessible (observable)
value signal, $\vec{v}[n]$, as follows:
\begin{align}
\vec{\chi}[n] & \equiv\vec{\nabla}_{\vec{v}[n]}E=\frac{\partial E}{\partial\vec{v}[n]}\label{eq:VanillaLSTMCellDelEDelValueStepTotalDefinition}
\end{align}
As will become imminently evident, having $\vec{\chi}[n]$ not only
makes training equations for the Vanilla LSTM cell amenable for a
modular implementation at the step level, but also greatly simplifies
them.\pagebreak{}

We also define the total partial derivatives of the objective function,
$E$, with respect to three intermediate (i.e., away from the border)
variables and another border variable of the Vanilla LSTM cell:
\begin{align}
\vec{\rho}[n] & \equiv\vec{\nabla}_{\vec{r}[n]}E=\frac{\partial E}{\partial\vec{r}[n]}\label{eq:VanillaLSTMCellDelEDelReadoutStepTotalDefinition}\\
\vec{\gamma}[n] & \equiv\vec{\nabla}_{\vec{g}[n]}E=\frac{\partial E}{\partial\vec{g}[n]}\label{eq:VanillaLSTMCellDelEDelGateControlGenericStepTotalDefinition}\\
\vec{\alpha}[n] & \equiv\vec{\nabla}_{\vec{a}[n]}E=\frac{\partial E}{\partial\vec{a}[n]}\label{eq:VanillaLSTMCellDelEDelAccumulationGenericStepTotalDefinition}\\
\vec{\psi}[n] & \equiv\vec{\nabla}_{\vec{s}[n]}E=\frac{\partial E}{\partial\vec{s}[n]}\label{eq:VanillaLSTMCellDelEDelStateStepTotalDefinition}
\end{align}

The border quantity in Equation \ref{eq:VanillaLSTMCellDelEDelStateStepTotalDefinition},
$\vec{\psi}[n]$, is of special significance as it is the total partial
derivative of the objective function, $E$, with respect to the state
signal, $\vec{s}[n]$, at the index, $n$, of the Vanilla LSTM cell.
As in the standard RNN, all parameter updates in the Vanilla LSTM
network depend on $\vec{\psi}[n]$, making it the most important error
gradient sequence of the system.

The backward pass equations are obtained by utilizing these border
and intermediate derivatives in the application of the chain rule
to the Vanilla LSTM cell, defined by Equations \ref{eq:VanillaLSTMCellAccumulationControlUpdate}
-- \ref{eq:VanillaLSTMCellValue}: 
\begin{align}
\vec{\chi}[n] & =\left(\frac{\partial\vec{y}[n]}{\partial\vec{v}[n]}\right)^{T}\left(\frac{\partial E}{\partial\vec{y}[n]}\right)+\vec{f_{\chi}}[n+1]\label{eq:VanillaLSTMCellDelEDelValueStepTotal}\\
\vec{\rho}[n] & =\left(\frac{\partial\vec{v}[n]}{\partial\vec{r}[n]}\right)^{T}\left(\frac{\partial E}{\partial\vec{v}[n]}\right)=\vec{(\nabla}_{\vec{v}[n]}E)\odot\vec{g}_{cr}[n]=\vec{\chi}[n]\odot\vec{g}_{cr}[n]\label{eq:VanillaLSTMCellDelEDelReadoutStepTotal}\\
\vec{\gamma}_{cr}[n] & =\frac{\partial E}{\partial\vec{v}[n]}\frac{\partial\vec{v}[n]}{\partial\vec{g}_{cr}[n]}=\vec{(\nabla}_{\vec{v}[n]}E)\odot\vec{r}[n]=\vec{\chi}[n]\odot\vec{r}[n]\label{eq:VanillaLSTMCellDelEDelGateControlReadoutStepTotal}\\
\vec{\alpha}_{cr}[n] & =\vec{\gamma}_{cr}[n]\odot\frac{\partial\vec{g}_{cr}[n]}{\partial\vec{a}_{cr}[n]}=\vec{\gamma}_{cr}[n]\odot\frac{dG_{c}(z)}{dz}\rfloor_{z=\vec{a}_{cr}[n]}=\vec{\chi}[n]\odot\vec{r}[n]\odot\frac{dG_{c}(z)}{dz}\rfloor_{z=\vec{a}_{cr}[n]}\label{eq:VanillaLSTMCellDelEDelAccumulationControlReadoutStepTotal}\\
\vec{\psi}[n] & =\vec{\rho}[n]\odot\frac{\partial\vec{r}[n]}{\partial\vec{s}[n]}+\frac{\partial\vec{a}_{cr}[n]}{\partial\vec{s}[n]}\vec{\alpha}_{cr}[n]+\vec{f_{\psi}}[n+1]\\
 & =\vec{\rho}[n]\odot\frac{dG_{d}(\vec{z})}{d\vec{z}}\rfloor_{z=\vec{s}[n]}+W_{s_{cr}}\vec{\alpha}_{cr}[n]+\vec{f_{\psi}}[n+1]\\
 & =\vec{\chi}[n]\odot\vec{g}_{cr}[n]\odot\frac{dG_{d}(\vec{z})}{d\vec{z}}\rfloor_{z=\vec{s}[n]}+W_{s_{cr}}\vec{\alpha}_{cr}[n]+\vec{f_{\psi}}[n+1]\label{eq:VanillaLSTMCellDelEDelStateStepTotal}\\
\vec{\alpha}_{cs}[n] & =\vec{\psi}[n]\odot\frac{\partial\vec{s}[n]}{\partial\vec{g}_{cs}[n]}\odot\frac{\partial\vec{g}_{cs}[n]}{\partial\vec{a}_{cs}[n]}=\vec{\psi}[n]\odot\vec{s}[n-1]\odot\frac{dG_{c}(\vec{z})}{d\vec{z}}\rfloor_{z=\vec{a}_{cs}[n]}\label{eq:VanillaLSTMCellDelEDelAccumulationControlStateStepTotal}\\
\vec{\alpha}_{cu}[n] & =\vec{\psi}[n]\odot\frac{\partial\vec{s}[n]}{\partial\vec{g}_{cu}[n]}\odot\frac{\partial\vec{g}_{cu}[n]}{\partial\vec{a}_{cu}[n]}=\vec{\psi}[n]\odot\vec{u}[n]\odot\frac{dG_{c}(\vec{z})}{d\vec{z}}\rfloor_{z=\vec{a}_{cu}[n]}\label{eq:VanillaLSTMCellDelEDelAccumulationControlUpdateStepTotal}\\
\vec{\alpha}_{du}[n] & =\vec{\psi}[n]\odot\frac{\partial\vec{s}[n]}{\partial\vec{u}[n]}\odot\frac{dG_{d}(\vec{z})}{d\vec{z}}\rfloor_{z=\vec{a}_{du}[n]}=\vec{\psi}[n]\odot\vec{g}_{cu}[n]\odot\frac{dG_{d}(\vec{z})}{d\vec{z}}\rfloor_{z=\vec{a}_{du}[n]}\label{eq:VanillaLSTMCellDelEDelAccumulationDataUpdateStepTotal}
\end{align}
where:
\begin{align}
\vec{f_{\chi}}[n+1] & =W_{v_{cu}}^{T}\vec{\alpha}_{cu}[n+1]+W_{v_{cs}}^{T}\vec{\alpha}_{cs}[n+1]+W_{v_{cr}}^{T}\vec{\alpha}_{cr}[n+1]+W_{v_{du}}^{T}\vec{\alpha}_{du}[n+1]\label{eq:VanillaLSTMCellDelEDelValueNextStep}\\
\vec{f_{\psi}}[n+1] & =W_{s_{cu}}^{T}\vec{\alpha}_{cu}[n+1]+W_{s_{cs}}^{T}\vec{\alpha}_{cs}[n+1]+\vec{g}_{cs}[n+1]\odot\vec{\psi}[n+1]\label{eq:VanillaLSTMCellDelEDelStateNextStep}
\end{align}
are the portions of the total derivative of the objective function,
$E$, with respect to the cell's value signal and the cell's state
signal, respectively, contributed by the quantities evaluated at the
step with the index, $n+1$.

The total partial derivatives of the objective function, $E$, with
respect to the model parameters at the step with the index, $n$,
are directly proportional to the ``accumulation derivatives''\footnote{As noted in Section \ref{sec:From-RNN-to-Vanilla-LSTM-Network}, during
training, the total derivatives of the objective function, $E$, with
respect to the cell's accumulation signals (``accumulation derivatives'')
can become excessively large. In order to prevent these kinds of numerical
problems, all accumulation derivatives are clipped to lie between
$-1$ and $1$, a range that is suitable for the particular choices
of the control and data warping functions \citep{phd/de/Graves2008,journals/corr/Graves13,pascanu2013difficulty}.}, given by Equation \ref{eq:VanillaLSTMCellDelEDelAccumulationControlReadoutStepTotal},
Equation \ref{eq:VanillaLSTMCellDelEDelAccumulationControlStateStepTotal},
Equation \ref{eq:VanillaLSTMCellDelEDelAccumulationControlUpdateStepTotal},
and \\\pagebreak\begin{samepage}

Equation \ref{eq:VanillaLSTMCellDelEDelAccumulationDataUpdateStepTotal}.
Hence, by referring once again to the definition of the Vanilla LSTM
cell in Equations \ref{eq:VanillaLSTMCellAccumulationControlUpdate}
-- \ref{eq:VanillaLSTMCellValue}, we obtain:

\vspace{-5mm}

\begin{align}
\frac{\partial E}{\partial W_{x_{cu}}}[n] & =\vec{\alpha}_{cu}[n]\vec{x}^{T}[n]\label{eq:VanillaLSTMCellDelEDelInputMatrixControlUpdateStepTotal}\\
\frac{\partial E}{\partial W_{s_{cu}}}[n] & =\vec{\alpha}_{cu}[n]\vec{s}^{T}[n-1]\label{eq:VanillaLSTMCellDelEDelStateMatrixControlUpdateStepTotal}\\
\frac{\partial E}{\partial W_{v_{cu}}}[n] & =\vec{\alpha}_{cu}[n]\vec{v}^{T}[n-1]\label{eq:VanillaLSTMCellDelEDelValueMatrixControlUpdateStepTotal}\\
\frac{\partial E}{\partial\vec{b}_{cu}}[n] & =\vec{\alpha}_{cu}[n]\label{eq:VanillaLSTMCellDelEDelBiasVectorControlUpdateStepTotal}\\
\frac{\partial E}{\partial W_{x_{cs}}}[n] & =\vec{\alpha}_{cs}[n]\vec{x}^{T}[n]\label{eq:VanillaLSTMCellDelEDelInputMatrixControlStateStepTotal}\\
\frac{\partial E}{\partial W_{s_{cs}}}[n] & =\vec{\alpha}_{cs}[n]\vec{s}^{T}[n-1]\label{eq:VanillaLSTMCellDelEDelStateMatrixControlStateStepTotal}\\
\frac{\partial E}{\partial W_{v_{cs}}}[n] & =\vec{\alpha}_{cs}[n]\vec{v}^{T}[n-1]\label{eq:VanillaLSTMCellDelEDelValueMatrixControlStateStepTotal}\\
\frac{\partial E}{\partial\vec{b}_{cs}}[n] & =\vec{\alpha}_{cs}[n]\label{eq:VanillaLSTMCellDelEDelBiasVectorControlStateStepTotal}\\
\frac{\partial E}{\partial W_{x_{cr}}}[n] & =\vec{\alpha}_{cr}[n]\vec{x}^{T}[n]\label{eq:VanillaLSTMCellDelEDelInputMatrixControlReadoutStepTotal}\\
\frac{\partial E}{\partial W_{s_{cr}}}[n] & =\vec{\alpha}_{cr}[n]\vec{s}^{T}[n]\label{eq:VanillaLSTMCellDelEDelStateMatrixControlReadoutStepTotal}\\
\frac{\partial E}{\partial W_{v_{cr}}}[n] & =\vec{\alpha}_{cr}[n]\vec{v}^{T}[n-1]\label{eq:VanillaLSTMCellDelEDelValueMatrixControlReadoutStepTotal}\\
\frac{\partial E}{\partial\vec{b}_{cr}}[n] & =\vec{\alpha}_{cr}[n]\label{eq:VanillaLSTMCellDelEDelBiasVectorControlReadoutStepTotal}\\
\frac{\partial E}{\partial W_{x_{du}}}[n] & =\vec{\alpha}_{du}[n]\vec{x}^{T}[n]\label{eq:VanillaLSTMCellDelEDelInputMatrixDataUpdateStepTotal}\\
\frac{\partial E}{\partial W_{v_{du}}}[n] & =\vec{\alpha}_{du}[n]\vec{v}^{T}[n-1]\label{eq:VanillaLSTMCellDelEDelValueMatrixDataUpdateStepTotal}\\
\frac{\partial E}{\partial\vec{b}_{du}}[n] & =\vec{\alpha}_{du}[n]\label{eq:VanillaLSTMCellDelEDelBiasVectorDataUpdateStepTotal}
\end{align}
Arranged congruently with Equation \ref{eq:VanillaLSTMCellModelParametersThematicallyArranged},
the total partial derivative of the objective function, $E$, with
respect to the model parameters, $\Theta$, at the step with the index,
$n$, is:
\begin{align}
\begin{gathered}\frac{\partial E}{\partial\Theta}[n]=\end{gathered}
 & \begin{Bmatrix}\frac{\partial E}{\partial W_{x_{cu}}}[n], & \frac{\partial E}{\partial W_{s_{cu}}}[n], & \frac{\partial E}{\partial W_{v_{cu}}}[n], & \frac{\partial E}{\partial\vec{b}_{cu}}[n],\\
\frac{\partial E}{\partial W_{x_{cs}}}[n], & \frac{\partial E}{\partial W_{s_{cs}}}[n], & \frac{\partial E}{\partial W_{v_{cs}}}[n], & \frac{\partial E}{\partial\vec{b}_{cs}}[n],\\
\frac{\partial E}{\partial W_{x_{cr}}}[n], & \frac{\partial E}{\partial W_{s_{cr}}}[n], & \frac{\partial E}{\partial W_{v_{cr}}}[n], & \frac{\partial E}{\partial\vec{b}_{cr}}[n],\\
\frac{\partial E}{\partial W_{x_{du}}}[n], &  & \frac{\partial E}{\partial W_{v_{du}}}[n], & \frac{\partial E}{\partial\vec{b}_{du}}[n]
\end{Bmatrix}.\label{eq:VanillaLSTMCellDelEDelModelParametersStepTotalThematicallyArranged}
\end{align}
When the Vanilla LSTM cell is unrolled for $K$ steps in order to
cover one full segment of training samples, the same set of the model
parameters, $\Theta$, is shared by all the steps. This is because
$\Theta$ is the parameter of the Vanilla LSTM cell as a whole. Consequently,
the total derivative of the objective function, $E$, with respect
to the model parameters, $\Theta$, has to include the contributions
from all steps of the unrolled sequence:
\begin{align}
\frac{dE}{d\Theta} & =\sum_{n=0}^{K-1}\frac{\partial E}{\partial\Theta}[n]\label{eq:VanillaLSTMCellDEDModelParametersSegmentTotal}
\end{align}
The result from Equation \ref{eq:VanillaLSTMCellDEDModelParametersSegmentTotal}
can now be used as part of optimization by Gradient Descent. In practice,
Equation \ref{eq:VanillaLSTMCellDEDModelParametersSegmentTotal} is
computed for a batch of segments\footnote{Depending on the application, the batch sizes typically range between
$16$ and $128$ segments.}, and the sum of the parameter gradients over all segments in the
batch is then supplied to the Gradient Descent algorithm for updating
the model parameters\footnote{Regularization, while outside of the scope of the present article,
is an essential aspect of machine learning model training process
\citep{zaremba2014recurrent}.}.

\end{samepage}

\mbox{}

\begin{samepage}

\subsection{Error Gradient Sequences in Vanilla LSTM System\label{subsec:Error-Gradient-Sequences-In-Vanilla-LSTM-System}}

Section \ref{sec:From-RNN-to-Vanilla-LSTM-Network} mentions that
because of the action of the gates, the LSTM network is more compatible
with the Gradient Descent training procedure than the standard RNN
system is. As discussed in Section \ref{sec:RNN-Training-Difficulties}
and Section \ref{subsec:Vanilla-LSTM-System-Derivatives-(Backward-Pass)},
for Gradient Descent to be effective, the elements of $\frac{\partial E}{\partial\Theta}[n]$
in Equation \ref{eq:VanillaLSTMCellDelEDelModelParametersStepTotalThematicallyArranged}
must be well-behaved numerically. In particular, this implies that
the intermediate gradient sequences, $\vec{\alpha}_{cs}[n]$, $\vec{\alpha}_{cu}[n]$,
$\vec{\alpha}_{cr}[n]$, and $\vec{\alpha}_{du}[n]$, and hence the
border gradient sequences, $\vec{\chi}[n]$ and $\vec{\psi}[n]$,
must be able to sustain a steady flow of information over long ranges
of the step index, $n$. Expanding Equation \ref{eq:VanillaLSTMCellDelEDelStateStepTotal}
produces:
\begin{align}
\vec{\psi}[n] & =\vec{\chi}[n]\odot\vec{g}_{cr}[n]\odot\frac{dG_{d}(\vec{z})}{d\vec{z}}\rfloor_{z=\vec{s}[n]}+W_{s_{cr}}\vec{\chi}[n]\odot\vec{r}[n]\odot\frac{dG_{c}(z)}{dz}\rfloor_{z=\vec{a}_{cr}[n]}+\vec{f_{\psi}}[n+1]\label{eq:VanillaLSTMCellDelEDelStateStepTotalExpandedDelEDelAccumulationControlReadoutStepTotal}\\
 & =\left(\left(\frac{\partial\vec{y}[n]}{\partial\vec{v}[n]}\right)^{T}\left(\frac{\partial E}{\partial\vec{y}[n]}\right)+\vec{f_{\chi}}[n+1]\right)\odot\vec{g}_{cr}[n]\odot\frac{dG_{d}(\vec{z})}{d\vec{z}}\rfloor_{z=\vec{s}[n]}\nonumber \\
 & +W_{s_{cr}}\left(\left(\frac{\partial\vec{y}[n]}{\partial\vec{v}[n]}\right)^{T}\left(\frac{\partial E}{\partial\vec{y}[n]}\right)+\vec{f_{\chi}}[n+1]\right)\odot\vec{r}[n]\odot\frac{dG_{c}(z)}{dz}\rfloor_{z=\vec{a}_{cr}[n]}+\vec{f_{\psi}}[n+1]\label{eq:VanillaLSTMCellDelEDelStateStepTotalExpandedDelEDelValueStepTotal}
\end{align}

\vspace{-3mm}
According to Equation \ref{eq:VanillaLSTMCellDelEDelValueNextStep}
and Equation \ref{eq:VanillaLSTMCellDelEDelStateNextStep}, both $\vec{\chi}[n]$
and $\vec{\psi}[n]$ depend on $\vec{\psi}[n+1]$. Hence, we can follow
the approach in Section \ref{sec:RNN-Training-Difficulties} to analyze
the dependence of $\vec{\psi}[n]$ on $\left\langle \vec{\psi}[k]\right\rangle \left\rfloor _{n<k\leq K-1}\right.$
in order to gauge the sensitivity of the LSTM system to factors conducive
to gradient decay. Applying the change of indices, $n\longrightarrow k-1$,
and the  chain rule to Equation \ref{eq:VanillaLSTMCellDelEDelStateStepTotalExpandedDelEDelValueStepTotal}
yields\footnote{The notation, $\mathit{\mathrm{diag}}\bigg[\vec{z}\bigg]$, represents
a diagonal matrix, in which the elements of the vector, $\vec{z}$,
occupy the main diagonal.}:
\begin{align}
\frac{\partial\vec{\psi}[k-1]}{\partial\vec{\psi}[k]} & =\left(\frac{\partial\vec{\psi}[k-1]}{\partial\vec{f_{\chi}}[k]}\right)\left(\frac{\partial\vec{f_{\chi}}[k]}{\partial\vec{\psi}[k]}\right)+\left(\frac{\partial\vec{\psi}[k-1]}{\partial\vec{f_{\psi}}[k]}\right)\left(\frac{\partial\vec{f_{\psi}}[k]}{\partial\vec{\psi}[k]}\right)\label{eq:VanillaLSTMCellDelEDelStateFlowRateOneStepChainRuleStatement}\\
= & \left(\frac{\partial\vec{\psi}[k-1]}{\partial\vec{f_{\chi}}[k]}\right)\left\{ \left(\frac{\partial\vec{f_{\chi}}[k]}{\partial\vec{\alpha}_{cu}[k]}\right)\left(\frac{\partial\vec{\alpha}_{cu}[k]}{\partial\vec{\psi}[k]}\right)+\left(\frac{\partial\vec{f_{\chi}}[k]}{\partial\vec{\alpha}_{cs}[k]}\right)\left(\frac{\partial\vec{\alpha}_{cs}[k]}{\partial\vec{\psi}[k]}\right)\right.\nonumber \\
 & \left.+\left(\frac{\partial\vec{f_{\chi}}[k]}{\partial\vec{\alpha}_{du}[k]}\right)\left(\frac{\partial\vec{\alpha}_{du}[k]}{\partial\vec{\psi}[k]}\right)\right\} +\left(\frac{\partial\vec{\psi}[k-1]}{\partial\vec{f_{\psi}}[k]}\right)\left\{ \left(\frac{\partial\vec{f_{\psi}}[k]}{\partial\vec{\alpha}_{cu}[k]}\right)\left(\frac{\partial\vec{\alpha}_{cu}[k]}{\partial\vec{\psi}[k]}\right)\right.\nonumber \\
 & \left.+\left(\frac{\partial\vec{f_{\psi}}[k]}{\partial\vec{\alpha}_{cs}[k]}\right)\left(\frac{\partial\vec{\alpha}_{cs}[k]}{\partial\vec{\psi}[k]}\right)+\mathit{\mathrm{diag}}\bigg[\vec{g}_{cs}[k]\bigg]\right\} \label{eq:VanillaLSTMCellDelEDelStateFlowRateOneStepExpandedNextStepDerivatives}\\
 & =\left(\mathrm{diag}\left[\vec{g}_{cr}[k-1]\odot\frac{dG_{d}(\vec{z})}{d\vec{z}}\rfloor_{z=\vec{s}[k-1]}\right]+W_{s_{cr}}\mathrm{diag}\left[\vec{r}[k-1]\odot\frac{dG_{c}(z)}{dz}\rfloor_{z=\vec{a}_{cr}[k-1]}\right]\right)\nonumber \\
 & \times\left\{ W_{v_{cu}}\mathit{\mathrm{diag}}\left[\vec{u}[k]\odot\frac{dG_{c}(\vec{z})}{d\vec{z}}\rfloor_{z=\vec{a}_{cu}[k]}\right]+W_{v_{cs}}\mathit{\mathrm{diag}}\left[\vec{s}[k-1]\odot\frac{dG_{c}(\vec{z})}{d\vec{z}}\rfloor_{z=\vec{a}_{cs}[k]}\right]\right.\nonumber \\
 & \left.+W_{v_{du}}\mathit{\mathrm{diag}}\left[\vec{g}_{cu}[k]\odot\frac{dG_{d}(\vec{z})}{d\vec{z}}\rfloor_{z=\vec{a}_{du}[k]}\right]\right\} +\left(W_{s_{cu}}\mathit{\mathrm{diag}}\left[\vec{u}[k]\odot\frac{dG_{c}(\vec{z})}{d\vec{z}}\rfloor_{z=\vec{a}_{cu}[k]}\right]\right.\nonumber \\
 & \left.+W_{s_{cs}}\mathit{\mathrm{diag}}\left[\vec{s}[k-1]\odot\frac{dG_{c}(\vec{z})}{d\vec{z}}\rfloor_{z=\vec{a}_{cs}[k]}\right]+\mathit{\mathrm{diag}}\bigg[\vec{g}_{cs}[k]\bigg]\right)\label{eq:VanillaLSTMCellDelEDelStateFlowRateOneStepExpandedDelEDelValueStepTotal}\\
 & =\mathit{\mathrm{diag}}\left[\vec{u}[k]\odot\frac{dG_{c}(\vec{z})}{d\vec{z}}\rfloor_{z=\vec{a}_{cu}[k]}\right]\nonumber \\
 & \times\left\{ W_{v_{cu}}\left(\mathrm{diag}\left[\vec{g}_{cr}[k-1]\odot\frac{dG_{d}(\vec{z})}{d\vec{z}}\rfloor_{z=\vec{s}[k-1]}\right]+W_{s_{cr}}\mathrm{diag}\left[\vec{r}[k-1]\odot\frac{dG_{c}(z)}{dz}\rfloor_{z=\vec{a}_{cr}[k-1]}\right]\right)+W_{s_{cu}}\right\} \nonumber \\
 & +\mathit{\mathrm{diag}}\left[\vec{s}[k-1]\odot\frac{dG_{c}(\vec{z})}{d\vec{z}}\rfloor_{z=\vec{a}_{cs}[k]}\right]\nonumber \\
 & \times\left\{ W_{v_{cs}}\left(\mathrm{diag}\left[\vec{g}_{cr}[k-1]\odot\frac{dG_{d}(\vec{z})}{d\vec{z}}\rfloor_{z=\vec{s}[k-1]}\right]+W_{s_{cr}}\mathrm{diag}\left[\vec{r}[k-1]\odot\frac{dG_{c}(z)}{dz}\rfloor_{z=\vec{a}_{cr}[k-1]}\right]\right)+W_{s_{cs}}\right\} \nonumber \\
 & +\mathit{\mathrm{diag}}\left[\vec{g}_{cu}[k]\odot\frac{dG_{d}(\vec{z})}{d\vec{z}}\rfloor_{z=\vec{a}_{du}[k]}\right]\nonumber \\
 & \times\left\{ W_{v_{du}}\left(\mathrm{diag}\left[\vec{g}_{cr}[k-1]\odot\frac{dG_{d}(\vec{z})}{d\vec{z}}\rfloor_{z=\vec{s}[k-1]}\right]+W_{s_{cr}}\mathrm{diag}\left[\vec{r}[k-1]\odot\frac{dG_{c}(z)}{dz}\rfloor_{z=\vec{a}_{cr}[k-1]}\right]\right)\right\} +\mathit{\mathrm{diag}}\bigg[\vec{g}_{cs}[k]\bigg]\label{eq:VanillaLSTMCellDelEDelStateFlowRateOneStepTermsCollected}\\
 & =Q\left(k-1,k;\tilde{\Theta}\right)+\mathit{\mathrm{diag}}\bigg[\vec{g}_{cs}[k]\bigg]\label{eq:VanillaLSTMCellDelEDelStateFlowRateOneStepEssentialTerms}
\end{align}

\vspace{-3mm}
\end{samepage}

\pagebreak{}

\begin{samepage}

where $\tilde{\Theta}=\left\{ W_{s_{cu}},W_{v_{cu}},W_{s_{cs}},W_{v_{cs}},W_{s_{cr}},W_{v_{du}}\right\} $,
and $Q\left(k-1,k;\tilde{\Theta}\right)$ subsumes all the terms in
$\frac{\partial\vec{\psi}[k-1]}{\partial\vec{\psi}[k]}$, excluding
$\mathit{\mathrm{diag}}\bigg[\vec{g}_{cs}[k]\bigg]$.

\vspace{-2mm}

Extrapolating $\frac{\partial\vec{\psi}[k-1]}{\partial\vec{\psi}[k]}$
from the step with the index, $n,$ to the step with the index, $l\leq K-1$,
where $l\gg n$, gives:\vspace{-3mm}
\begin{align}
\frac{\partial\vec{\psi}[n]}{\partial\vec{\psi}[l]} & =\prod_{k=n+1}^{l}\frac{\partial\vec{\psi}[k-1]}{\partial\vec{\psi}[k]}\label{eq:VanillaLSTMCellDelEDelStateFlowRateLongRange}
\end{align}

Assuming that the issue of ``exploding gradients'' is handled as
a separate undertaking, the present focus is on the effectiveness
of the LSTM network at assuaging the ``vanishing gradients'' problem.
If the value of the total partial derivative of the objective function,
$E$, with respect to the state signal, $\vec{s}[l]$, at the index,
$l\leq K-1$, where $l\gg n$, is considered to be an impulse of the
error gradient, then Equation \ref{eq:VanillaLSTMCellDelEDelStateFlowRateLongRange}
computes the fractional amount of this corrective stimulus that did
not dissipate across the large number ($l-n$) of steps and is preserved
in $\vec{\psi}[n]$, thereby able to contribute to updating the model
parameters.

The propensity of the LSTM system toward diminishing error gradients
during training can be assessed by evaluating the different modes
of Equation \ref{eq:VanillaLSTMCellDelEDelStateFlowRateOneStepEssentialTerms}
that can cause $\left\Vert \frac{\partial\vec{\psi}[n]}{\partial\vec{\psi}[l]}\right\Vert \approx0$
in Equation \ref{eq:VanillaLSTMCellDelEDelStateFlowRateLongRange}
when $l-n$ is large. A sufficient condition for driving the residual
$\left\Vert \frac{\partial\vec{\psi}[n]}{\partial\vec{\psi}[l]}\right\Vert $
to zero is maintaining $\left\Vert \frac{\partial\vec{\psi}[k-1]}{\partial\vec{\psi}[k]}\right\Vert <1$
at each step with the index, $k$. There are three possibilities for
this outcome:
\begin{itemize}
\item $Q\left(k-1,k;\tilde{\Theta}\right)=\big[0\big]$ and $\vec{g}_{cs}[k]=\vec{0}$
for all values of the step index, $k$. This is the case of the network
being perpetually ``at rest'' (i.e., in a trivial state), which
is not interesting from the practical standpoint.
\item $\vec{g}_{cs}[k]\approx\vec{1}$ and $Q\left(k-1,k;\tilde{\Theta}\right)=-\mathit{\mathrm{diag}}\bigg[\vec{g}_{cs}[k]\bigg]$;
in other words, $Q\left(k-1,k;\tilde{\Theta}\right)$ and $\mathit{\mathrm{diag}}\bigg[\vec{g}_{cs}[k]\bigg]$
``cancel each other out'' for some value of the step index, $k$.
However, satisfying this condition would require a very careful orchestration
of all signals, which is highly unlikely to occur in practice, making
this pathological case unrealistic.
\item The spectral radius of $\left[Q\left(k-1,k;\tilde{\Theta}\right)+\mathit{\mathrm{diag}}\bigg[\vec{g}_{cs}[k]\bigg]\right]$
in Equation \ref{eq:VanillaLSTMCellDelEDelStateFlowRateOneStepEssentialTerms}
is less than unity for all values of the step index, $k$. In this
situation, the error gradient will degrade to negligible levels after
a sufficiently large number of steps. Nevertheless, this behavior
would not be due to a degenerate mode of the system, but as a consequence
of the particular patterns, occurring in the training data. In other
words, some dependencies are naturally short-range.
\end{itemize}
For all remaining cases, the magnitude of $\frac{\partial\vec{\psi}[k-1]}{\partial\vec{\psi}[k]}$
is governed by the triangle inequality:
\begin{align}
\left\Vert \frac{\partial\vec{\psi}[k-1]}{\partial\vec{\psi}[k]}\right\Vert  & \leq\left\Vert Q\left(k-1,k;\tilde{\Theta}\right)\right\Vert +\left\Vert \mathit{\mathrm{diag}}\bigg[\vec{g}_{cs}[k]\bigg]\right\Vert \label{eq:VanillaLSTMCellDelEDelStateFlowRateOneStepEssentialTermsNormTriangleInequality}
\end{align}
The most emblematic regime of the LSTM network arises when $\left\Vert Q\left(k-1,k;\tilde{\Theta}\right)\right\Vert <1$.
Examining the terms in\\ Equation \ref{eq:VanillaLSTMCellDelEDelStateFlowRateOneStepTermsCollected}
exposes multiple ways of restricting signals and parameters that would
create favorable circumstances for this to hold. The following list
prescribes several plausible alternatives (all conditions in each
arrangement must be satisfied)\footnote{Note that all data signals in Equation \ref{eq:VanillaLSTMCellDelEDelStateFlowRateOneStepTermsCollected}
($\vec{u}$, $\vec{s}$, and $\vec{r}$) have the dynamic range of
$2$, because they are at the output node of the warping function,
$G_{d}(\vec{z})$, which is the hyperbolic tangent. Hence, for the
respective term to have the norm of $<1$, the associated parameter
matrices must have the norm of $<\frac{1}{2}$.}:
\begin{itemize}
\item $\left\Vert W_{s_{cu}}\right\Vert <\frac{1}{2}$, $\left\Vert W_{v_{cu}}\right\Vert <\frac{1}{2}$,
$\left\Vert W_{s_{cs}}\right\Vert <\frac{1}{2}$, $\left\Vert W_{v_{cs}}\right\Vert <\frac{1}{2}$,
$\left\Vert W_{v_{du}}\right\Vert <1$
\item the state signal saturates the readout data warping function, $\left\Vert W_{s_{cr}}\right\Vert <\frac{1}{2}$,
$\left\Vert W_{s_{cu}}\right\Vert <\frac{1}{2}$, $\left\Vert W_{s_{cs}}\right\Vert <\frac{1}{2}$
\item the state signal saturates the readout data warping function, the
accumulation signal for the control readout gate saturates its control
warping function, $\left\Vert W_{s_{cu}}\right\Vert <\frac{1}{2}$,
$\left\Vert W_{s_{cs}}\right\Vert <\frac{1}{2}$
\item the control readout gate is turned off, $\left\Vert W_{s_{cu}}\right\Vert <\frac{1}{2}$,
$\left\Vert W_{s_{cs}}\right\Vert <\frac{1}{2}$
\item the accumulation signals for the control update gate and the control
state gate saturate their respective control warping functions, the
update candidate accumulation signal saturates the update candidate
data warping function
\item the control update gate is turned off, the control state gate is turned
off
\end{itemize}
Since the difference between the step indices, $l-n$, is large when
the network is trained to represent long-range dependencies, the powers
of the $Q\left(k-1,k;\tilde{\Theta}\right)$ terms become negligible,
ultimately leading to:\vspace{-6mm}

\begin{align}
\left\Vert \frac{\partial\vec{\psi}[n]}{\partial\vec{\psi}[l]}\right\Vert  & \sim\prod_{k=n+1}^{l}\left\Vert \mathit{\mathrm{diag}}\bigg[\vec{g}_{cs}[k]\bigg]\right\Vert \leqslant1\label{eq:VanillaLSTMCellDelEDelStateFlowRateLongRangeSustainedGradient}
\end{align}
\vspace{-1mm}
Unlike $Q\left(k-1,k;\tilde{\Theta}\right)$, $\mathit{\mathrm{diag}}\bigg[\vec{g}_{cs}[k]\bigg]$
in Equation \ref{eq:VanillaLSTMCellDelEDelStateFlowRateOneStepEssentialTerms}
has no attenuating factors (the multiplier of $\vec{g}_{cs}[n+1]\odot\vec{\psi}[n+1]$
in Equation \ref{eq:VanillaLSTMCellDelEDelStateNextStep} is the identity
matrix). As long as the elements of $\vec{g}_{cs}[n]$ are fractions,
the error gradient will naturally decay. However, if the model is
trained to saturate $\vec{g}_{cs}[n]$ at $\vec{1}$, then the error
gradient is recirculated through Constant Error Carousel.

\end{samepage}

\pagebreak{}

\section{Extensions to the Vanilla LSTM Network\label{sec:Extensions-to-the-Vanilla-LSTM-Network}}

Since its invention, many variants and extensions of the original
LSTM network model have been researched and utilized in practice.
In this section, we will evolve the Vanilla LSTM architecture, derived
in Section \ref{sec:From-RNN-to-Vanilla-LSTM-Network} and explained
in depth in Section \ref{sec:The-Vanilla-LSTM-Network-Mechanism-in-Detail},
along three avenues. Based on the analysis in Section \ref{sec:The-Roots-of-RNN}
as well as the discussions in Section \ref{sec:From-RNN-to-Vanilla-LSTM-Network}
and Section \ref{subsec:Vanilla-LSTM-System-Derivatives-(Backward-Pass)},
we will expand the input from consisting of a single sample to combining
multiple samples within a small context window. In addition, as proposed
in Section \ref{sec:From-RNN-to-Vanilla-LSTM-Network}, we will introduce
a new gate for controlling this richer input signal. Besides these
two novel extensions, we will also include the ``recurrent projection
layer'' in the augmented model, because it proved to be advantageous
in certain sequence modeling applications \citep{conf/interspeech/SakSB14}.

\subsection{External Input Context Windows\label{subsec:Exernal-Input-Context-Windows}}

We will represent the external input context windows by linear filters
that have matrix-valued coefficients and operate on the sequence of
input samples along the dimension of the steps of the sequence produced
by unrolling the LSTM cell. In\\ Equations \ref{eq:VanillaLSTMCellAccumulationControlUpdate}
-- \ref{eq:VanillaLSTMCellAccumulationDataUpdate}, the matrix-vector
products, $W_{x_{cu}}\vec{x}[n]$, $W_{x_{cs}}\vec{x}[n]$, $W_{x_{cr}}\vec{x}[n]$,
and $W_{x_{du}}\vec{x}[n]$, respectively, which involve a single
input sample, $\vec{x}[n]$, will be replaced by the convolutions
of the context window filters, $W_{x_{cu}}[n]$, $W_{x_{cs}}[n]$,
$W_{x_{cr}}[n]$, and $W_{x_{du}}[n]$, respectively, with the input
signal, $\vec{x}[n]$, thereby involving all input samples within
the context window in the computation of the respective accumulation
signal. We choose the context window filters to be non-causal (i.e.,
with the\\ non-zero coefficients defined only for $n\leq0$). This
will enable the accumulation signals to utilize the input samples
from the ``future'' steps of the unrolled LSTM cell without excessively
increasing the number of parameters to be learned, since the input
samples from the ``past'' steps will be already absorbed by the
state signal, $\vec{s}[n]$, due to recurrence. After making these
substitutions, Equations \ref{eq:VanillaLSTMCellAccumulationControlUpdate}
-- \ref{eq:VanillaLSTMCellAccumulationDataUpdate} become:
\begin{align}
\vec{a}_{cu}[n] & =W_{x_{cu}}[n]\ast\vec{x}[n]+W_{s_{cu}}\vec{s}[n-1]+W_{v_{cu}}\vec{v}[n-1]+\vec{b}_{cu}\label{eq:InputConvolutionLSTMCellAccumulationControlUpdate}\\
\vec{a}_{cs}[n] & =W_{x_{cs}}[n]\ast\vec{x}[n]+W_{s_{cs}}\vec{s}[n-1]+W_{v_{cs}}\vec{v}[n-1]+\vec{b}_{cs}\label{eq:InputConvolutionLSTMCellAccumulationControlState}\\
\vec{a}_{cr}[n] & =W_{x_{cr}}[n]\ast\vec{x}[n]+W_{s_{cr}}\vec{s}[n]+W_{v_{cr}}\vec{v}[n-1]+\vec{b}_{cr}\label{eq:InputConvolutionLSTMCellAccumulationControlReadout}\\
\vec{a}_{du}[n] & =W_{x_{du}}[n]\ast\vec{x}[n]+W_{v_{du}}\vec{v}[n-1]+\vec{b}_{du}\label{eq:InputConvolutionLSTMCellAccumulationDataUpdate}
\end{align}
To examine the convolutional terms in more detail, let every context
window filter, $W_{x}[n]$ (with the respective subscript), have $L$
non-zero matrix-valued terms. For example, if $L=4$, then $W_{x}[0]$,
$W_{x}[-1]$, $W_{x}[-2]$, and $W_{x}[-3]$ will be non-zero\footnote{As the members of the expanded LSTM model's parameter set, $\Theta$,
the new matrices and bias vectors are learned during the training
phase.}. 

By the definition of the discrete convolution,
\begin{align}
W_{x}[n]\ast\vec{x}[n] & =\sum_{l=-L+1}^{0}W_{x}[l]\vec{x}[n-l]=\sum_{l=0}^{L-1}W_{x}[-l]\vec{x}[n+l]\label{eq:InputConvolutionLSTMCellConvolutionSum}
\end{align}
In the above example, the result on the left hand side of Equation
\ref{eq:InputConvolutionLSTMCellConvolutionSum} for each step with
the index, $n$, will be influenced by the window spanning $4$ input
samples: $\vec{x}[0]$, $\vec{x}[1]$, $\vec{x}[2]$, and $\vec{x}[3]$.

If we redefine $W_{x}[n]$ to be non-zero for $n\geq0$, then Equation
\ref{eq:InputConvolutionLSTMCellConvolutionSum} simplifies to:
\begin{align}
W_{x}[n]\ast\vec{x}[n] & =\sum_{l=0}^{L-1}W_{x}[l]\vec{x}[n+l]\label{eq:InputConvolutionLSTMCellConvolutionSumIndexSignChange}
\end{align}
The dependence of the left hand side of Equation \ref{eq:InputConvolutionLSTMCellConvolutionSumIndexSignChange}
on the input samples from the ``future'' steps of the unrolled LSTM
cell is readily apparent from the expression for the convolution sum
on the right hand side of Equation \ref{eq:InputConvolutionLSTMCellConvolutionSumIndexSignChange}.

By taking advantage of the available input samples within a small
window surrounding each step of the sequence, the system can learn
to ``discern'' the context in which the given step occurs. The inspiration
for this ``look-ahead'' extension comes from the way people sometimes
find it beneficial to read forward to the end of the sentence in order
to better understand a phrase occurring in the earlier part of the
sentence. It would be interesting to explore the relative trade-offs
between the cost of adding a small number of parameter matrices to
$\Theta$ so as to accommodate the input context windows with the
computational burden of training a bi-directional LSTM network \citep{schuster97bidirectional},
and to compare the performance of the two architectures on several
data sets\footnote{The non-causal input context windows can be readily employed as part
of the bi-directional RNN or LSTM network. The number of additional
parameter matrices to support input convolutions will double compared
to the uni-directional case, because the samples covered by the windows
are situated on the positive side along the direction of the sequence.}.

\subsection{Recurrent Projection Layer\label{subsec:Recurrent-Projection-Layer}}

Another modification of the Vanilla LSTM cell redefines the cell's
value signal to be the product of an additional matrix of weights\footnote{This new matrix of weights, to be learned as part of training, is
known as the ``recurrent projection layer'' \citep{conf/interspeech/SakSB14}.} with the LSTM cell's value signal from Equation \ref{eq:VanillaLSTMCellValue}.
To insert the recurrent projection layer into the Vanilla LSTM cell,
we adjust Equation \ref{eq:VanillaLSTMCellValue} as follows:
\begin{align}
\vec{q}[n] & =\vec{g}_{cr}[n]\odot\vec{r}[n]\label{eq::RecurrentProjectionLayerLSTMCellQuantity}\\
\vec{v}[n] & =W_{q_{dr}}\vec{q}[n]\label{eq:RecurrentProjectionLayerLSTMCellValue}
\end{align}
where $W_{q_{dr}}$ implements the recurrent projection layer\footnote{In our nomenclature, the matrix of weights, $W_{q_{dr}}$, links the
cell's ``qualifier'' signal, $\vec{q}[n]$, with the cell's value
signal, $\vec{v}[n]$, along the readout data path of the cell.}, and the intermediate cell quantity, $\vec{q}[n]$ (which we will
call the cell's ``qualifier'' signal), replaces what used to be
the cell's value signal in the Vanilla LSTM cell. The new value signal
of the cell from Equation \ref{eq:RecurrentProjectionLayerLSTMCellValue}
will now be used for computing the accumulation signals in Equations
\ref{eq:InputConvolutionLSTMCellAccumulationControlUpdate} -- \ref{eq:InputConvolutionLSTMCellAccumulationDataUpdate}.

Let $d_{v}$ denote the dimensionality of the observable value signal
of the cell; then $\vec{v}\in\mathbb{R}^{d_{v}}$ and $W_{q_{dr}}\mathbb{\in R}^{d_{v}\times d_{s}}$.
The degree to which the dimensionality reduction of the cell's value
signal can be tolerated for the given application directly contributes
to speeding up the training phase of the system. By allowing $d_{v}<d_{s}$,
the matrix multipliers of all the terms involving $\vec{v}[n-1]$,
which dominate Equations \ref{eq:InputConvolutionLSTMCellAccumulationControlUpdate}
-- \ref{eq:InputConvolutionLSTMCellAccumulationDataUpdate} (or Equations
\ref{eq:VanillaLSTMCellAccumulationControlUpdate} -- \ref{eq:VanillaLSTMCellAccumulationDataUpdate}
in the absence of the external input context windows), will contain
correspondingly fewer columns. In contrast, in the Vanilla LSTM cell
as covered in Section \ref{sec:The-Vanilla-LSTM-Network-Mechanism-in-Detail}
(i.e., without the recurrent projection layer), $d_{v}$ must equal
$d_{s}$, since $\vec{v}[n]$ is on the same data path as $\vec{s}[n]$,
with no signal transformations along the data path between them. Hence,
the addition of the recurrent projection layer to the Vanilla LSTM
cell brings about the flexibility of trading off the representational
capacity of the cell with the computational cost of learning its parameters
\citep{conf/interspeech/SakSB14}.

\subsection{Controlling External Input with a New Gate\label{subsec:Controling-Exernal-Input-With-A-New-Gate}}

Section \ref{sec:From-RNN-to-Vanilla-LSTM-Network} argues that the
two components of the data update accumulation node, $\vec{a}_{du}[n]$,
in the Vanilla LSTM cell are not treated the same way from the standpoint
of control. While the readout candidate signal is throttled by the
control readout gate, $\vec{g}_{cr}[n]$, the external input is always
injected at the full $100\%$ contribution of its signal strength.
This is not as much of an issue for the control accumulation nodes
($\vec{a}_{cu}[n]$, $\vec{a}_{cs}[n]$, and $\vec{a}_{cr}[n]$),
because they influence the scaling of the data signals, not the relative
mixing of the data signals themselves. However, since the data update
accumulation node, $\vec{a}_{du}[n]$, is directly in the path of
the cell's state signal, the ability to regulate both components of
$\vec{a}_{du}[n]$ can improve the cell's capacity to adapt to the
nuances and fluctuations in the training data. For instance, the control
readout gate, $\vec{g}_{cr}[n]$, can diminish the effect of the cell's
readout signal from the adjacent step in favor of making the external
input signal at the given step more prominent in the make up of the
cell's state signal. Likewise, having the additional flexibility to
fine-tune the external input component of $\vec{a}_{du}[n]$ at the
same granularity as its readout component (i.e., at the level of the
individual step with the index, $n$) would provide a means for training
the LSTM cell to suppress interference due to noisy or spurious input
samples.

As a mechanism for adjusting the contribution of the external input,
Equation \ref{eq:CanonicalRNNRHSPartValueInputUpdate} introduced
the control input gate, $\vec{g}_{cx}[n]$, which we will apply to
the convolution term in Equation \ref{eq:InputConvolutionLSTMCellAccumulationDataUpdate}.
Analogously to the other gates, $\vec{g}_{cx}[n]$ is computed by
taking the warping function for control, given by Equation \ref{eq:VanillaLSTMCellWarpingFunctionControl},
of the accumulation signal for controlling the input, element by element.

From Section \ref{subsec:Vanilla-LSTM-Cell-Model-Parameters} and
Equation \ref{eq:InputConvolutionLSTMCellAccumulationDataUpdate},
the data update accumulation node, $\vec{a}_{du}[n]$, is followed
by the data warping function that produces the update candidate signal,
$\vec{u}[n]$, of the cell at the present step with the index, $n$.
The new control input gate, $\vec{g}_{cx}[n]$, will throttle $W_{x_{du}}[n]\ast\vec{x}[n]$,
the term representing the composite external input signal in Equation
\ref{eq:InputConvolutionLSTMCellAccumulationDataUpdate} for $\vec{a}_{du}[n]$.
Letting $\vec{\xi}_{x_{du}}[n]\equiv W_{x_{du}}[n]\ast\vec{x}[n]$,
where $\vec{\xi}_{x_{du}}[n]$ denotes the composite external input
signal for the data update accumulation node, $\vec{a}_{du}[n]$,
this gating operation will be accomplished through the element-wise
multiplication, $\vec{g}_{cx}[n]\odot\vec{\xi}_{x_{du}}[n]$, the
method used by all the other gates to control the fractional amount
of their designated data signals.

The equations for accommodating the new control input gate, $\vec{g}_{cx}[n]$,
as part of the LSTM cell design as well as the equation for the data
update accumulation node, $\vec{a}_{du}[n]$, modified to take advantage
of this new gate, are provided below:
\begin{align}
\vec{\xi}_{x_{cx}}[n] & =W_{x_{cx}}[n]\ast\vec{x}[n]\label{eq:GateControlInputLSTMCellControlInputTermInputConvolution}\\
\vec{a}_{cx}[n] & =\vec{\xi}_{x_{cx}}[n]+W_{s_{cx}}\vec{s}[n-1]+W_{v_{cx}}\vec{v}[n-1]+\vec{b}_{cx}\label{eq:GateControlInputLSTMCellAccumulationControlInput}\\
\vec{g}_{cx}[n] & =G_{c}(\vec{a}_{cx}[n])\label{eq:GateControlInputLSTMCellGateWarpFunctionControlOfAccumulationControlInput}\\
\vec{\xi}_{x_{du}}[n] & =W_{x_{du}}[n]\ast\vec{x}[n]\label{eq:GateControlInputLSTMCellDataInputTermInputConvolution}\\
\vec{a}_{du}[n] & =\vec{g}_{cx}[n]\odot\vec{\xi}_{x_{du}}[n]+W_{v_{du}}\vec{v}[n-1]+\vec{b}_{du}\label{eq:GateControlInputLSTMCellAccumulationDataUpdate}
\end{align}

where the additional parameters, needed to characterize the accumulation
node, $\vec{a}_{cx}[n]$, of the new control input gate, $\vec{g}_{cx}[n]$,
have the following interpretation:
\begin{itemize}
\item $W_{x_{cx}}[l]\in\mathbb{R}^{d_{s}\times d_{x}}$ -- the matrices
of weights (for $0\leq l\leq L-1$) connecting the input signal, $\vec{x}[n+l]$,
at the step with the index, $n+l$, to the ``control input'' accumulation
node, $\vec{a}_{cx}[n]$, of the cell at the present step with the
index, $n$
\item $W_{s_{cx}}\mathbb{\in R}^{d_{s}\times d_{s}}$ -- the matrix of
weights connecting the state signal, $\vec{s}[n-1]$, at the adjacent
lower-indexed step with the index, $n-1$, to the ``control input''
accumulation node, $\vec{a}_{cx}[n]$, of the cell at the present
step with the index, $n$
\item $W_{v_{cx}}\mathbb{\in R}^{d_{s}\times d_{v}}$ -- the matrix of
weights connecting the externally-accessible (observable) value signal,
$\vec{v}[n-1]$, at the adjacent lower-indexed step with the index,
$n-1$, to the ``control input'' accumulation node, $\vec{a}_{cx}[n]$,
of the cell at the present step with the index, $n$
\item $\vec{b}_{cx}\in\mathbb{R}^{d_{s}}$ -- the vector of bias elements
for the ``control input'' accumulation node, $\vec{a}_{cx}[n]$,
of the cell at the present step with the index, $n$
\item $W_{x_{du}}[l]\in\mathbb{R}^{d_{s}\times d_{x}}$ -- the matrices
of weights (for $0\leq l\leq L-1$) connecting the input signal, $\vec{x}[n+l]$,
at the step with the index, $n+l$, to the ``data update'' accumulation
node, $\vec{a}_{du}[n]$, of the cell at the present step with the
index, $n$
\end{itemize}

\subsection{Augmented LSTM System Equations (\textquotedblleft Forward Pass\textquotedblright )\label{subsec:Augmented-LSTM-System-Equations-(Forward-Pass)}}

We are now ready to assemble the equations for the Augmented LSTM
system by enhancing the Vanilla LSTM network with the new functionality,
presented earlier in this section -- the recurrent projection layer,
the non-causal input context windows, and the input gate:
\begin{align}
\vec{\xi}_{x_{cu}}[n] & =W_{x_{cu}}[n]\ast\vec{x}[n]\label{eq:AugmentedLSTMCellControlUpdateTermInputConvolution}\\
\vec{a}_{cu}[n] & =\vec{\xi}_{x_{cu}}[n]+W_{s_{cu}}\vec{s}[n-1]+W_{v_{cu}}\vec{v}[n-1]+\vec{b}_{cu}\label{eq:AugmentedLSTMCellAccumulationControlUpdate}\\
\vec{g}_{cu}[n] & =G_{c}(\vec{a}_{cu}[n])\label{eq:AugmentedLSTMCellGateControlUpdate}\\
\vec{\xi}_{x_{cs}}[n] & =W_{x_{cs}}[n]\ast\vec{x}[n]\label{eq:AugmentedLSTMCellControlStateTermInputConvolution}\\
\vec{a}_{cs}[n] & =\vec{\xi}_{x_{cs}}[n]+W_{s_{cs}}\vec{s}[n-1]+W_{v_{cs}}\vec{v}[n-1]+\vec{b}_{cs}\label{eq:AugmentedLSTMCellAccumulationControlState}\\
\vec{g}_{cs}[n] & =G_{c}(\vec{a}_{cs}[n])\label{eq:AugmentedLSTMCellGateControlState}\\
\vec{\xi}_{x_{cr}}[n] & =W_{x_{cr}}[n]\ast\vec{x}[n]\label{eq:AugmentedLSTMCellControlReadoutTermInputConvolution}\\
\vec{a}_{cr}[n] & =\vec{\xi}_{x_{cr}}[n]+W_{s_{cr}}\vec{s}[n]+W_{v_{cr}}\vec{v}[n-1]+\vec{b}_{cr}\label{eq:AugmentedLSTMCellAccumulationControlReadout}\\
\vec{g}_{cr}[n] & =G_{c}(\vec{a}_{cr}[n])\label{eq:AugmentedLSTMCellGateControlReadout}\\
\vec{\xi}_{x_{cx}}[n] & =W_{x_{cx}}[n]\ast\vec{x}[n]\label{eq:AugmentedLSTMCellControlInputTermInputConvolution}\\
\vec{a}_{cx}[n] & =\vec{\xi}_{x_{cx}}[n]+W_{s_{cx}}\vec{s}[n-1]+W_{v_{cx}}\vec{v}[n-1]+\vec{b}_{cx}\label{eq:AugmentedLSTMCellAccumulationControlInput}\\
\vec{g}_{cx}[n] & =G_{c}(\vec{a}_{cx}[n])\label{eq:AugmentedLSTMCellGateControlInput}\\
\vec{\xi}_{x_{du}}[n] & =W_{x_{du}}[n]\ast\vec{x}[n]\label{eq:AugmentedLSTMCellDataUpdateTermInputConvolution}\\
\vec{a}_{du}[n] & =\vec{g}_{cx}[n]\odot\vec{\xi}_{x_{du}}[n]+W_{v_{du}}\vec{v}[n-1]+\vec{b}_{du}\label{eq:AugmentedLSTMCellAccumulationDataUpdate}\\
\vec{u}[n] & =G_{d}(\vec{a}_{du}[n])\label{eq:AugmentedLSTMCellUpdate}\\
\vec{s}[n] & =\vec{g}_{cs}[n]\odot\vec{s}[n-1]+\vec{g}_{cu}[n]\odot\vec{u}[n]\label{eq:AugmentedLSTMCellState}\\
\vec{r}[n] & =G_{d}(\vec{s}[n])\label{eq:AugmentedLSTMCellReadout}\\
\vec{q}[n] & =\vec{g}_{cr}[n]\odot\vec{r}[n]\label{eq:AugmentedLSTMCellQuantity}\\
\vec{v}[n] & =W_{q_{dr}}\vec{q}[n]\label{eq:AugmentedLSTMCellValue}
\end{align}
with the dimensions of the parameters adjusted to take into account
the recurrent projection layer:\\ $W_{x_{cu}}[l]\in\mathbb{R}^{d_{s}\times d_{x}}$,
$W_{s_{cu}}\in\mathbb{R}^{d_{s}\times d_{s}}$, $W_{v_{cu}}\in\mathbb{R}^{d_{s}\times d_{v}}$,
$\vec{b}_{cu}\in\mathbb{R}^{d_{s}}$,\\ $W_{x_{cs}}[l]\in\mathbb{R}^{d_{s}\times d_{x}}$,
$W_{s_{cs}}\in\mathbb{R}^{d_{s}\times d_{s}}$, $W_{v_{cs}}\in\mathbb{R}^{d_{s}\times d_{v}}$,
$\vec{b}_{cs}\in\mathbb{R}^{d_{s}}$,\\ $W_{x_{cr}}[l]\in\mathbb{R}^{d_{s}\times d_{x}}$,
$W_{s_{cr}}\in\mathbb{R}^{d_{s}\times d_{s}}$, $W_{v_{cr}}\in\mathbb{R}^{d_{s}\times d_{v}}$,
$\vec{b}_{cr}\in\mathbb{R}^{d_{s}}$,\\ $W_{x_{cx}}[l]\in\mathbb{R}^{d_{s}\times d_{x}}$,
$W_{s_{cx}}\in\mathbb{R}^{d_{s}\times d_{s}}$, $W_{v_{cx}}\in\mathbb{R}^{d_{s}\times d_{v}}$,
$\vec{b}_{cx}\in\mathbb{R}^{d_{s}}$,\\ $W_{x_{du}}[l]\in\mathbb{R}^{d_{s}\times d_{x}}$,
$W_{v_{du}}\in\mathbb{R}^{d_{s}\times d_{v}}$, $\vec{b}_{du}\in\mathbb{R}^{d_{s}}$,
and $W_{q_{dr}}\in\mathbb{R}^{d_{v}\times d_{s}}$, where $0\leq l\leq L-1$
and $d_{v}\leq d_{s}$.

The schematic diagram of the Augmented LSTM cell appears in Figure
\ref{fig:AugmentedLSTMCellSchematicsAndInputMatrixFilters}.

Combining all the matrix and vector parameters of the Augmented LSTM
cell, described by Equations \ref{eq:AugmentedLSTMCellControlUpdateTermInputConvolution}
-- \ref{eq:AugmentedLSTMCellValue}, into:
\begin{align}
\begin{gathered}\Theta\equiv\end{gathered}
 & \begin{Bmatrix}W_{x_{cu}}[l], & W_{s_{cu}}, & W_{v_{cu}}, & \vec{b}_{cu},\\
W_{x_{cs}}[l], & W_{s_{cs}}, & W_{v_{cs}}, & \vec{b}_{cs},\\
W_{x_{cr}}[l], & W_{s_{cr}}, & W_{v_{cr}}, & \vec{b}_{cr},\\
W_{x_{cx}}[l], & W_{s_{cx}}, & W_{v_{cx}}, & \vec{b}_{cx},\\
W_{x_{du}}[l], &  & W_{v_{du}}, & \vec{b}_{du},\\
 &  & W_{q_{dr}}
\end{Bmatrix}\label{eq:AugmentedLSTMCellModelParametersThematicallyArranged}
\end{align}
completes the definition of the inference phase (forward pass) of
the Augmented LSTM cell.

\begin{figure}[tph]
\subfloat[{Augmented LSTM network cell. The bias parameters, $\vec{b},$ have
been omitted from the figure for brevity. They can be assumed to be
included without \newline the loss of generality by appending an
additional element, always set to $1$, to the input signal vector,
$\vec{x}[n]$, and increasing the row dimensions of all \newline
corresponding weight matrices by $1$. To minimize clutter in the
diagram, the context windows are shown separately in Figure \ref{fig:AugmentedLSTMCellInputMatrixFilters}
below).\label{fig:AugmentedLSTMCellSchematics}}]%
{\includegraphics[scale=0.39]{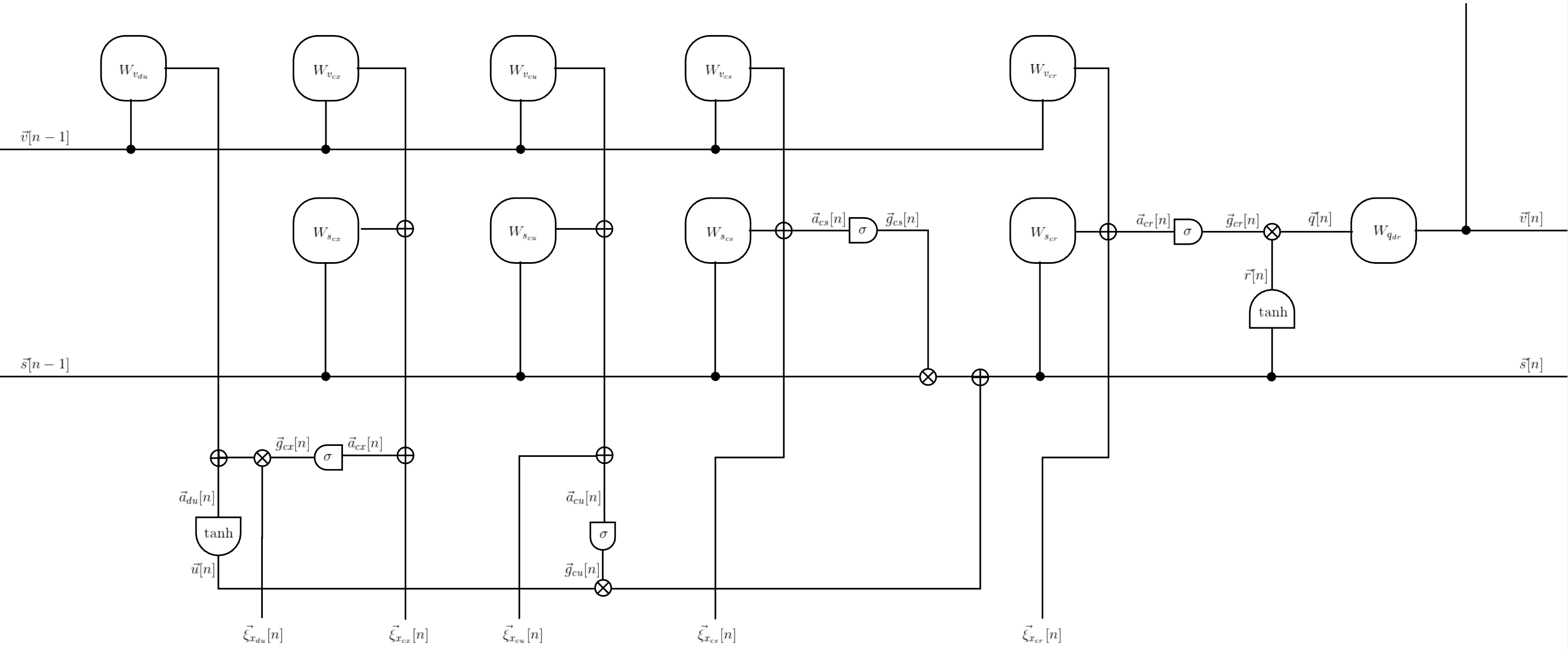}

}\vfill{}
\subfloat[{The non-causal context windows, implemented as the convolutions (along
the dimension of the sequence index, $n$) of the external input,
$\vec{x}[n]$, with linear filters, whose coefficients are matrix-valued.
The operations in this figure compute the quantities of the form ${\vec{\xi}_{x}[n]=W_{x}[n]\ast\vec{x}[n]}$,
corresponding to their respective subscripts.\label{fig:AugmentedLSTMCellInputMatrixFilters}}]%
{\includegraphics[scale=0.5]{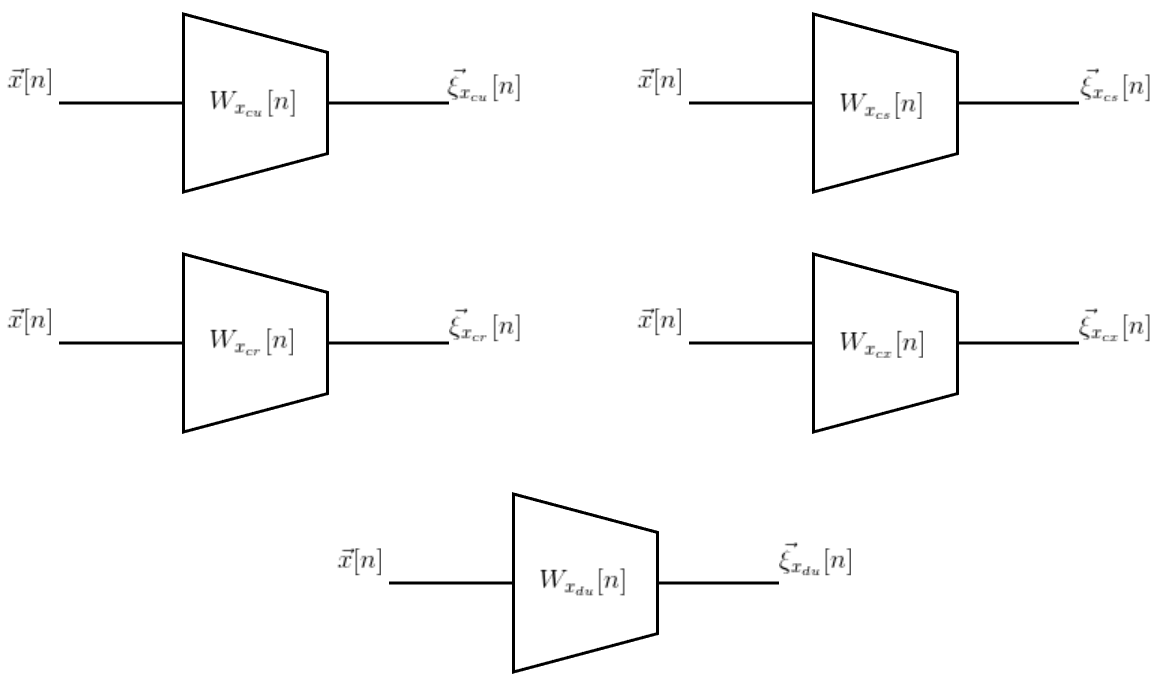}

}\caption{Augmented LSTM network cell system schematics.\label{fig:AugmentedLSTMCellSchematicsAndInputMatrixFilters}}
\end{figure}

\subsection{Augmented LSTM System Derivatives: Backward Pass\label{subsec:Augmented-LSTM-System-Derivatives-(Backward-Pass)}}

The equations for training the Augmented LSTM cell, unrolled for $K$
steps, using BPTT, are obtained by adopting the same method as was
used for the Vanilla LSTM cell in Section \ref{subsec:Vanilla-LSTM-System-Derivatives-(Backward-Pass)}.
We rely on the same border and intermediate total partial derivatives,
appearing in Equations \ref{eq:VanillaLSTMCellDelEDelValueStepTotalDefinition}
-- \ref{eq:VanillaLSTMCellDelEDelStateStepTotalDefinition}, with
the addition of the total partial derivative of the objective function,
$E$, with respect to the qualifier signal, $\vec{q}[n]$:
\begin{align}
\vec{\beta}[n] & \equiv\vec{\nabla}_{\vec{q}[n]}E=\frac{\partial E}{\partial\vec{q}[n]}\label{eq:AugmentedLSTMCellDelEDelQuantityStepTotalDefinition}
\end{align}
which is an intermediate total partial derivative that reflects the
insertion of the projection layer into the cell's data path. Applying
the chain rule to the Augmented LSTM cell, defined by Equations \ref{eq:AugmentedLSTMCellControlUpdateTermInputConvolution}
-- \ref{eq:AugmentedLSTMCellValue}, and judiciously utilizing all
of these border and intermediate total partial derivatives, yields
the backward pass equations for the Augmented LSTM cell:\vspace{-3mm}

\begin{align}
\vec{\chi}[n] & =\left(\frac{\partial\vec{y}[n]}{\partial\vec{v}[n]}\right)^{T}\left(\frac{\partial E}{\partial\vec{y}[n]}\right)+\vec{f_{\chi}}[n+1]\label{eq:AugmentLSTMCellDelEDelValueStepTotal}\\
\vec{\beta}[n] & =\left(\frac{\partial\vec{v}[n]}{\partial\vec{q}[n]}\right)^{T}\left(\frac{\partial E}{\partial\vec{v}[n]}\right)=W_{q_{dr}}^{T}\vec{\chi}[n]\label{eq:AugmentedLSTMCellDelEDelQuantityStepTotal}\\
\vec{\rho}[n] & =\left(\frac{\partial\vec{q}[n]}{\partial\vec{r}[n]}\right)^{T}\left(\frac{\partial E}{\partial\vec{q}[n]}\right)=\vec{(\nabla}_{\vec{q}[n]}E)\odot\vec{g}_{cr}[n]=\vec{\beta}[n]\odot\vec{g}_{cr}[n]=W_{q_{dr}}^{T}\vec{\chi}[n]\odot\vec{g}_{cr}[n]\label{eq:AugmentedLSTMCellDelEDelReadoutStepTotal}\\
\vec{\gamma}_{cr}[n] & =\frac{\partial E}{\partial\vec{q}[n]}\frac{\partial\vec{q}[n]}{\partial\vec{g}_{cr}[n]}=\vec{(\nabla}_{\vec{q}[n]}E)\odot\vec{r}[n]=\vec{\beta}[n]\odot\vec{r}[n]=W_{q_{dr}}^{T}\vec{\chi}[n]\odot\vec{r}[n]\label{eq:AugmentedLSTMCellDelEDelGateControlReadoutStepTotal}\\
\vec{\alpha}_{cr}[n] & =\vec{\gamma}_{cr}[n]\odot\frac{\partial\vec{g}_{cr}[n]}{\partial\vec{a}_{cr}[n]}=\vec{\gamma}_{cr}[n]\odot\frac{dG_{c}(z)}{dz}\rfloor_{z=\vec{a}_{cr}[n]}=W_{q_{dr}}^{T}\vec{\chi}[n]\odot\vec{r}[n]\odot\frac{dG_{c}(z)}{dz}\rfloor_{z=\vec{a}_{cr}[n]}\label{eq:AugmentedLSTMCellDelEDelAccumulationControlReadoutStepTotal}\\
\vec{\psi}[n] & =\vec{\rho}[n]\odot\frac{\partial\vec{r}[n]}{\partial\vec{s}[n]}+\frac{\partial\vec{a}_{cr}[n]}{\partial\vec{s}[n]}\vec{\alpha}_{cr}[n]+\vec{f_{\psi}}[n+1]\nonumber \\
 & =\vec{\rho}[n]\odot\frac{dG_{d}(\vec{z})}{d\vec{z}}\rfloor_{z=\vec{s}[n]}+W_{s_{cr}}\vec{\alpha}_{cr}[n]+\vec{f_{\psi}}[n+1]\nonumber \\
 & =W_{q_{dr}}^{T}\vec{\chi}[n]\odot\vec{g}_{cr}[n]\odot\frac{dG_{d}(\vec{z})}{d\vec{z}}\rfloor_{z=\vec{s}[n]}+W_{s_{cr}}\vec{\alpha}_{cr}[n]+\vec{f_{\psi}}[n+1]\label{eq:AugmentedLSTMCellDelEDelStateStepTotal}\\
\vec{\alpha}_{cs}[n] & =\vec{\psi}[n]\odot\frac{\partial\vec{s}[n]}{\partial\vec{g}_{cs}[n]}\odot\frac{\partial\vec{g}_{cs}[n]}{\partial\vec{a}_{cs}[n]}=\vec{\psi}[n]\odot\vec{s}[n-1]\odot\frac{dG_{c}(\vec{z})}{d\vec{z}}\rfloor_{z=\vec{a}_{cs}[n]}\label{eq:AugmentedLSTMCellDelEDelAccumulationControlStateStepTotal}\\
\vec{\alpha}_{cu}[n] & =\vec{\psi}[n]\odot\frac{\partial\vec{s}[n]}{\partial\vec{g}_{cu}[n]}\odot\frac{\partial\vec{g}_{cu}[n]}{\partial\vec{a}_{cu}[n]}=\vec{\psi}[n]\odot\vec{u}[n]\odot\frac{dG_{c}(\vec{z})}{d\vec{z}}\rfloor_{z=\vec{a}_{cu}[n]}\label{eq:AugmentedLSTMCellDelEDelAccumulationControlUpdateStepTotal}\\
\vec{\alpha}_{du}[n] & =\vec{\psi}[n]\odot\frac{\partial\vec{s}[n]}{\partial\vec{u}[n]}\odot\frac{dG_{d}(\vec{z})}{d\vec{z}}\rfloor_{z=\vec{a}_{du}[n]}=\vec{\psi}[n]\odot\vec{g}_{cu}[n]\odot\frac{dG_{d}(\vec{z})}{d\vec{z}}\rfloor_{z=\vec{a}_{du}[n]}\label{eq:AugmentedLSTMCellDelEDelAccumulationDataUpdateStepTotal}\\
\vec{\gamma}_{cx}[n] & =\vec{\alpha}_{du}[n]\frac{\partial\vec{u}[n]}{\partial\vec{g}_{cx}[n]}=\vec{\alpha}_{du}[n]\odot\vec{\xi}_{x_{du}}[n]\label{eq:AugmentedLSTMCellDelEDelGateControlInputStepTotal}\\
\vec{\alpha}_{cx}[n] & =\vec{\gamma}_{cx}[n]\odot\frac{\partial\vec{g}_{cx}[n]}{\partial\vec{a}_{cx}[n]}=\vec{\gamma}_{cx}[n]\odot\frac{dG_{c}(z)}{dz}\rfloor_{z=\vec{a}_{cx}[n]}=\vec{\alpha}_{du}[n]\odot\vec{\xi}_{x_{du}}[n]\odot\frac{dG_{c}(z)}{dz}\rfloor_{z=\vec{a}_{cx}[n]}\nonumber \\
 & =\vec{\psi}[n]\odot\vec{g}_{cu}[n]\odot\vec{\xi}_{x_{du}}[n]\odot\frac{dG_{d}(\vec{z})}{d\vec{z}}\rfloor_{z=\vec{a}_{du}[n]}\odot\frac{dG_{c}(z)}{dz}\rfloor_{z=\vec{a}_{cx}[n]}\label{eq:AugmentedLSTMCellDelEDelAccumulationControlInputStepTotal}
\end{align}
where:
\begin{align}
\vec{f_{\chi}}[n+1] & =W_{v_{cu}}^{T}\vec{\alpha}_{cu}[n+1]+W_{v_{cs}}^{T}\vec{\alpha}_{cs}[n+1]+W_{v_{cr}}^{T}\vec{\alpha}_{cr}[n+1]+W_{v_{cx}}^{T}\vec{\alpha}_{cx}[n+1]+W_{v_{du}}^{T}\vec{\alpha}_{du}[n+1]\label{eq:AugmentedLSTMCellDelEDelValueNextStep}\\
\vec{f_{\psi}}[n+1] & =W_{s_{cu}}^{T}\vec{\alpha}_{cu}[n+1]+W_{s_{cs}}^{T}\vec{\alpha}_{cs}[n+1]+W_{s_{cx}}^{T}\vec{\alpha}_{cx}[n+1]+\vec{g}_{cs}[n+1]\odot\vec{\psi}[n+1]\label{eq:AugmentedLSTMCellDelEDelStateNextStep}
\end{align}
Referring once again to the definition of the Augmented LSTM cell
in Equations \ref{eq:AugmentedLSTMCellControlUpdateTermInputConvolution}
-- \ref{eq:AugmentedLSTMCellValue}, we obtain:
\begin{align}
\frac{\partial E}{\partial W_{x_{cu}}[l]}[n] & =\vec{\alpha}_{cu}[n]\vec{x}^{T}[n+l]\label{eq:AugmentedLSTMCellDelEDelInputCoefficientMatrixControlUpdateStepTotal}\\
\frac{\partial E}{\partial W_{s_{cu}}}[n] & =\vec{\alpha}_{cu}[n]\vec{s}^{T}[n-1]\label{eq:AugmentedLSTMCellDelEDelStateMatrixControlUpdateStepTotal}\\
\frac{\partial E}{\partial W_{v_{cu}}}[n] & =\vec{\alpha}_{cu}[n]\vec{v}^{T}[n-1]\label{eq:AugmentedLSTMCellDelEDelValueMatrixControlUpdateStepTotal}\\
\frac{\partial E}{\partial\vec{b}_{cu}}[n] & =\vec{\alpha}_{cu}[n]\label{eq:AugmentedLSTMCellDelEDelBiasVectorControlUpdateStepTotal}\\
\frac{\partial E}{\partial W_{x_{cs}}[l]}[n] & =\vec{\alpha}_{cs}[n]\vec{x}^{T}[n+l]\label{eq:AugmentedLSTMCellDelEDelInputCoefficientMatrixControlStateStepTotal}\\
\frac{\partial E}{\partial W_{s_{cs}}}[n] & =\vec{\alpha}_{cs}[n]\vec{s}^{T}[n-1]\label{eq:AugmentedLSTMCellDelEDelStateMatrixControlStateStepTotal}\\
\frac{\partial E}{\partial W_{v_{cs}}}[n] & =\vec{\alpha}_{cs}[n]\vec{v}^{T}[n-1]\label{eq:AugmentedLSTMCellDelEDelValueMatrixControlStateStepTotal}\\
\frac{\partial E}{\partial\vec{b}_{cs}}[n] & =\vec{\alpha}_{cs}[n]\label{eq:AugmentedLSTMCellDelEDelBiasVectorControlStateStepTotal}\\
\frac{\partial E}{\partial W_{x_{cr}}[l]}[n] & =\vec{\alpha}_{cr}[n]\vec{x}^{T}[n+l]\label{eq:AugmentedLSTMCellDelEDelInputCoefficientMatrixControlReadoutStepTotal}\\
\frac{\partial E}{\partial W_{s_{cr}}}[n] & =\vec{\alpha}_{cr}[n]\vec{s}^{T}[n]\label{eq:AugmentedLSTMCellDelEDelStateMatrixControlReadoutStepTotal}\\
\frac{\partial E}{\partial W_{v_{cr}}}[n] & =\vec{\alpha}_{cr}[n]\vec{v}^{T}[n-1]\label{eq:AugmentedLSTMCellDelEDelValueMatrixControlReadoutStepTotal}\\
\frac{\partial E}{\partial\vec{b}_{cr}}[n] & =\vec{\alpha}_{cr}[n]\label{eq:AugmentedLSTMCellDelEDelBiasVectorControlReadoutStepTotal}\\
\frac{\partial E}{\partial W_{x_{cx}}[l]}[n] & =\vec{\alpha}_{cx}[n]\vec{x}^{T}[n+l]\label{eq:AugmentedLSTMCellDelEDelInputCoefficientMatrixControlInputStepTotal}\\
\frac{\partial E}{\partial W_{s_{cx}}}[n] & =\vec{\alpha}_{cx}[n]\vec{s}^{T}[n-1]\label{eq:AugmentedLSTMCellDelEDelStateMatrixControlInputStepTotal}\\
\frac{\partial E}{\partial W_{v_{cx}}}[n] & =\vec{\alpha}_{cx}[n]\vec{v}^{T}[n-1]\label{eq:AugmentedLSTMCellDelEDelValueMatrixControlInputStepTotal}\\
\frac{\partial E}{\partial\vec{b}_{cx}}[n] & =\vec{\alpha}_{cx}[n]\label{eq:AugmentedLSTMCellDelEDelBiasVectorControlInputStepTotal}\\
\frac{\partial E}{\partial W_{x_{du}}[l]}[n] & =\vec{\alpha}_{du}[n]\vec{x}^{T}[n+l]\label{eq:AugmentedLSTMCellDelEDelInputCoefficientMatrixDataUpdateStepTotal}\\
\frac{\partial E}{\partial W_{v_{du}}}[n] & =\vec{\alpha}_{du}[n]\vec{v}^{T}[n-1]\label{eq:AugmentedLSTMCellDelEDelValueMatrixDataUpdateStepTotal}\\
\frac{\partial E}{\partial\vec{b}_{du}}[n] & =\vec{\alpha}_{du}[n]\label{eq:AugmentedLSTMCellDelEDelBiasVectorDataUpdateStepTotal}\\
\frac{\partial E}{\partial W_{q_{dr}}}[n] & =\vec{\chi}[n]\vec{q}^{T}[n]\label{eq:AugmentedLSTMCellDelEDelQuantityMatrixDataReadoutStepTotal}
\end{align}
where $0\leq l\leq L-1$ and $0\leq n\leq K-1$. Arranged to parallel
the structure of $\Theta$, defined in Equation \ref{eq:AugmentedLSTMCellModelParametersThematicallyArranged},
the total partial derivative of the objective function, $E$, with
respect to the model parameters, $\Theta$, at the step with the index,
$n$, is:
\begin{align}
\begin{gathered}\frac{\partial E}{\partial\Theta}[n]=\end{gathered}
 & \begin{Bmatrix}\frac{\partial E}{\partial W_{x_{cu}}[l]}[n], & \frac{\partial E}{\partial W_{s_{cu}}}[n], & \frac{\partial E}{\partial W_{v_{cu}}}[n], & \frac{\partial E}{\partial\vec{b}_{cu}}[n],\\
\frac{\partial E}{\partial W_{x_{cs}}[l]}[n], & \frac{\partial E}{\partial W_{s_{cs}}}[n], & \frac{\partial E}{\partial W_{v_{cs}}}[n], & \frac{\partial E}{\partial\vec{b}_{cs}}[n],\\
\frac{\partial E}{\partial W_{x_{cr}}[l]}[n], & \frac{\partial E}{\partial W_{s_{cr}}}[n], & \frac{\partial E}{\partial W_{v_{cr}}}[n], & \frac{\partial E}{\partial\vec{b}_{cr}}[n],\\
\frac{\partial E}{\partial W_{x_{cx}}[l]}[n] & \frac{\partial E}{\partial W_{s_{cx}}}[n] & \frac{\partial E}{\partial W_{v_{cx}}}[n] & \frac{\partial E}{\partial\vec{b}_{cx}}[n]\\
\frac{\partial E}{\partial W_{x_{du}}[l]}[n], &  & \frac{\partial E}{\partial W_{v_{du}}}[n], & \frac{\partial E}{\partial\vec{b}_{du}}[n]\\
 &  & \frac{\partial E}{\partial W_{q_{dr}}}[n]
\end{Bmatrix}.\label{eq:AugmentedLSTMCellDelEDelModelParametersStepTotalThematicallyArranged}
\end{align}
Finally, $\frac{dE}{d\Theta}$, the total derivative of the objective
function, $E$, with respect to the model parameters, $\Theta$, for
the entire unrolled sequence is computed by Equation \ref{eq:VanillaLSTMCellDEDModelParametersSegmentTotal}.
Aggregated over a batch of segments, $\frac{dE}{d\Theta}$ is plugged
in to the Gradient Descent training algorithm for learning the model
parameters, $\Theta$.

\section{Conclusions and Future Work\label{sec:Conclusions-and-Future-Work}}

In this paper, we presented the fundamentals of the RNN and the LSTM
network using a principled approach. Starting with the differential
equations encountered in many branches of science and engineering,
we showed that the canonical formulation of the RNN can be obtained
by sampling delay differential equations used to model processes in
physics, life sciences, and neural networks. We proceeded to obtain
the standard RNN formulation by appropriately choosing the parameters
of the canonical RNN equations and applying stability considerations.
We then formally explained RNN unrolling within the framework of approximating
an IIR system by an FIR model and proved the sufficient conditions
for its applicability to learning sequences. Next, we presented the
training of the standard RNN using Back Propagation Through Time,
segueing to the review of the vanishing and exploding gradients, the
well-known numerical difficulties, associated with training the standard
RNN by Gradient Descent. We subsequently addressed the shortcomings
of the standard RNN by morphing the canonical RNN system into the
more robust LSTM network through a series of extensions and embellishments.
In addition to the logical construction of the Vanilla LSTM network
from the canonical RNN, we included a self-contained overview of the
Vanilla LSTM network, complete with the specifications of all principal
entities as well as clear, descriptive, yet concise, presentations
of the forward pass and, importantly, the backward pass, without skipping
any steps. The main contribution up to this point has been our unique
pedagogical approach for analyzing the RNN and Vanilla LSTM systems
from the Signal Processing perspective, a formal derivation of the
RNN unrolling procedure, and a thorough treatment using a descriptive
and meaningful notation, aimed at demystifying the underlying concepts.
Moreover, as an unexpected benefit of our analysis, we identified
two novel extensions to the Vanilla LSTM network: the convolutional
non-causal input context windows and the external input gate. We then
augmented the equations for the LSTM cell with these extensions (along
with the recurrent projection layer, previously introduced by another
researcher team). Candidate recommendations for future work include
implementing the Augmented LSTM system within a high-performance computing
environment and benchmarking its efficacy in multiple practical scenarios.
The use of the Augmented LSTM could potentially benefit the language
representation subsystems used in question answering and in automating
customer support. For these applications, it will be important to
evaluate the performance impact, attributed to the non-causal input
context windows, as compared to the different baselines, such the
Vanilla LSTM network, the bi-directional LSTM network, the Transformer,
and other state-of-the-art models. Also of particular relevance to
this use case will be to measure the effectiveness of the external
input gate in helping to eliminate the non-essential content from
the input sequences. Finally, adopting the Augmented LSTM network
to other practical domains and publishing the results is respectfully
encouraged.

\section*{Acknowledgments\label{sec:Acknowledgments}}

The author thanks Eugene Mandel for long-time collaboration and engaging
discussions, which were instrumental in clarifying the key concepts,
as well as for his encouragement and support. Tremendous gratitude
is expressed to Tom Minka for providing helpful critique and valuable
comments and to Mike Singer for reviewing the proposition and correcting
errors in the proof. Big thanks go to Varuna Jayasiri, Eduardo G.
Ponferrada, Flavia Sparacino, and Janet Cahn for proofreading the
manuscript.\pagebreak{}

\vphantom{}

\bibliographystyle{plain}
\nocite{*}
\bibliography{12_Users_alexsherstinsky_Documents_RNNAndLSTMFu___stems2020_Revision_arXiv07312023_references}

\end{document}